\newcommand{\cmark}{{\color{Green}\ding{51}}}\newcommand{\xmark}{{\color{Red}\ding{55}}}
\theoremstyle{plain}
\newtheorem{theorem}{Theorem}
\newtheorem{proposition}[theorem]{Proposition}
\newtheorem{lemma}[theorem]{Lemma}
\newtheorem{definition}{Definition}
\theoremstyle{definition}
\newcommand{\contextName}{Dynamic Network VFL\xspace} 
\newcommand{\contextAbv}{DN-VFL\xspace} \newcommand{\risk}{Dynamic Risk\xspace} \newcommand{\method}{\textbf{M}ultiple \textbf{A}ggregation with \textbf{G}ossip Rounds and \textbf{S}imulated Faults\xspace}
\newcommand{\methodName}{\method}
\newcommand{\methodSf}{MAGS\xspace}
\newcommand{\methodAbv}{\methodSf}
 \newcommand{\average}{Select Active Client\xspace}  \newcommand{\averageAbv}{active\xspace}
\newcommand{\E}{\mathbb{E}}
\newcommand{\given}{\,\big|\,}
\newcommand{\asize}{b} 
\newcommand{\dataset}{\mathcal{D}}
\newcommand{\xrv}{X}
\newcommand{\yrv}{Y}
\newcommand{\xinst}{\bm{x}}
\newcommand{\yinst}{y} \renewcommand{\paragraph}[1]{\noindent\textbf{#1}\quad}
\title{Robust Collaborative Inference with Vertically Split \\ Data Over Dynamic Device Environments}
\author{
  Surojit Ganguli, Zeyu Zhou, Christopher G. Brinton, David I. Inouye \\
Purdue University \\
  West Lafeyette\\
  \texttt{\{sganguli,zhou1059,cgb,dinouye\}@purdue.edu} \\
}
\begin{document}
\doparttoc \faketableofcontents 

\part{} 

\renewcommand{\paragraph}[1]{\textbf{#1.} }
\setlength{\abovedisplayskip}{0pt}
\setlength{\belowdisplayskip}{0pt}

\maketitle

\begin{abstract}
When each edge device of a network only perceives a local part of the environment, collaborative inference across multiple devices is often needed to predict global properties of the environment.
In safety-critical applications, collaborative inference must be robust to significant network failures caused by environmental disruptions or extreme weather.
Existing collaborative learning approaches, such as privacy-focused Vertical Federated Learning (VFL), typically assume a centralized setup or that one device never fails.
However, these assumptions make prior approaches susceptible to significant network failures.
To address this problem, we first formalize the problem of robust collaborative inference over a dynamic network of devices that could experience significant network faults.
Then, we develop a minimalistic yet impactful method called \method (\methodSf) that synthesizes simulated faults via dropout, replication, and gossiping to significantly improve robustness over baselines. 
We also theoretically analyze our proposed approach to explain why each component enhances robustness.
Extensive empirical results validate that \methodSf is robust across a range of fault rates---including extreme fault rates.
\end{abstract}

\section{Introduction}\label{sec:IntoAndMot}
Intelligent device networks in safety-critical applications (e.g., search and rescue) must function despite catastrophic faults (e.g., when 50\% devices failure). 
Achieving this level of resilience is challenging because collaborative cross-device operations on IoT or edge devices present unique obstacles not encountered in cross-silo setups \citep{yuan2023decentralized}, including limited power resources, network unreliability, and the absence of a centralized server. 
Moreover, when devices must collaborate to predict a global feature of the environment, these challenges become especially acute.
For example, deploying a network of sensors in harsh environments (e.g., deep sea, underground mines, or remote rural areas) may lead to device failure due to power constraints or extreme weather conditions. Therefore, in this work, we seek to answer the following question: \textbf{Can we develop a cross-device collaborative learning (CL) method that maintains strong performance at test time even under near-catastrophic faults in the decentralized setting?}

When working with vertically split data, Vertical Federated Learning (VFL) \citep{liu2024vertical}, a privacy-focused variant of collaborative learning, emerges as a potential solution for cross-device collaboration.
In VFL, clients share the same set of samples but have different features. 
In our environmental monitoring example, the samples correspond to unique timestamps, and the features correspond to sensor data from each device—each providing a partial view of the global environment. 
Much prior work in VFL focuses on the privacy aspect of collaborative learning via homomorphic encryption and the like \citep{li2023fedvs,tran2024differentially}, but fault-tolerance in the cross-device setting, the focus of this paper, has been given less attention.
Some previous research in VFL has explored aspects of fault tolerance and decentralized learning, primarily focusing on the training phase. 
For instance, studies have addressed asynchronous communication to handle device failures during training \citep{chen2020vafl,zhang2021secure,li2020efficient,li2023fedvs}. 
An exception is the work by \citet{sun2023robust}, who proposed Party-wise Dropout (PD) to mitigate inference-time faults caused by passive parties (clients) dropping off unexpectedly, but they assumed that the active party (server) remains fault-free. 
Other works have focused on communication efficiency or handling missing features in decentralized VFL \citep{valdeira2023multi, valdeira2024vertical}. 
However, to the best of our knowledge, no prior work in the cross-device setting simultaneously addresses  decentralized learning and arbitrary faults---including the active party or server---during inference. 
This gap, as summarized in \Cref{tab:expt}, motivates our research.

\begin{table}[!ht]
\caption{Our \methodAbv method considers the setting of vertically split data across devices, where faults can occur in both clients and the active party or server. 
This is unlike the  existing literature in decentralized or fault-tolerant VFL.}
\label{tab:expt}
\centering
\setlength{\tabcolsep}{2pt}
{\fontsize{6.7}{6.7}\selectfont
\begin{tabular}{lccccc}
\toprule
& \textbf{Context} &  \multicolumn{2}{c}{\textbf{Faults During}} & \multicolumn{2}{c}{\textbf{Fault Types}} \\
\cmidrule(lr){2-2} \cmidrule(lr){3-4} \cmidrule(lr){5-6}
& & Training & Inference & Client & {Active Party}  \\
\midrule
STCD\citep{valdeira2023multi} & Cross-silo & \xmark & \xmark & \xmark & \xmark \\ 
VAFL\citep{chen2020vafl} & Cross-silo & \cmark & \xmark & \cmark & \xmark \\
Straggler VFL\citep{li2023fedvs} & Cross-silo & \cmark & \xmark & \cmark & \xmark \\
PD \citep{sun2023robust} & Cross-silo & \cmark\footnotemark[1] & \cmark & \cmark & \xmark \\\hdashline
MAGS (ours) & Cross-device & \cmark & \cmark & \cmark & \cmark \\
\bottomrule
\end{tabular}
}
\vspace{-1em}
\end{table}
\footnotetext[1]{Even though \citet{sun2023robust} did not explicitly consider client faults during training, the method from \citet{sun2023robust} could handle training faults by treating them like Party-wise Dropout(PD) as discussed in \Cref{sec:dropout}.}

To address these challenges holistically, we first formalize this problem setup and then propose a minimalistic yet impactful solution, \methodName (\methodAbv).
A comparison of context presented in this work with VFL inspired collaborative learning techniques  are captured in \cref{Fig:Diffeereent Paradigm Comparison}. 
\methodAbv significantly improves robustness by integrating three interconnected methods that build upon and complement each other.
First, during training, we simulate high fault rates via dropout so that the model can be robust to more missing values at test time.
Second, we replicate the data aggregator to prevent catastrophic failure in case the active party (or server) goes down during test time.
Third, we introduce gossip rounds to implicitly ensemble the predictions from multiple data aggregators, reducing the prediction variance across devices.
Finally, we evaluate the effectiveness of \methodAbv by conducting experiments using five datasets (StarCraftMNIST \citep{kulinski2023starcraftimage} in the main paper and MNIST, CIFAR10, CIFAR100, Tiny ImageNet in the appendix) and different network configurations.
The results establish that \methodAbv is significantly more robust than prior methods, often improving performance more than 20\% over prior methods at high fault rates.
We summarize our contributions as follows:
\begin{itemize}[leftmargin=*]
    \item We formalize the problem of decentralized collaborative learning under dynamic network conditions, called \contextName (\contextAbv), and define \risk, which measures performance under (extreme) dynamic network conditions. 
\item We develop and analyze MAGS, that combines fault simulation, replication, and gossiping to enable strong fault tolerance during inference for \contextAbv.
\item We demonstrate that MAGS is significantly more robust to dynamic network faults than prior methods across multiple datasets, often improving performance more than 20\% compared to prior methods.
\vspace{-8pt}
\end{itemize}

\begin{figure*}[!ht]
    \centering
    \begin{subfigure}[t]{0.46\linewidth}
        \centering
        \includegraphics[clip, trim=0cm 4.5cm 13cm 4cm,width=\linewidth]{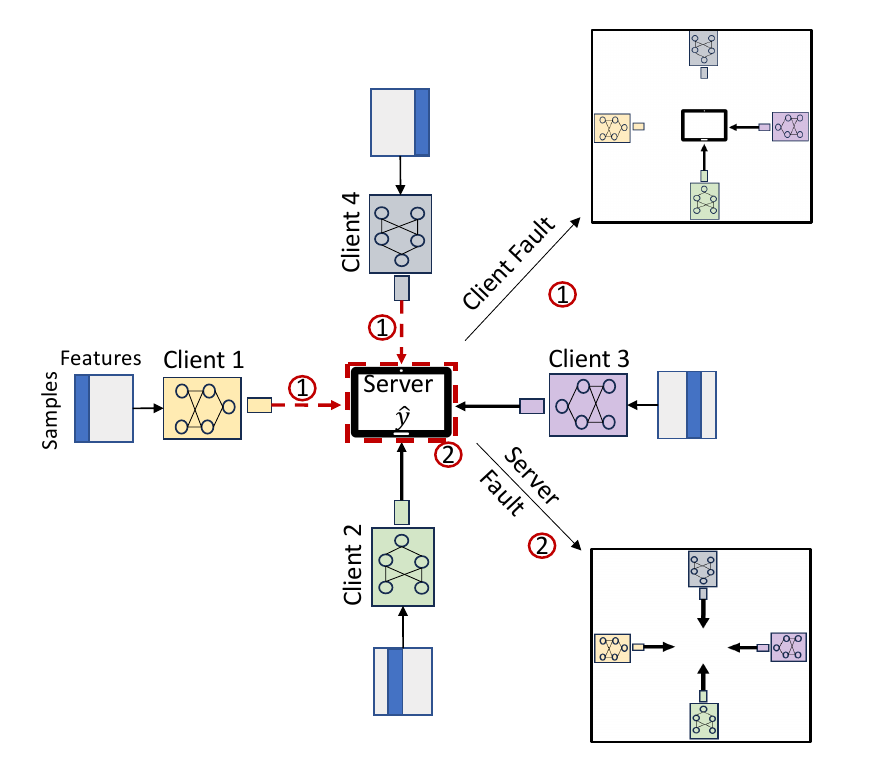}
        \caption{Collaborative Learning for Vertically Split Data (e.g., VFL)}
        \label{subfig:VFL-context}
    \end{subfigure}
\begin{subfigure}[t]{0.53\textwidth}
        \includegraphics[clip, trim=0cm 4.5cm 12cm 4cm,width=\textwidth]{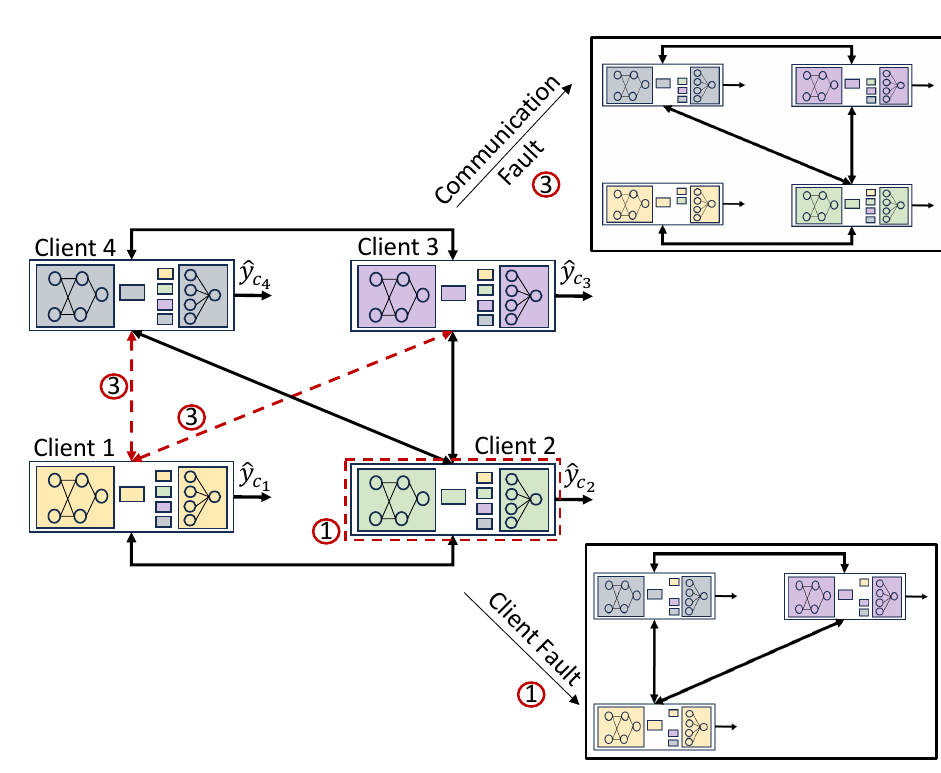}
        \caption{\contextName (\contextAbv)}
        \label{subfig:iDecentralized-context}
    \end{subfigure}
    \vspace{-0.75em}
    \caption{
    Collaborative Learning (CL)(\cref{subfig:VFL-context}) assumes samples are split across clients  with a central server.
    The data context in our study is the same as VFL where the features are split across clients. However, in our case, no centralized server node is assumed, and clients serve as data aggregators (\cref{subfig:iDecentralized-context}).
    Our goal is to obtain robust test time performance even under highly dynamic networks such as client/device faults (\textcolor{red}{\textcircled{1}}), server faults (\textcolor{red}{\textcircled{2}}) and communication faults (\textcolor{red}{\textcircled{3}}).
    }
    \vspace{-1em}
    \label{Fig:Diffeereent Paradigm Comparison}
\end{figure*}

\subsection{Related works}
\label{sec:related-works}

\paragraph{Network Dynamic Resilient Federated Learning (FL)}
In VFL, network dynamics has mostly been studied during the training phase for asynchronous client participation \citet{chen2020vafl,zhang2021secure,li2020efficient,li2023fedvs}. 
Research on VFL network dynamics during inference is limited. 
\citet{ceballos2020splitnn} noted performance drops due to random client failures during testing, and \citet{sun2023robust} studied passive parties dropping off randomly during inference and proposed Party-wise Dropout (PD). 
Thus, prior dynamic network resilient VFL methods have majorly focused on train-time faults and assumed a special node (server or active party) that is immune to failure. 
As VFL differs significantly from the horizontal FL (HFL) setting \citep{yang2019federated}, where clients share the same set of features but have different samples, HFL methods for handling faults (e.g., adaptive aggregation of different models \citet{ruan2021towards}) are inapplicable in our scenario.
Additionally, unlike HFL, where faults only affect training, faults in VFL can disrupt both training \emph{and} inference due to the need for client communication during inference.

\paragraph{Decentralized FL}
Conventional CL methods such as FL, rely on a central server, creating a single point of failure.
To address such limitations, Decentralized FL has  been considered \citep{yuan2023decentralized}. 
Unlike, the extensively studied HFL decentralized methods (\citet{tang2022gossipfl,lalitha2019peer,feng2021blockchain,gabrielli2023survey}),  VFL decentralized methods are limited.
For the special case of simple linear models, \citet{he2018cola} proposed the decentralized algorithm COLA. 
For more general split-NN models, \citet{valdeira2023multi,valdeira2024vertical} proposed decentralized STCD, Less-VFL and semi-decentralized MTCD methods. However, these methods, including COLA and STCD/MTCD, do not analyze network dynamics like faults during inference.
While Laser-VFL addresses missing values during training and inference (a special case of device faults and a complete, non-faulty communication graph), it doesn't consider robustness to faults in the broader cross-device context.

\section{Problem Formulation}\label{Sec ILFormulation}
We define a novel formulation, \emph{\contextAbv}, which specifies both an operating context and the properties of a learning system. 
The desired property of the system is robustness under dynamic conditions, which we formally define via \emph{\risk} and corresponding metrics in the next two subsections.

\paragraph{Notation}
Let $(\xrv\in \mathcal{X},\yrv \in \mathcal{Y})$ denote the random variables corresponding to the input features and target, respectively, whose joint distribution is $p(X,Y)$.
With a slight abuse of notation, we will use $\mathcal{Y}$ to denote one-hot encoded class labels and probability vectors (for predictions).
Let $\dataset=\{(\xinst_i, \yinst_i)\}_{i=1}^n$ denote a training dataset of of $n$ samples i.i.d. samples from $p(X,Y)$ with $d$ input features $\xinst_i$ and corresponding target $\yinst_i$.
Let $\xinst_\mathcal{S}$ denote the subvector associated with the indices in $\mathcal{S} \subseteq \{1,2,\cdots,d\}$, e.g., if $\mathcal{S} = \{1,5,8\}$, then $\xinst_\mathcal{S} = [x_1, x_5, x_8]^T$.
For $C$ clients, the dataset at each client $c \in \{1,2,\cdots,C\}$ will be denoted by $\dataset_c$.
Let $\mathcal{G} = (\mathcal{C}, \mathcal{E})$ denote a network (or graph) of clients, where $\mathcal{C} \subseteq \{1,2,\cdots,C\} \cup \{0\}$ denotes the clients plus an entity (possibly an external entity or one of the clients itself) that represents the device that collects the final prediction and has the labels during training (further details in \Cref{sec:DNVL-inference}) and $\mathcal{E}$ denotes the communication edges.

\subsection{\contextName Context}\label{Sec ILDefinition}

\paragraph{Data and Network Context}\label{def:partial-features}
A partial features data context means that each client has access to a subset of the features, i.e.,
        $\dataset_c = \{\xinst_{i,\mathcal{S}_c}\}_{i=1}^n,$
where $\mathcal{S}_c \subset \{1,2,\cdots,d\}$ for each client $c$.
This is the same \emph{data context} as SplitVFL \citep{liu2022vertical}, a variant of VFL, which incorporates the idea of split learning \citep{vepakomma2018split}, and jointly trains models at both server and clients.
Furthermore, we do allow for scenarios where the clients can have features among one another that are correlated. 
For instance, there can be two sensors that can have correlated features due to their physical proximity.
Through the rest of the paper, the terms clients and devices are used interchangeably.
  
\begin{definition}[Dynamic Network Context]
\label{def:dynamic-network}
    A dynamic network means that the communication graph can change across time indexed by $t$, i.e., $\mathcal{G}(t) = (\mathcal{C}(t), \mathcal{E}(t))$, where the changes over time can be either deterministic or stochastic functions of $t$.
\end{definition}

This dynamic network context includes many possible scenarios including various network topologies, clients joining or leaving the network, and communication being limited or intermittent due to power constraints or physical connection interference.
We provide two concrete dynamic models where there are device failures or communication failures.
For simplicity, we will assume there is a base network topology $\mathcal{G}_\mathrm{base}=(\mathcal{C}_\mathrm{base}, \mathcal{E}_\mathrm{base})$ (e.g., complete graph, grid graph or preferential-attachment graph), and we will assume a discrete-time version of a dynamic network where $t\in \{0,1,2,\cdots\}$, which designates a synchronous communication round.
Given this, we can formally define two simple dynamic network models that encode random device and communication faults.
\begin{definition}[Device Fault Dynamic Network]
    \label{def:device-fault-network}
    Given a fault rate $r$ and a baseline topology $\mathcal{G}_\mathrm{base}$, a \emph{device fault dynamic network} $\mathcal{G}_r(t)$ means that a client is in the network at time $t$ with probability $1-r$, i.e., $\mathrm{Pr}(c \in \mathcal{C}_r(t)) = 1-r, \forall c \in \mathcal{C}_\mathrm{base}$ and $\mathcal{E}_r(t)=\{(c,c') \in \mathcal{E}_\mathrm{base}: c,c' \in \mathcal{C}_r(t)\}$.
\end{definition}
\begin{definition}[Communication Fault Dynamic Network]
    \label{def:comm-fault-network}
    Given a fault rate $r$ and a baseline topology $\mathcal{G}_\mathrm{base}$, a \emph{communication fault dynamic network} $\mathcal{G}_r^\mathrm{CF}(t)$ means that a communication edge (excluding self-communication) is in the network at time $t$ with probability $1-r$, i.e., $\mathcal{C}_r^\mathrm{CF}(t) = \mathcal{C}_\mathrm{base}$ and $\mathrm{Pr}((c,c') \in \mathcal{E}_r^\mathrm{CF}(t)) = 1-r, \forall (c,c') \in \mathcal{E}_\mathrm{base}$ where $c\neq c'$.
\end{definition}

\subsection{\contextAbv Problem Formulation via Dynamic Risk}
\label{sec:DNVL-inference}
Given these context definitions, we now define the goal of \contextAbv in terms of the \risk which we define next.
For now, we will assume the existence of a distributed inference algorithm $\Psi(\xinst; \theta, \mathcal{G}(t)): \mathcal{X} \to \mathcal{Y}^C$, where each client makes a prediction across the data-split network under the dynamic conditions given by $\mathcal{G}(t)$.
Furthermore, we will use a (possibly stochastic) post-processing function $h: \mathcal{Y}^C \to \mathcal{Y}$ to model the final communication round between the clients and another entity (which may be external or may be one of the clients), which owns the labels for training and collects the final prediction during the test time.
The $h$ can model different scenarios including where the entity has access to all or only a single client's predictions.
As an example, the entity could represent a drone passing over a remote sensing network to gather predictions or a physical connection to the devices at test time (e.g., when the sensors are ultra-low power and cannot directly connect to the internet).
Or, this entity could represent a power intensive connection via satellite to some base station that would only activate when requested to save power.

\begin{definition}[\risk]
\label{def:risk}
                Assuming the partial features data context (\autoref{def:partial-features}) and given a dynamic network $\mathcal{G}(t)$ (\cref{def:dynamic-network}), the \emph{\risk} is defined as:
$
            R_h(\theta;\mathcal{G}(t)) \triangleq \mathbb{E}_{\xrv,\yrv,\mathcal{G}(t),h}[\ell_h(\Psi(\xrv; \theta, \mathcal{G}(t)), \yrv)] \,,
        $
    where $\Psi: \mathcal{X} \to \mathcal{Y}^C$ is a distributed inference algorithm parameterized by $\theta$ that outputs one prediction for each client and $\ell_h(\hat{\bm{y}},y) \triangleq \ell(h(\hat{\bm{y}}; \mathcal{G}(T)),y)$ is a loss function where $h:\mathcal{Y}^C \to \mathcal{Y}$ (which could be stochastic) post-processes the client-specific outputs to create a single output based on the communication graph at the final inference time $T$, and $\ell$ could be any standard loss function.
\end{definition}
This risk modifies the usual risk by taking an expectation w.r.t. the dynamic graph (which could be stochastic over time) and a the client selection function $h$ (described next).
We assume that the distributed inference algorithm produces a prediction for every client and the $h$ function (stochastically) selects the final output (note how the composition is a normal prediction function, i.e., $h \circ \Psi: \mathcal{X} \to \mathcal{Y}$).
This means that the network's parameters $\theta$ are distributed across all clients.
We also note that the model at each client could have different parameters and even different architectures, unlike in HFL.

We will now define the client selection function, $h$, which represents the communication to the external entity.
Each metric represents a distinct real-world context.
We consider two practical scenarios and two oracle methods that depend on how $h$ selects the final output of the distributed inference algorithm.
These four methods for defining $h$ will represent \risk{s} under different scenarios and form the basis for the test metrics used in the experiments.
We first formally define an active set $\mathcal{A}$ of clients at the last communication round as 
$\mathcal{A}(\mathcal{G}(T)) \triangleq \{c: (0,c) \in \mathcal{E}(T), c \in \mathcal{C}(T) \}$, which means the devices that could communicate to the special entity denoted by $0$ at the last communication round (other devices are not able to communicate their predictions).
\textbf{\average}($h_\mathrm{\averageAbv}(\hat{\bm{y}})$): randomly selects \emph{active} client; \textbf{Select Oracle Best Active Client}($h_\mathrm{best}(\hat{\bm{y}})$): if any active client prediction is correct, the correct label is predicted; \textbf{Select Oracle Worst Active Client}($h_\mathrm{worst}(\hat{\bm{y}})$): if any active
client prediction is incorrect, the wrong label is predicted; \textbf{Select Any Client}($h_\mathrm{any}(\hat{\bm{y}})$): randomly select any device, from all devices
both active and inactive. A detailed definition of these functions can be found in Appendix.

\section{\methodName (\methodAbv)}\label{sec:methods}
\label{sec:distributed-inference}
Given the novel \contextAbv context, we now propose our minimalistic yet impactful message passing distributed inference algorithm \methodSf for \contextAbv and present the relevant theoretical insights.
\textbf{First}, we extend and discuss dropout methods for simulating faults during training to enhance the robustness of the network with faults at test time.
\textbf{Second}, we overcome the problem where CL catastrophic fails if the single aggregator node faults by enabling multiple clients to be data aggregators.
\textbf{Third}, we improve both the ML performance and decrease the variability of client-specific predictions by using gossip rounds to average the final output across devices.
We assume that the aggregator of neighbor representations is simply concatenation and the network architecture are based on simple multi-layer perceptrons (MLP). 
We summarize the different proposed techniques and contrast them with VFL style CL  in \Cref{IL BaseLines} (in Appendix) and present the \methodAbv inference algorithm in \Cref{Algorithm:DN_VFL}.

\newcommand{\xvec}{\bm{x}}
\newcommand{\zvec}{\bm{z}}
\newcommand{\yvec}{\hat{y}}
\newcommand{\Bvec}{\bm{B}}

\newcommand{\note}[1]{\hfill\{#1\}}

\begin{figure*}[t] \centering
    \begin{minipage}{0.9\textwidth} \begin{algorithm}[H] \caption{\methodSf Inference Algorithm}
            \label{Algorithm:DN_VFL}
            \begin{algorithmic}[1]
                \State \textbf{Input:} Input features $\{\xvec_c\}_{c=1}^C$, parameters $\{\theta_{c}^{(t)}: \forall c, t\}$, and dynamic graph $\mathcal{G}(t) = (\mathcal{C}(t), \mathcal{E}(t))$
                \State $\zvec_c^{(0)} = f_c^{(0)}(\xvec_c; \theta_c^{(0)}), \quad \forall c \in \mathcal{C}(0)$ \Comment{Process input at all clients}
                \State $\tilde{\zvec}_k^{(1)} = g(\{\zvec_{c'}^{(0)} : (k,c') \in \mathcal{E}(1)\}), \quad \forall k \in \mathcal{K} \cap \mathcal{C}(1)$ \Comment{Aggregate messages from neighbors: \cref{sec:MulAgg}}
                \State $\zvec_k^{(1)} = f_k^{(1)}\big(\tilde{\zvec}_k^{(1)}; \theta_k^{(1)}\big), \quad \forall k \in \mathcal{K} \cap \mathcal{C}(1)$ \Comment{Apply prediction function to aggregated output}
                \For{$t \gets 2, \dots, G+1$} \Comment{Gossip prediction probabilities to neighbors: \cref{sec:gossip-layers}}
                    \State $\zvec_k^{(t)} = \mathrm{Avg}(\{\zvec_{k'}^{(t-1)} : (k,k') \in \mathcal{E}(t), k' \in \mathcal{K}\cap \mathcal{C}(t) \}), \quad \forall k \in \mathcal{K} \cap \mathcal{C}(t)$
                \EndFor
                \State \textbf{return} $\{\zvec_k^{(G+1)} \in \mathcal{Y}\}_{k \in \mathcal{K}}$\Comment{Return all aggregator-specific predictions}
            \end{algorithmic}
        \end{algorithm}
    \end{minipage}
\end{figure*}

\subsection{Decentralized Training of MAGS with Real and Simulated Faults via Dropout}
\label{sec:dropout}
To train MAGS, we use a standard VFL backpropagation algorithm \emph{without gossip rounds.}\footnote{By training without gossip, the classifier head on each device is trained independently to maximize its own performance so that its errors are uncorrelated with other devices when using gossiping as an ensembling approach as discussed in \Cref{sec:gossip-layers}}The objective function is the sum of the losses at each device.
Faults can be treated similarly to dropout, where missing values are imputed with zeros (see Appendix for more details).
Our training algorithm assumes all devices have access to the labels, which is similar to an assumption made in \citet{castiglia2022compressed} and is valid for our setup, involving a trusted but unreliable device network, where robustness is our primary goal.
Training with no faults may leave the model vulnerable to higher fault rates during inference, which can occur due to external factors like extreme weather. This presents a challenge, as large-scale inference-time faults lead to missing values, causing a distribution shift between the training and test data. Such shifts can severely degrade model performance, as noted by \citet{koh2021wilds}.

To prepare for such scenarios, we simulate inference-time faults during training by adding dropout. While \citet{sun2023robust} proposed Party-wise Dropout (PD) for a server-based setting, DN-CL requires dropout across client-to-client links. We therefore introduce Communication-wise Dropout (CD) to represent communication failures within a decentralized network.
This extra dropout helps the model adapt to higher fault rates by leveraging its regularizing effect \citet{baldi2013understanding}.
\citet{mianjy2020convergence} further showed that a model trained with dropout and tested without it can achieve near-optimal test performance in $O(1/\epsilon)$ iterations. Their work also provides evidence that, in over-parameterized models, dropout-regularized networks can generalize well even when dropout is applied during testing---exactly what is needed in \contextAbv

\subsection{Multiple Aggregators CL (MACL)}\label{sec:MulAgg}

A common vulnerability in collaborative learning (e.g., VFL) is having a single server as the only aggregator, creating a single point of failure if it goes down. 
We mitigate this by turning all clients into aggregators (\Cref{Algorithm:DN_VFL} line 3), introducing redundancy and fault-tolerance, which we term Multiple Aggregators Collaborative Learning (MACL).
An MACL setup can tolerate the failure of any node and the probability of failure of all nodes is given by $r^C$, which is very small if $C$ is large.
However, having all nodes act as aggregators could increase the communication cost.
Thus, we develop $K$-MACL as a low communication cost alternative to MACL.
In $K$-MACL, we assume there is a set of clients $\mathcal{K} \subseteq \mathcal{C}$ that act as data aggregators.
The number of aggregators ($K\triangleq|\mathcal{K}|$) will generally be less than the total number of devices, resulting in lower communication cost than MACL.
We now theoretically prove a bound on the risk that critically depends on the probability of catastrophic failure, i.e., when there are no active aggregators $|\mathcal{A}| = 0$.

\begin{proposition}
\label{prop:k-MVFL}
Given a device fault rate $r$, the number of data aggregators $K\leq C$ and the post-processing function $h_\mathrm{\averageAbv}$, and assuming the risk of a predictor (data aggregator) with faults is higher than that without faults, 
then the dynamic risk with faults is lower bounded by:
\begin{align}
\underbrace{R_h({\theta;\mathcal{G}_{r}(t)})}_\text{Risk with faults}
&\geq (1-r^K) \cdot \underbrace{R_h(\theta;\mathcal{G}_{\mathrm{base}})}_{\text{Risk without faults}} \,+\,\,
\underbrace{r^K}_{\mathrm{Pr}(|\mathcal{A}|=0)} \cdot \underbrace{\mathbb{E}[\ell(Y,p(Y))]}_{\text{Risk of random predictor}}\,.
\label{eq_prop1_2} 
\end{align}
\end{proposition}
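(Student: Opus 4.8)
The plan is to split the \risk of the faulty network into a ``catastrophe'' part and a ``no catastrophe'' part according to whether the active aggregator set $\mathcal{A} = \mathcal{A}(\mathcal{G}_r(T))$ is empty, and then lower bound each part separately. Concretely, I would start from the law of total expectation,
$R_h(\theta;\mathcal{G}_r(t)) = \Pr(|\mathcal{A}|=0)\,\mathbb{E}[\ell_h \mid |\mathcal{A}|=0] + \Pr(|\mathcal{A}|\geq 1)\,\mathbb{E}[\ell_h \mid |\mathcal{A}|\geq 1]$.
Under the device fault model (\Cref{def:device-fault-network}) the $K$ aggregators in $\mathcal{K}$ fail independently with probability $r$, and since each is connected to the collector node $0$ in $\mathcal{G}_\mathrm{base}$, the event $\{|\mathcal{A}|=0\}$ is exactly the event that all $K$ aggregators fail, so $\Pr(|\mathcal{A}|=0)=r^K$ as asserted in the statement. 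On this event there is no active client for $h_\mathrm{\averageAbv}$ to query, so (by the fallback convention on $h$) the output is the constant/marginal predictor $p(Y)$ and the conditional risk equals exactly $\mathbb{E}[\ell(Y,p(Y))]$, the random-predictor term.

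For the complementary event $\{|\mathcal{A}|\geq 1\}$ I would argue that $\mathbb{E}[\ell_h \mid |\mathcal{A}|\geq 1] \geq R_h(\theta;\mathcal{G}_\mathrm{base})$. Condition on a fault configuration $\sigma$ with $\mathcal{A}(\sigma)\neq\emptyset$: then $h_\mathrm{\averageAbv}$ picks an aggregator $k$ uniformly from $\mathcal{A}(\sigma)$ and incurs the risk of aggregator $k$ evaluated on the faulted network $\sigma$. By the monotonicity hypothesis of the proposition (an aggregator's risk with faults is at least its risk without faults), this is at least $\rho_k$, the risk of aggregator $k$ in $\mathcal{G}_\mathrm{base}$, so the conditional risk given $\{|\mathcal{A}|\geq 1\}$ is at least $\mathbb{E}_\sigma[\,\frac{1}{|\mathcal{A}(\sigma)|}\sum_{k\in\mathcal{A}(\sigma)}\rho_k \mid |\mathcal{A}|\geq 1\,]$. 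Since the device faults are i.i.d.\ across the aggregators, the distribution of $\mathcal{A}$ is exchangeable, hence the probability that any fixed aggregator is the one returned by $h_\mathrm{\averageAbv}$ is the same for every $k\in\mathcal{K}$, namely $1/K$; therefore this expectation collapses to $\frac1K\sum_{k\in\mathcal{K}}\rho_k = R_h(\theta;\mathcal{G}_\mathrm{base})$, using that $h_\mathrm{\averageAbv}$ on the fault-free base graph also selects uniformly among all $K$ aggregators. Plugging both cases back into the total-expectation identity yields $R_h(\theta;\mathcal{G}_r(t)) \geq r^K\,\mathbb{E}[\ell(Y,p(Y))] + (1-r^K)\,R_h(\theta;\mathcal{G}_\mathrm{base})$, which is \eqref{eq_prop1_2}.

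I expect the main obstacle to be the last step of the second paragraph: making the exchangeability/symmetry argument rigorous so that the average of the fault-free aggregator risks over the randomly selected active aggregator reduces exactly to $R_h(\theta;\mathcal{G}_\mathrm{base})$. This needs the i.i.d.\ device-fault assumption, the uniform selection rule of $h_\mathrm{\averageAbv}$, and the convention that $h_\mathrm{\averageAbv}$ behaves the same way (uniform over all $K$ aggregators) in $\mathcal{G}_\mathrm{base}$; without one of these the clean constant $1-r^K$ would degrade into a less tidy bound. A secondary point that must be pinned down is the behavior of $h$ on the catastrophic event $\{|\mathcal{A}|=0\}$ (i.e.\ that it defaults to the marginal predictor), since the stated value $\mathbb{E}[\ell(Y,p(Y))]$ of the catastrophe term depends entirely on that convention.
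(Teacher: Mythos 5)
Your proposal is correct and follows essentially the same route as the paper: the paper also conditions on whether the active aggregator set is empty (phrased via a selection variable $S$ that equals a ``fake'' client $\emptyset$ exactly when $|\mathcal{A}|=0$), computes $\mathrm{Pr}(|\mathcal{A}|=0)=r^K$, assigns the marginal-predictor risk to the catastrophic branch, and proves your exchangeability step as a standalone lemma showing $p(S=k\mid|\mathcal{A}|>0)=1/K$ before applying the monotonicity assumption aggregator-by-aggregator. The step you flagged as the main obstacle is precisely the content of that lemma, so your plan matches the paper's proof in both structure and detail.
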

 
Proof is in the appendix. 
As a simple application, suppose that the fault rate is very high at $r=0.3$, a VFL system is likely to fail 30\% of the time due to server failure and the dynamic risk would reduce to random guessing.
However, with just four aggregators, the chance of failure reduces to less than 1\%.
While having multiple aggregators addresses the fundamental problem of catastrophic failures, each model is often insufficient given only one communication round especially for sparse base graphs or high fault rates. 
Additionally, each device may have widely varying performance characteristics due to its local neighborhood. 
Thus, further enhancements are needed for robustness and stability.

\subsection{Gossip Layers to Ensemble Aggregator Predictions}
\label{sec:gossip-layers}
While multiple data aggregators help avoid system-level failures, the performance of each data aggregator may be poor due to faults, which could result in overall high dynamic risk. 
Because gossip is not used during training, each data aggregator model is different. 
During inference, each will have access to different client representations (due to topology and test-time faults).
This variability among data aggregators translates to inconsistent performance when viewed by an external entity as it depends on which device is selected and the best device may differ for each inference query (see the difference between ``Active Worst'' and ``Active Best'' metrics).
Thus, we propose to use gossip layers to combine predictions among data aggregators (\Cref{Algorithm:DN_VFL}, lines 5 and 6).
From one perspective, gossip implicitly produces an ensemble prediction at each aggregator, which we prove always has better or equal dynamic risk.
From another perspective, gossiping will cause the aggregator predictions to converge to the same prediction---which means that the system performance will be the same regardless of which device is selected.

We analyze the ML performance of gossip using the ensemble diversity framework from \citet{wood2023unified}, which links diversity to reduced risk.  
They showed that ensemble risk is decomposable into individual risks minus a diversity term (which will reduce the risk if positive).
We leverage this theory to prove that the dynamic risk of our ensemble is equal to the non-ensemble dynamic risk minus a diversity term---which is always non-negative and positive if there is any diversity in prediction.
This proposition shows that gossiping at inference time, which implicitly creates ensembles, will almost always improve the dynamic risk compared to not using gossip. (Proof is in appendix.)

\begin{proposition}
\label{prop:gossip_Ensembling}
The dynamic risk of an ensemble over aggregators is equal to the non-ensemble risk minus a non-negative diversity term:
\begin{align*}
    R^\mathrm{ens}_h(\theta; \mathcal{G}(t)) 
    &= \textstyle \underbrace{R_h(\theta; \mathcal{G}(t))}_\text{Non-ensemble risk} -\, \underbrace{\mathbb{E}_{\bm{x},\mathcal{G}(t),h}[\textstyle \frac{1}{K}\sum_{k=1}^K \ell_h(\Psi_{k}(\bm{x}; \theta), \Psi^\mathrm{ens}(\bm{x}; \theta)) ]}_\text{Diversity term (non-negative)}
    \leq R_h(\theta; \mathcal{G}(t)) \,,
\end{align*}
where $R^\mathrm{ens}_h$ is the ensemble risk; 
$\Psi_k$ is the $k$-th aggregator model;
$\Psi_\mathrm{ens}$ is the ensemble model where $\Psi^\mathrm{ens}_{k}(\bm{x}; \theta, \mathcal{G}(t)) \triangleq Z^{-1}\exp(\sum_{k'=1}^K \Psi_{k'}(\bm{x}; \theta, \mathcal{G}(t))), \forall k \in \mathcal{K}$, where $Z$ is the normalizing constant;
and where the notational dependence of $\Psi$ on $\mathcal{G}(t)$ is suppressed and $\ell_h$ in the diversity term applies $h$ to both loss arguments with a slight abuse of notation.
\end{proposition}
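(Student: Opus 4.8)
The plan is to reduce the statement to an application of the ensemble-diversity decomposition of \citet{wood2023unified}. The key observation is that the post-processed dynamic risk $R_h(\theta;\mathcal{G}(t)) = \mathbb{E}_{\xrv,\yrv,\mathcal{G}(t),h}[\ell_h(\Psi(\xrv;\theta,\mathcal{G}(t)),\yrv)]$ can, for the purposes of comparing the ensemble to the non-ensemble, be rewritten as an expectation over the same randomness of an \emph{average} over aggregators: because $h_{\mathrm{\averageAbv}}$ picks an aggregator uniformly at random (or, more generally, because we are comparing the mean per-aggregator risk to the ensemble risk), we have $R_h(\theta;\mathcal{G}(t)) = \mathbb{E}_{\xrv,\yrv,\mathcal{G}(t),h}\big[\frac{1}{K}\sum_{k=1}^K \ell(\Psi_k(\xrv;\theta),\yrv)\big]$ after taking the expectation over $h$'s client selection. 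So the first step is to write both $R_h$ and $R^{\mathrm{ens}}_h$ as expectations over $(\xrv,\yrv,\mathcal{G}(t))$ of quantities that, conditionally on $(\xrv,\mathcal{G}(t))$, are exactly the "average individual loss" and the "ensemble loss" in the sense of \citet{wood2023unified}.

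Next I would invoke the Bregman/exponential-family form of the decomposition from \citet{wood2023unified}: for the log-loss (cross-entropy) with the geometric-mean (softmax-of-average-logits) combiner $\Psi^{\mathrm{ens}}_k(\xvec;\theta,\mathcal{G}(t)) = Z^{-1}\exp\!\big(\sum_{k'} \Psi_{k'}(\xvec;\theta,\mathcal{G}(t))\big)$, the pointwise identity
\begin{equation*}
\frac{1}{K}\sum_{k=1}^K \ell\big(\Psi_k(\xvec),\yinst\big) \;=\; \ell\big(\Psi^{\mathrm{ens}}(\xvec),\yinst\big) \;+\; \frac{1}{K}\sum_{k=1}^K \ell\big(\Psi_k(\xvec),\Psi^{\mathrm{ens}}(\xvec)\big)
\end{equation*}
holds, where on the right the loss $\ell$ compares two probability vectors (this is exactly the "ambiguity decomposition" for KL divergence, since for cross-entropy the ambiguity term is $\frac{1}{K}\sum_k \mathrm{KL}(\Psi^{\mathrm{ens}}\,\|\,\Psi_k)$ — or its symmetric analogue depending on the convention in \citet{wood2023unified}; I would match their orientation exactly). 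The crucial point is that each term in the sum on the right is a KL divergence, hence non-negative, so the whole diversity term is non-negative and is strictly positive whenever the $\Psi_k$ are not all identical. Then I take $\mathbb{E}_{\xrv,\yrv,\mathcal{G}(t),h}[\cdot]$ of both sides: the left becomes $R_h(\theta;\mathcal{G}(t))$, the first term on the right becomes $R^{\mathrm{ens}}_h(\theta;\mathcal{G}(t))$, and the second term becomes exactly the displayed diversity term $\mathbb{E}_{\bm{x},\mathcal{G}(t),h}[\frac{1}{K}\sum_k \ell_h(\Psi_k(\bm{x};\theta),\Psi^{\mathrm{ens}}(\bm{x};\theta))]$. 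Rearranging gives the claimed equality, and non-negativity of the diversity term gives the inequality.

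I would be careful about two bookkeeping points. First, the role of $h$ and the final gossip/selection round: the statement writes $\ell_h$ on both arguments in the diversity term "with a slight abuse of notation", so I need to make sure that whatever $h$ does (selecting one aggregator's output, or the composition with the last communication round at time $T$) commutes appropriately with the averaging over $k$; the clean way is to observe that under $h_{\mathrm{\averageAbv}}$ the outer selection is independent uniform over $\mathcal{K}$, so $\mathbb{E}_h$ of "loss at the selected aggregator" equals "$\frac{1}{K}\sum_k$ loss at aggregator $k$", and the same for the diversity term — so the pointwise identity lifts to risks term-by-term. Second, the ensemble combiner must match the combiner under which \citet{wood2023unified}'s decomposition is exact: for cross-entropy this is the normalized geometric mean of the predictive distributions, which is precisely the $Z^{-1}\exp(\sum_{k'}\Psi_{k'})$ in the statement (interpreting $\Psi_{k'}$ as logits), so there is no mismatch.

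The main obstacle is not any deep inequality — once the \citet{wood2023unified} decomposition is in hand, non-negativity is immediate from non-negativity of KL divergence. The real work is the \emph{reduction}: carefully justifying that the per-aggregator dynamic risk $R_h(\theta;\mathcal{G}(t))$ (defined via the stochastic selection $h$) equals the uniform average of the individual aggregator risks, and that the ensemble dynamic risk is the expectation of the pointwise ensemble loss under the \emph{same} combiner as in the cited framework. In other words, the crux is matching the paper's notation and the $h$-expectation to the clean pointwise algebraic identity, rather than proving the identity itself. I would therefore spend most of the write-up on setting up notation so the pointwise decomposition applies verbatim, then conclude in two lines.
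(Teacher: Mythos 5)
Your proposal is correct and follows essentially the same route as the paper's proof: both invoke the Generalised Ambiguity Decomposition of \citet[Proposition 3]{wood2023unified} for the log-loss with the normalized geometric-mean combiner, identify the non-ensemble dynamic risk with the uniform average of per-aggregator risks via the $h$-selection, and conclude non-negativity of the diversity term (the paper argues this from non-negativity of the loss, you from non-negativity of the KL divergence --- the same fact in this setting). The bookkeeping you flag about commuting $h$ with the average over $k$ is handled in the paper by defining $\Psi_h \triangleq h \circ \Psi$ and pushing $h$ in and out of the loss, which is the same device you describe.
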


To analyze output variability using gossip, we turn to gossip consensus results.
We first introduce some additional notation.
Let $A$ be the adjacency matrix of graph $\mathcal{G}=\{\mathcal{C}, \mathcal{E}\}$ and let $V=D^{-1}A$ denote the consensus matrix where $D$ is the degree matrix (note that $V$ is row stochastic) and let $\lambda$ denote the largest eigenvalue of $V - \frac{11^T}{C}$, also known as the spectral radius.
The result below proves that with simple averaging the variability decreases exponentially with increasing gossip rounds based on the spectral radius of the (faulted) graph.
\begin{proposition}
\label{prop:gossip}
If simple averaging is used during gossip, the difference between the average output over all devices, denoted $\bar{y}$, and the original output of the $i$-th device, denoted $y_i$, after $G$ gossip rounds is bounded as follows:
$\|\bar{y} - y_i\|_2 \leq \lambda^G \sqrt{C} \max_{j,j' \in \mathcal{C}} \| y_j - y_{j'}\|_2, \forall c \in \mathcal{C}$.
\end{proposition}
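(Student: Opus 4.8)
The plan is to analyze the gossip iteration as a linear dynamical system driven by the consensus matrix $V = D^{-1}A$ and track how the deviation of each device's output from the global average contracts with each round. Stack the per-device outputs into a matrix $Y^{(t)}$ whose $i$-th row is $y_i^{(t)}$; then line 6 of \Cref{Algorithm:DN_VFL} (with simple averaging) can be written as $Y^{(t)} = V Y^{(t-1)}$, so after $G$ rounds $Y^{(G)} = V^G Y^{(0)}$. Since $V$ is row-stochastic, $V \mathbf{1} = \mathbf{1}$, and the average $\bar y = \frac{1}{C}\mathbf{1}^T Y^{(0)}$ is (approximately) preserved; more precisely, I would work with the centered iterate $\tilde Y^{(t)} = Y^{(t)} - \mathbf{1}\bar y^T = (V - \tfrac{\mathbf{1}\mathbf{1}^T}{C}) \tilde Y^{(t-1)}$, using that $(V - \tfrac{\mathbf{1}\mathbf{1}^T}{C})$ annihilates the all-ones direction and agrees with $V$ on the orthogonal complement. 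Iterating gives $\tilde Y^{(G)} = (V - \tfrac{\mathbf{1}\mathbf{1}^T}{C})^G \tilde Y^{(0)}$, so $\|\tilde Y^{(G)}\| \le \|(V - \tfrac{\mathbf{1}\mathbf{1}^T}{C})^G\| \, \|\tilde Y^{(0)}\| \le \lambda^G \|\tilde Y^{(0)}\|$ by the definition of the spectral radius $\lambda$ as the largest eigenvalue magnitude of $V - \tfrac{\mathbf{1}\mathbf{1}^T}{C}$.

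Next I would convert this matrix bound into the stated per-device bound. For the $i$-th row, $\|\bar y - y_i^{(G)}\|_2 = \|\tilde Y^{(G)}_{i,:}\|_2 \le \|\tilde Y^{(G)}\|_F \le \lambda^G \|\tilde Y^{(0)}\|_F$. Then I bound the initial centered matrix's Frobenius norm: each row $\tilde Y^{(0)}_{j,:} = y_j - \bar y$ satisfies $\|y_j - \bar y\|_2 \le \max_{j' } \|y_j - y_{j'}\|_2 \le \max_{j,j'} \|y_j - y_{j'}\|_2$, since the average is a convex combination and hence lies within the convex hull of the $y_{j'}$'s, so its distance to any point is at most the diameter of that set. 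Summing $C$ such rows and taking the square root yields $\|\tilde Y^{(0)}\|_F \le \sqrt{C}\, \max_{j,j'}\|y_j - y_{j'}\|_2$, which combined with the contraction gives exactly the claimed inequality $\|\bar y - y_i\|_2 \le \lambda^G \sqrt{C}\, \max_{j,j'} \|y_j - y_{j'}\|_2$.

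The main subtlety I anticipate is justifying that the operator norm of $(V - \tfrac{\mathbf{1}\mathbf{1}^T}{C})^G$ is controlled by $\lambda^G$: for a general row-stochastic (non-symmetric) $V$ this requires care, since $\|M^G\| \le \|M\|^G$ uses the operator $2$-norm, not the spectral radius, and the spectral radius only bounds $\|M^G\|^{1/G}$ asymptotically via Gelfand's formula. The clean fix, which I expect the paper intends, is either (i) to assume the base graph is such that $V$ is symmetric (e.g., regular graph, so $D^{-1}A = \tfrac{1}{d}A$ is symmetric) or more generally reversible, in which case $V - \tfrac{\mathbf{1}\mathbf{1}^T}{C}$ is diagonalizable with real eigenvalues and its operator norm equals $\lambda$ exactly; or (ii) to interpret $\lambda^G$ loosely, replacing it with $\|(V-\tfrac{\mathbf{1}\mathbf{1}^T}{C})^G\|$ and noting this decays geometrically at rate approaching $\lambda$. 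I would state the symmetry (or appropriate norm-one-projection) assumption explicitly and then the argument above goes through cleanly. A secondary minor point is handling the faulted graph: the same argument applies verbatim with $V$ and $\lambda$ taken for the (possibly disconnected or time-varying) graph at the gossip rounds, though if the graph is disconnected $\lambda$ may equal $1$ and the bound becomes vacuous — worth a remark but not an obstacle to the stated claim.
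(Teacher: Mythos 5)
Your argument is essentially correct, but note that the paper does not supply its own proof of this proposition: the main text says only that it ``follows as a special case of the proof in \citet{lin2021semi},'' and the appendix proves only Propositions~1 and~2. Your writeup is therefore strictly more explicit than anything in the paper. The route you take---stacking the outputs, centering by the average, iterating with $V - \tfrac{\mathbf{1}\mathbf{1}^T}{C}$, and converting the matrix contraction into a per-row bound via the Frobenius norm and the diameter of the initial outputs---is the canonical consensus-contraction argument and yields exactly the stated $\lambda^G\sqrt{C}\max_{j,j'}\|y_j-y_{j'}\|_2$ bound; it is almost certainly what the cited reference instantiates. The two caveats you flag are genuine and are simply sidestepped by the paper's citation: (i) for a non-regular graph, $V=D^{-1}A$ is row- but not column-stochastic, so the uniform average $\bar y$ is not invariant under the iteration (the dynamics converge to the degree-weighted average), and the recursion $\tilde Y^{(t)} = (V-\tfrac{\mathbf{1}\mathbf{1}^T}{C})\tilde Y^{(t-1)}$ needs double stochasticity; (ii) bounding $\|(V-\tfrac{\mathbf{1}\mathbf{1}^T}{C})^G\|_2$ by $\lambda^G$ requires $V$ symmetric (or at least normal), since for general matrices the spectral radius controls powers only asymptotically. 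Both are repaired by assuming symmetric doubly stochastic consensus weights (e.g., a regular base graph or Metropolis weights), the standard hypothesis in the gossip literature; making that assumption explicit, as you propose, is the right call and fills a gap the paper leaves implicit.
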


\Cref{prop:gossip} follows as a special case of the proof in \citet{lin2021semi}.
Thus, assuming the graph is connected (i.e., $\lambda<1$), the variability between aggregators shrinks to zero exponentially w.r.t. the number of gossip rounds $G$ based on the spectral radius $\lambda$.
Intuitively, the spectral radius is small for dense graphs and large for sparse graphs.
As a consequence, test-time faults will make the spectral radius increase.
However, as long as the graph is still connected, this gossip protocol can significantly reduce device variability even after only a few gossip rounds.

\section{Experiments}\label{sec:experiment}

\label{sec:experiment-setup}

\paragraph{Datasets} We test on diverse datasets: MNIST, CIFAR10, CIFAR100, Tiny ImageNet, and StarCraftMNIST \citep{kulinski2023starcraftimage}, a spatial reasoning dataset based on StarCraft II replays. 
Due to space constraints, here we focus on StarCraftMNIST as it aligns with our geospatial sensor network use-case (\cref{sec:IntoAndMot}).
To simulate this, we split images into patches, with one client per patch (mainly 16 clients in a 4x4 grid).
See \Cref{app-sec:experiment-result} for results on other datasets and client numbers.

\paragraph{Method} Baseline methods are vanilla \emph{VFL} and \emph{PD-VFL} \citep{sun2023robust}.
We then include various ablation versions of our \methodSf to show the importance of each component to robust \contextAbv performance.
Specifically, \emph{MACL} refers to using all clients as aggregators.  \emph{4-MACL} refers to the low communication version of MACL, where 4 was chosen based on \Cref{prop:k-MVFL}.
The prefix of \emph{PD-} or \emph{CD-} refers to using party-wise or communication-wise dropout during training. And the suffix of \emph{-G}$g$ denotes that $g$ gossip rounds were used.
See Appendix for specifics about model architecture and hyperparameters.

\paragraph{Baseline Communication Network} We consider diverse graph types with varying sparsity: complete, grid, ring, and random geometric graphs (where devices within a given distance connect). We assume synchronous communication, following standard FL/VFL works (e.g., \citet{mcmahan2017communication, wang2022communication, crawshaw2024federated,li2023fedvs,jiang2022vf}).

\paragraph{Fault Models} We compute dynamic risk under both device and communication faults (\Cref{def:device-fault-network} and \Cref{def:comm-fault-network}), with fault rates up to 50\%, demonstrating performance under extreme scenarios.
Results shown here use a constant faulted graph during inference; see Appendix for temporally varying faults.

\paragraph{Different Test Fault Rates and Patterns}
As seen in \Cref{fig:testavg_trainfault_faultrate_starmnist_16}, across multiple test fault rates, fault types, and baseline networks, the performance of most approaches degrades significantly from about 80\% to 30\% while our proposed MAGS methods (PD-MACL-G4 and CD-MACL-G4) are relatively resilient to the increasing fault percentage.
By comparing MACL to VFL, it appears that using multiple data aggregators improves resilience to faults.
This observation is in line with \cref{prop:k-MVFL} and substantiates the benefit of having multiple data aggregators to deal with \contextAbv.
Furthermore, dropout during training leads to improved resilience. MACL models trained using PD or CD are more robust than ones without it.
Such results provide empirical evidence to support the claim that simulating training fault via dropout is a valuable technique to handle inference faults. 

\begin{figure*}[!ht]
    \centering
        \includegraphics[width=0.99\textwidth]{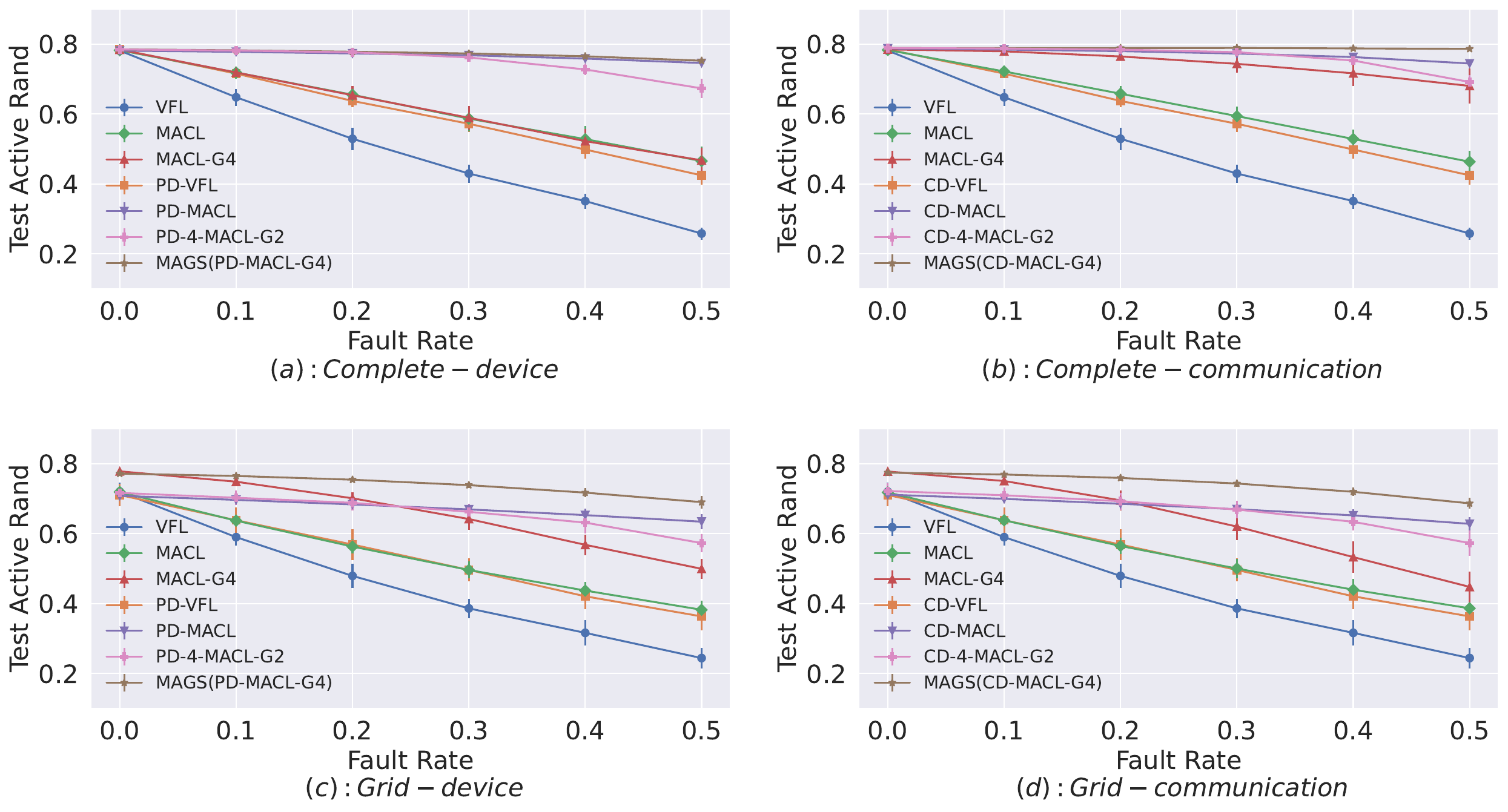}
        \vspace{-1em}
    \caption{Test accuracy with and without communication (CD-) and party-wise (PD-) Dropout method  for StarCraftMNIST with 16 devices. 
    Here we include models trained under an dropout rate of  30\% (marked by 'PD-' or 'CD-'). 
    All results are averaged over 16 runs, and the error bar represents standard deviation. Across different configurations, MAGS(PD/CD-MACL-G4) trained with feature omissions has the highest average performance, while vanilla VFL performance is not robust as fault rate increases.
    As our experiments are repeated multiple times, what we report is the expectation (Avg) over the random active client selection.
    }
    \label{fig:testavg_trainfault_faultrate_starmnist_16}
\end{figure*}

The gossip variants of the proposed methods provide a performance boost when combined with PD and CD variants across different fault rates. 
This underscores the importance of using gossiping as a part of \methodSf.
The performance of CD/PD-4-MACL-G2 in \cref{fig:testavg_trainfault_faultrate_starmnist_16}, indicates that better robustness to inference faults than VFL can be achieved at a much lower communication cost than that of MACL.
In summary, the combined effect of having a decentralized setup, gossiping and dropout clearly outperforms other methods.
In the Appendix we present some more investigation (Ablation Study) on number of gossip rounds and different dropout rates for CD and PD.
In addition, an extended version of \cref{fig:testavg_trainfault_faultrate_starmnist_16} can be found in the Appendix.

\paragraph{Communication and Performance Analysis} 
To study the trade-off between communication and accuracy, \cref{tab:table_network} (extended version in Appendix) presents the performance and approximated number of communication (\# Comm.) for different baseline networks. 
Across varying levels of graph sparseness, using a decentralized setup with gossiping improves performance by at least 10 percent points and can be as high as 32 percent points when compared to VFL.
This performance gain comes at the cost of increased communication, enabling system redundancy.
Nonetheless, 4-MACL achieves major VFL improvements with just four aggregating devices.
Comparing 4-MACL and MACL shows an efficient robustness-communication trade-off. The optimal number of aggregators depends on context and acceptable performance loss.
Furthermore, 4-MACL with a poorly connected graph is still better than VFL with well connected graph, such as 4-MACL with \textit{RGG} (r=1) versus VFL with \textit{Complete}, where the number of communications are similar in magnitude.
This indicates that given a fixed communication budget, VFL may not be the best solution despite its low communication cost.
Finally, gossip's impact lessens in sparser networks, indicating more communication doesn't always improve performance in dynamic environments

\begin{table*}[!ht]
\centering
        \caption{
        Active Rand (Avg) performance at test time with 30$\%$ communication fault rate.
Compared to VFL, MACL performs better but it comes at higher communication cost. Thus we propose 4-MACL, which is shown to be a low communication cost alternative to MACL. 
}
    \label{tab:table_network}
\resizebox{1\columnwidth}{!}{
    \begin{tabular}{l|ll|ll|ll|ll|ll|ll}
\hline
        & \multicolumn{2}{c|}{Complete} & \multicolumn{2}{c|}{\begin{tabular}[c]{@{}c@{}}RGG\\ r=2.5\end{tabular}} & \multicolumn{2}{c|}{\begin{tabular}[c]{@{}c@{}}RGG\\ r=2\end{tabular}} & \multicolumn{2}{c|}{\begin{tabular}[c]{@{}c@{}}RGG\\ r=1.5\end{tabular}} & \multicolumn{2}{c|}{\begin{tabular}[c]{@{}c@{}}RGG\\ r=1\end{tabular}} & \multicolumn{2}{c}{Ring} \\ \hline
        & Avg          & \# Comm.           & Avg                                & \# Comm.                                & Avg                               & \# Comm.                               & Avg                                & \# Comm.                                & Avg                               & \# Comm.                               & Avg         &\#  Comm.        \\ \hline
VFL     &              0.430& 10.6           &                                    0.406& 7.4&                                   0.407& 5.2&                                    0.375& 3.5&                                   0.386& 2.0                                &             0.385&             1.4\\ \cdashline{1-13}
4-MACL    &              0.591	& 42&                                    0.572& 29&                                   0.555& 20.4&                                    0.517& 14.8                                &                                   0.488& 7.98&             0.485&             5.6\\
4-MACL-G2  &              0.687& 126&                                    	0.661& 87&                                   0.623& 61.2&                                    0.566	& 44.8&                                   0.491& 23.94&             0.484&            16.8 \\
MACL    &              0.594	& 168.5&                                    0.581& 114.9&                                   0.558& 80.8&                                    0.528& 58.7                                &                                   0.503& 33.5&             0.507&             22.7\\
MACL-G4    &              0.732	& 836.2&                                    0.728& 572.1&                                   0.721& 407.2&                                    0.689& 293.9                                &                                   0.62& 168.2&             0.558&             113.4
\end{tabular}}
\end{table*}

\paragraph{Best, Worst and Select Any Metrics} 
To better understand our methods, particularly the gossip aggregations, we show the results for all four metrics on 3 datasets for 50$\%$ communication fault rate on a complete network in \Cref{tab:best-models-complete-comm}.
In the Appendix, \Cref{tab:best-models-complete-comm} also includes 30$\%$ communication fault rate.  
The dropout rate during training for PD-VFL and the communication dropout rate for CD-MACL is the same at $30\%$.
Results align with our theoretical analysis, supporting gossip's benefits for ML performance and reduced client variability (evidenced by the Best/Worst oracle metric gap).
Furthermore, communication-wise dropout improves the performance of each client individually.
This enables MAGS(CD-MACL-G4) to match or significantly outperform all other approaches across the two fault rates.
Finally, Any Rand is the most challenging metric because some devices may fail, leading to random predictions.

\begin{table*}[ht!]
\centering
    \caption{Best models for 50$\%$ \textit{complete-communication} test fault rates within 1 standard deviation are bolded. More detailed results with standard deviation are shown in the Appendix.}
\vspace{0em}
    \label{tab:best-models-complete-comm}
    \resizebox{1\columnwidth}{!}{\begin{tabular}{llllll|llll|lllll}
\toprule
                                  &          & \multicolumn{4}{c}{MNIST}    & \multicolumn{4}{c}{SCMNIST}  & \multicolumn{4}{c}{CIFAR10}   &  \\ 
                                  \cmidrule(lr){3-6}
                                  \cmidrule(lr){7-10}
                                  \cmidrule(lr){11-14}
                                 &          & 
                                 \multicolumn{3}{c}{Active}    & Any &
                                 \multicolumn{3}{c}{Active}    & Any &
                                 \multicolumn{3}{c}{Active}    & Any 
                                 \\
                                 \cmidrule(lr){3-5}
                                 \cmidrule(lr){6-6}
                                 \cmidrule(lr){7-9}
                                 \cmidrule(lr){10-10}
                                 \cmidrule(lr){11-13}
                                 \cmidrule(lr){14-14}
                                 &          & 
                                 Worst   & Rand  & Best  & Rand &
                                 Worst   & Rand  & Best  & Rand &
                                 Worst   & Rand  & Best  & Rand &
                                 \\
\midrule
\multirow{5}{3em}{Test Fault Rate = 0.5} & VFL      & nan & 0.294   & nan   & nan   & nan & 0.258   & nan   & nan   & nan & 0.181   & nan   & nan   &  \\  
                                  & PD-VFL      & nan & 0.488   & nan   & nan   & nan & 0.424   & nan   & nan   & nan & 0.263   & nan   & nan   &  \\
                                  & 4-MACL-G2      & 0.564 & 0.612   & 0.721   & 0.423   & 0.466 & 0.519   & 0.620   & 0.368   & 0.251 & 0.303   & 0.387   & 0.228   &  \\
                                  & MACL     & 0.042 & 0.518 & 0.966 & 0.313 & 0.035 & 0.465 & \textbf{0.925}& 0.280 & 0.007 & 0.264 & \textbf{0.762}& 0.182 &  \\
                                  & MACL-G4  & 0.843 & 0.847 & 0.851 & 0.474 & 0.676 & 0.680 & 0.684 & 0.389 & 0.402 & 0.401 & 0.413 & 0.252 &  \\
                                  & CD-4-MACL-G2  & 0.863 & 0.852 & 0.923 & \textbf{0.581} & 0.693 & 0.691 & 0.763 & \textbf{0.482} & 0.392 & 0.422& 0.499 & 0.305 &  \\
                                  & MAGS(CD-MACL-G4) & \textbf{0.974} & \textbf{0.975} & \textbf{0.976} & 0.538 & \textbf{0.785} & \textbf{0.786} & 0.787 & 0.443 & \textbf{0.501} & \textbf{0.504} & 0.507 & \textbf{0.301} & \\
                                  \bottomrule
\end{tabular}
}
\vspace{-0.7em}
\end{table*}

\section{Discussion and Conclusion} 
\label{sec:discussion}

In this paper, our focus was on the machine learning aspects of decentralized CL, with a level of analysis (e.g., communication cost, device simulation) comparable to other studies in HFL \citep{mcmahan2017communication, wang2022communication, crawshaw2024federated} and VFL \citep{li2023fedvs, castiglia2023flexible, jin2021cafe, jiang2022vf}.
MAGS aims for robust performance under extreme faults in safety-critical, machine-generated IoT/Sensor Network applications (e.g., search and rescue), where the network must operate even with near-catastrophic failures (e.g., 50\% device loss).  Thus, privacy is not the primary objective for these applications.
While we briefly discuss some system-level aspects in the Appendix, real-world deployment necessitates further research.
Additionally, we assume synchronous communication, but asynchronous models could be a future extension for \contextAbv context. 
The Appendix also compares MAGS to fault-tolerant consensus algorithms like Paxos or Raft \citep{lamport2001paxos,ongaro2014search}, showing their insufficiency for \contextAbv .
While our focus is robustness,  orthogonal privacy-preserving approaches (e.g., blockchain-based\citep{li2023fedvs}, homomorphic encryption\citep{tran2024differentially}) can be integrated with MAGS.

Further research could explore the impact of faults during training. 
While our method can handle faults in the backward propagation phase (see Appendix), a thorough analysis, especially with high fault rates during training, remains open.  
Here, we focused on near-catastrophic faults at \emph{inference}. 
Additionally, while we prioritized robustness, advanced architectures (e.g., CNNs, transformers) could improve absolute performance. 
We believe MAGS is adaptable to various architectures, offering an orthogonal contribution to architecture design.

In summary, we carefully defined \contextAbv, proposed and theoretically analyzed \methodSf as a method for \contextAbv, developed a testbed, and evaluated and compared performance across various fault models and datasets.
Simulated faults via dropout increase the robustness of MAGS to distribution shifts.
MACL allows MAGS to avoid catastrophic faults since any device (including active parties) can fail.
Gossiping outputs at inference time implicitly ensembles the predictions for neighboring devices that theoretically improve the robustness and reduce the variance.
Our work lays the foundation for \contextAbv, opening up many directions for future research, such as handling heterogeneous devices or models, exploring new architectures, and considering different fault models.

\section*{Acknowledgments}
S.G., Z.Z., C.B., and D.I. acknowledge support from ONR (N00014-23-C-1016).  S.G., Z.Z., and D.I. acknowledge support from NSF (IIS-2212097) and ARL (W911NF-2020-221).

\clearpage

\bibliography{references}
\bibliographystyle{plainnat}

\newpage
\appendix
\onecolumn

\addcontentsline{toc}{section}{Appendix} \part{Appendix} \parttoc \title{Appendix}
\author{John Doe}
\maketitle

In the main paper, the following information were listed to be provided in the Appendix section and they can be found in the Appendix per the listing below:
\begin{enumerate}
    \item More detailed version of the related works from \cref{sec:related-works} can be found in \cref{sec:related-works-Appendix}
    \item Proofs for the propositions from \cref{sec:methods} can be found in Section \ref{sec:proofs}
    \item Relation to GNN from Section \ref{sec:distributed-inference} can be found in Section \ref{disc_Lim}
    \item Analysis for different number of devices/clients from Section \ref{sec:experiment-setup} can be found in Section \ref{subsec:DifDevices}
    \item Specifics on model architecture from Section \ref{sec:experiment-setup} can be found in Section \ref{Training}
    \item Further analysis on gossiping methods from Section \ref{sec:experiment} can be found in Section \ref{subsec:AblationStudies}
    \item Grid graph construction details from Section \ref{sec:experiment-setup} can be found in Section \ref{Graph_det}
    \item Further study on different drop out rates for Community-wise and Party-wise drop out from \cref{sec:experiment} can be found in Section \ref{subsec:TrainingF}
\item Further details on metrics from Table \ref{tab:best-models-complete-comm} can be found in Section \ref{Subsec:OtherMetric}
    \item Extension of Discussion from \cref{sec:experiment} can be found in Section \ref{disc_Lim}
    \item Illustration of PD and CD for MVFL, as mentioned in \cref{sec:methods} can be found in \cref{CD_PD_MVFL}
    \item Extended form of Communication and Performance analysis table from \cref{sec:experiment} is shown in \cref{subsec:PerformanceCommTradeOff}
    \item Elaborate version of Different Test Fault Rates and Patterns results from \cref{sec:experiment} can be found at \cref{Subsec:Testing_Fault_Extension}
    \item Justification of using a particular choice of K in \cref{sec:experiment-setup} is presented in \cref{subsec:AblationStudies}

\end{enumerate}

\paragraph{Notation Clarification:} In the main section of the paper, we denote the idea of using multiple aggregators as MACL ( Multiple Aggregators CL).
However, in the Appendix, this same idea is captured by MVFL (Multiple VFL).
The reason for making this change was that VFL as a CL method emphasizes privacy, while we are more focused on the robustness aspects of CL. 
Thus, to avoid confusion due to the nomenclature, we changed MVFL to MACL in the main paper, but owing to a lack of time, we follow the MVFL nomenclature in the Appendix.

\section{Extension to Related works}
\label{sec:related-works-Appendix}

\paragraph{Network Dynamic Resilient  FL}
Network dynamics is an important consideration for FL. While this topic has received attention in HFL \citep{wang2022unified,gu2021fast,ruan2021towards,liu2021blockchain,chen2021towards,van2020asynchronous}, the methods are not directly applicable in VFL due to difference in context.
In VFL network dynamics has been predominantly studied during the training phase. \citet{chen2020vafl,zhang2021secure,li2020efficient} studied model training with asynchronous client participation. 
\citet{li2023fedvs} noted that asynchronous VFL performance degradation is due to staleness of model updates, and proposed FedVS.

Literature on VFL network dynamics during test/inference is sparse. 
\citet{ceballos2020splitnn} studied distributed vertical features and acknowledged the degradation in performance due to clients dropping randomly during testing
\citet{sun2023robust} studied clients dropping off randomly during testing and proposed Party-wise Dropout (PD). 
The server, referred to as the active party, was assumed not to fault either during testing or training. 
Other participating clients were called the passive party. 
The PD method relies on using a model that is trained with randomly omitting representations from passive parties, and this was shown to provide robustness only against the unexpected exit of passive parties during testing.
Thus, dynamic network resilient VFL methods studied thus far have predominantly focused on train-time faults and have assumed a special node (server or active party) that is immune to failure.

\paragraph{Decentralized FL}
Conventional FL uses a central server for aggregation. However, this approach results in server being the single point of failure and it being a communication bottleneck. 
To address these limitations, Decentralized FL has  been considered. 
In literature, this has also been designated as Decentralized FL, peer-to-peer FL, server free FL, device-to-device FL, etc\citep{yuan2023decentralized}. 
HFL Decentralized methods have been studied quiet extensively \citet{tang2022gossipfl,lalitha2019peer,feng2021blockchain}.
Although gossip and peer to peer learning based methods are viable options for Decentralized HFL, they are not, by themselves, considered reliable when network conditions are dynamic \citep{gabrielli2023survey}.
Nonetheless, we borrow some ideas from such protocols in our proposed methods. 

A decentralized algorithm, COLA, for training linear models over vertically split dataset was proposed by \citet{he2018cola}. 
Due to the setup, COLA was not applied to learn more general models.
\citet{valdeira2023multi} proposed decentralized (STCD) and semi-decentralized (MTCD) methods for VFL. 
While STCD, unlike MTCD, does not rely on a special server node, it's performance compared to other algorithms \citep{alghunaim2021dual} decreases rapidly with increase in network connectivity. 
Neither STCD nor COLA considers network dynamics during inference time.

\section{\methodSf Proposed Techniques}
\label{sec:proposed_techniques}
In \cref{IL BaseLines} we provide a visual representation of the techniques used in \methodSf. 
Furthermore, we provide a representation of VFL in \cref{IL BaseLines} and show how each of the techniques build upon the VFL setup.

\section{Proofs}
\label{sec:proofs}

\subsection{Proof of \Cref{prop:k-MVFL}}

Before we prove the proposition, we will prove the following lemma about the conditional probability of a client being selected given a known active set size.
\begin{lemma}[Conditional Client Selection Probability]
\label{thm:conditional-selection-prob}
    The conditional probability of selecting each data aggregator $k \in \mathcal{K} \cup\{\emptyset\}$ (aggregators plus the possible fake aggregator $k=\emptyset$) given a current active set size $|\mathcal{A}|$ is as follows:
    \begin{align}
        p(S=k \given |\mathcal{A}| ) = \begin{cases}
            \frac{1}{K}, &\text{if $|\mathcal{A}| > 0$ and $k \in \{1,\dots,K\}$} \\
            1, &\text{if $|\mathcal{A}| =0$ and $k = \emptyset$} \\
            0, &\text{otherwise} 
        \end{cases} \,.
    \end{align}
\end{lemma}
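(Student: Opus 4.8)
The plan is to prove \Cref{thm:conditional-selection-prob} by unpacking the definition of the client selection function $h_\mathrm{\averageAbv}$ and the active set $\mathcal{A}$. Recall that $h_\mathrm{\averageAbv}$ "randomly selects an active client," which we interpret as a uniform draw over $\mathcal{A}$ when $\mathcal{A}$ is nonempty, and that when no aggregator is active we must fall back to the "fake" aggregator $k=\emptyset$ representing a (random) default prediction. The claim then separates into three mutually exclusive cases, and the proof is essentially a case analysis verifying each line of the piecewise formula.

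First I would handle the case $|\mathcal{A}| > 0$. Here the external entity picks one of the active aggregators uniformly at random. A key modelling observation needed to get exactly $\tfrac{1}{K}$ (rather than $\tfrac{1}{|\mathcal{A}|}$) is that $S$ is sampled by first drawing an index uniformly from all $K$ aggregators $\mathcal{K}$ and then, conditioning on the draw landing in the active set — or, equivalently, that by symmetry every aggregator $k \in \{1,\dots,K\}$ is \emph{a priori} equally likely to be the one reported, \emph{marginalizing over which aggregators happen to be active}. That is, the conditioning event in the lemma is only on the \emph{size} $|\mathcal{A}|$, not on the identity of the active set; by exchangeability of the $K$ aggregators under the i.i.d.\ device-fault model, conditioning on $|\mathcal{A}|=m>0$ leaves each of the $K$ aggregators equally likely to be the selected one, hence probability $\tfrac{1}{K}$ each. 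I would state this symmetry explicitly and, if needed, back it with the short computation $p(S=k \mid |\mathcal{A}|=m) = \sum_{A : |A|=m} p(\mathcal{A}=A \mid |\mathcal{A}|=m)\, p(S=k\mid \mathcal{A}=A)$, using $p(S=k\mid \mathcal{A}=A) = \tfrac{1}{m}\mathbf{1}[k\in A]$ and the fact that, by symmetry, $\Pr(k \in \mathcal{A} \mid |\mathcal{A}|=m) = \tfrac{m}{K}$, giving $\tfrac{1}{m}\cdot\tfrac{m}{K} = \tfrac{1}{K}$.

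Next I would dispatch the remaining cases. If $|\mathcal{A}| = 0$, there is no active aggregator to report, so by definition the output defaults to the fake aggregator $k=\emptyset$ with probability $1$; hence $p(S=\emptyset \mid |\mathcal{A}|=0) = 1$ and $p(S=k \mid |\mathcal{A}|=0) = 0$ for all $k \in \{1,\dots,K\}$. The "otherwise" line collects the impossible combinations: $S=\emptyset$ cannot occur when $|\mathcal{A}|>0$ (an active aggregator is always selected), and $S=k$ for $k \in \{1,\dots,K\}$ cannot occur when $|\mathcal{A}|=0$; both have probability $0$. Finally I would note for completeness that the probabilities sum to one in each case ($K \cdot \tfrac{1}{K} = 1$ when $|\mathcal{A}|>0$, and $1$ when $|\mathcal{A}|=0$), confirming it is a valid conditional distribution.

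The main obstacle is not any hard calculation but pinning down the right probabilistic model so that the answer is $\tfrac{1}{K}$ independent of $|\mathcal{A}|$ — i.e.\ justifying the exchangeability/symmetry step cleanly and making sure the conditioning is genuinely only on $|\mathcal{A}|$ and not on the identity of $\mathcal{A}$. Once that symmetry is articulated, the rest is a routine case check. This lemma will then feed into the proof of \Cref{prop:k-MVFL} by letting us write the dynamic risk as a mixture over $|\mathcal{A}|$, where the $|\mathcal{A}|=0$ event (probability $r^K$ under the independent device-fault model) contributes the random-predictor risk and the complementary event contributes an average of per-aggregator risks, each lower-bounded by the no-fault risk.
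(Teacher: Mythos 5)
Your proposal is correct and follows essentially the same route as the paper: a case analysis on $|\mathcal{A}|$, with the nontrivial case resolved by computing $p(S=k \mid |\mathcal{A}|=m) = \Pr(k \in \mathcal{A} \mid |\mathcal{A}|=m)\cdot \frac{1}{m} = \frac{m}{K}\cdot\frac{1}{m} = \frac{1}{K}$; the paper derives $\Pr(k\in\mathcal{A}\mid|\mathcal{A}|=m)=\frac{m}{K}$ by the subset-counting ratio $\binom{K-1}{m-1}/\binom{K}{m}$, which is exactly the exchangeability argument you articulate. The only difference is presentational (you sum over active sets, the paper marginalizes over the indicator $A_k$), so there is nothing substantive to add.
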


\begin{proof}[Proof of \Cref{thm:conditional-selection-prob}]
Let $S$ denote the final selected index.
Let $A_k$ denote whether $k$ is in the active set (assuming device or communication fault models).
First, we derive the probability of selection for a specific active set size $\asize$.
We notice that $p(S=\emptyset \given |\mathcal{A}| = 0) = 1$ (i.e., the fake client is selected if there are no active real clients) and $p(S=\emptyset \given |\mathcal{A}| > 0) = 0$ (i.e., the fake client is never selected if there is at least one active client.
Similarly, $p(S=k \given |\mathcal{A}| = 0) = 0, \forall c$ because there are no active clients.
The last remaining case is when $|\mathcal{A}|>0$ and $\asize \neq \emptyset$, i.e., $p(S=\asize \given |\mathcal{A}| > 0), \forall \asize \neq \emptyset$, which we derive as $\frac{1}{K}$ below:
\begin{align}
    &p(S=k \given |\mathcal{A}| = \asize) \\
    &=\sum_{a=0}^1 p(S=k, A_k=a \given |\mathcal{A}| = \asize) \label{eqn:joint-marginal} \\
    &=\sum_{a=0}^1 p(A_k=a \given |\mathcal{A}| = \asize) p(S=k \given A_k=a, |\mathcal{A}| = \asize) \\
    &=p(A_k=1 \given |\mathcal{A}| = \asize) p(S=k \given A_k=1, |\mathcal{A}| = \asize) + p(A_k=0 \given |\mathcal{A}| = \asize) p(S=k \given A_k=0, |\mathcal{A}| = \asize)\\
    &=p(A_k=1 \given |\mathcal{A}| = \asize) p(S=k \given A_k=1, |\mathcal{A}| = \asize) + p(A_k=0 \given |\mathcal{A}| = \asize) \cdot 0 \label{eqn:zero-prob-selected} \\
    &=p(A_k=1 \given |\mathcal{A}| = \asize) p(S=k \given A_k=1, |\mathcal{A}| = \asize)\\
    &=\left(\frac{\asize}{K} \right) \left(\frac{1}{\asize}\right) \label{eqn:prob-of-subsets}\\
    &=\frac{1}{K} \,.
\end{align}
where \eqref{eqn:joint-marginal} is by marginalization of joint distribution, \eqref{eqn:zero-prob-selected} is by noticing that if the device is not active, then it will not be selected, and \eqref{eqn:prob-of-subsets} is by the uniform distribution for Select Active $h$ and by noticing that 
\begin{align}
    p(A_k=1 \given |\mathcal{A}| = \asize) = \frac{\text{Num. subsets of size $\asize$ with $c$ in them}}{\text{Num. subsets of size $\asize$}} = \frac{\binom{K-1}{\asize-1}}{\binom{K}{\asize}} = \frac{K}{\asize}
\end{align}
where the numerator can be thought of as finding all possible subsets of size $\asize-1$ from $K-1$ clients (where client $c$ has been removed) and then adding client $c$ to get a subset of size $\asize$.

Putting this altogether we arrive at the following result for the probability of selection given various sizes of the active set:
\begin{align}
    p(S=k \given |\mathcal{A}| \, ) = \begin{cases}
        \frac{1}{K}, &\text{if $|\mathcal{A}| > 0, c \in \{1,\dots,K\}$} \\
        1, &\text{if $|\mathcal{A} =0, c = \emptyset$} \\
        0, &\text{otherwise} 
    \end{cases}
\end{align}
\end{proof}

Given this lemma, we now give the proof of the proposition.

\begin{proof}[Proof of \Cref{prop:k-MVFL}]
Let $S \in \{\emptyset,1,2,\dots,K\}$ denote a random variable that is the index of the final client prediction selected based on $h$, where $\emptyset$ denotes a fake client that represents the case where a non-active client is selected (which could happen in Select Any Client $h$ or if no clients are active for Select Active Client $h$).
The output of this fake client is equivalent to the marginal probability of $Y$ since the external client would know nothing about the input and would be as good as random guessing.
Furthermore, let $\Psi_S$ denote the $S$-th client's prediction.
Given this notation, we can expand the risk in terms of $S$ instead of $h$:
\begin{align}
    &R_h(\theta; \mathcal{G}(t)) \\
    &=\E_{\bm{x},y,\mathcal{G}(t),h}[\ell_h(\Psi(\bm{x}; \theta, \mathcal{G}(t)), y)] \\
    &=\E_S[\E_{\bm{x},y,\mathcal{G}(t)|S}[\ell(\Psi_S(\bm{x}; \theta, \mathcal{G}(t)), y)]] \\
    &=\mathrm{Pr}(S\neq \emptyset)\E_{\bm{x},y,\mathcal{G}(t)|S\neq \emptyset}[\ell(\Psi_S(\bm{x}; \theta, \mathcal{G}(t)), y)] + \mathrm{Pr}(S=\emptyset) \E_{\bm{x},y,\mathcal{G}(t)|S=\emptyset}[\ell(\Psi_S(\bm{x}; \theta, \mathcal{G}(t)), y)] \\
    &=\mathrm{Pr}(S\neq \emptyset)\E_{\bm{x},y,\mathcal{G}(t)|S\neq \emptyset}[\ell(\Psi_S(\bm{x}; \theta, \mathcal{G}(t)), y)] + \mathrm{Pr}(S=\emptyset) R(\theta; \mathcal{G}_\mathrm{empty}) \\
    &=(1-r^K) \E_{\bm{x},y,\mathcal{G}(t)|S\neq \emptyset}[\ell(\Psi_S(\bm{x}; \theta, \mathcal{G}(t)), y)] + r^K R(\theta; \mathcal{G}_\mathrm{empty}) \\
\end{align}
where the last term is by noticing that the probability of the fake one being chosen is equivalent to $|\mathcal{A}|=0$ and thus all devices fail which would have a probability of $r^K$.
We now decompose the second term in terms of clean risk:
\begin{align}
    &\E_{\bm{x},y,\mathcal{G}(t)|S\neq \emptyset}[\ell(\Psi_S(\bm{x}; \theta, \mathcal{G}(t)), y)] \\
    &=\E_{S|S \neq \emptyset}[\E_{\bm{x},y,\mathcal{G}(t)|S}[\ell(\Psi_S(\bm{x}; \theta, \mathcal{G}(t)), y)]] \\
    &=\E_{S| \|\mathcal{A}\| > 0}[\E_{\bm{x},y,\mathcal{G}(t)|S}[\ell(\Psi_S(\bm{x}; \theta, \mathcal{G}(t)), y)]] \\
    &=\sum_k p(S=k| \|\mathcal{A}\| > 0)\E_{\bm{x},y,\mathcal{G}(t)|S}[\ell(\Psi_S(\bm{x}; \theta, \mathcal{G}(t)), y)] \\
    &=\sum_k \frac{1}{K} \E_{\bm{x},y,\mathcal{G}(t)|S}[\ell(\Psi_S(\bm{x}; \theta, \mathcal{G}(t)), y)] \label{eqn:apply-lemma}\\
    &=\sum_k \frac{1}{K} R_{h_k}(\theta; \mathcal{G}(t)) \\
    &\geq \sum_k \frac{1}{K} R_{h_k}(\theta; \mathcal{G}_\mathrm{clean}) \\
    &=\sum_k \frac{1}{K} R_{h_k}(\theta; \mathcal{G}_\mathrm{clean}) \\
    &=R_h(\theta; \mathcal{G}_\mathrm{clean}) \,,
\end{align}
where \eqref{eqn:apply-lemma} is by \Cref{thm:conditional-selection-prob}, the inequality is due to our assumption that risk on a faulty graph is less than the risk on a clean graph, and the last line is by definition of the clean risk where $h$ is $h_\mathrm{active}$.
Combining the results, we have the final result:
\begin{align}
    R_h(\theta; \mathcal{G}(t))
    &=(1-r^K) R_h(\theta; \mathcal{G}_\mathrm{clean}) + r^K R(\theta; \mathcal{G}_\mathrm{empty}) \,.
\end{align}
\end{proof}

\subsection{Proof of \Cref{prop:gossip_Ensembling}}

\begin{proof}
    We first note that using a geometric average of probabilities (implemented using log probabilities for stability) satisfies the conditions in the Generalised Ambiguity Decomposition \citet[Proposition 3]{wood2023unified} for the ensemble combiner.
    (Similarly, if the problem was regression, we could use the squared loss with an arithmetic mean ensemble combiner for gossip.)
    As a reminder, let $\Psi_k$ denote the models that output probabilities of each class and let the ensemble model be denoted as $\Psi_\mathrm{ens}$ where $\Psi^\mathrm{ens}_{k}(\bm{x}; \theta, \mathcal{G}(t)) \triangleq Z^{-1}\exp(\sum_{k'=1}^K \Psi_{k'}(\bm{x}; \theta, \mathcal{G}(t))), \forall k \in \mathcal{K}$, where $Z$ is the normalizing constant to ensure the final output is a probability vector.
    Furthermore, let $\Psi_h(\xvec; \theta, \mathcal{G}(t)) \triangleq h(\Psi(\xvec; \theta, \mathcal{G}(t)), \mathcal{G}(T))$, i.e., it is merely the postprocessing of the original $\Psi$ function with $h$.
    This allows us to interchange the $h$ between the loss function and a modified $\Psi$, i.e., $\ell_h(\Psi(\xvec; \theta, \mathcal{G}(t)), y) = \ell(\Psi_h(\xvec; \theta, \mathcal{G}(t)),y)$.
    Similarly, with a slight abuse of notation, if $\Psi$ is on both sides of the loss function, we will apply $h$ to both inputs before passing to the loss function, i.e., $\ell_h(\Psi_1(\xvec; \theta, \mathcal{G}(t)), \Psi_2(\xvec; \theta, \mathcal{G}(t))) = \ell(\Psi_{h,1}(\xvec; \theta, \mathcal{G}(t)),\Psi_{h,2}(\xvec; \theta, \mathcal{G}(t)))$.
    Given this, assuming that $h=h_\mathrm{active}$, we can decompose the risk as follows:
    \begin{align}
        &R^\mathrm{ens}_h(\theta; \mathcal{G}(t)) \\
        &=\mathbb{E}_{\bm{x},y,\mathcal{G}(t),h}[\ell_h(\Psi^\mathrm{ens}(\bm{x}; \theta, \mathcal{G}(t)), y)] \\
        &=\mathbb{E}_{\bm{x},y,\mathcal{G}(t),h}[\ell(\Psi_h^\mathrm{ens}(\bm{x}; \theta, \mathcal{G}(t)), y)] \\
        &=\mathbb{E}_{\bm{x},y,\mathcal{G}(t),h}[
        \textstyle \frac{1}{K}\sum_{k=1}^K \ell(\Psi_{h,k}(\bm{x}; \theta, \mathcal{G}(t)), y) 
        - \frac{1}{K}\sum_{k=1}^K \ell(\Psi_{h,k}(\bm{x}; \theta, \mathcal{G}(t)), \Psi^\mathrm{ens}_h(\bm{x}; \theta, \mathcal{G}(t))) ]  \\
        &=\mathbb{E}_{\bm{x},y,\mathcal{G}(t),h}[
        \textstyle \frac{1}{K}\sum_{k=1}^K \ell_h(\Psi_{k}(\bm{x}; \theta, \mathcal{G}(t)), y) 
        - \frac{1}{K}\sum_{k=1}^K \ell_h(\Psi_{k}(\bm{x}; \theta, \mathcal{G}(t)), \Psi^\mathrm{ens}(\bm{x}; \theta, \mathcal{G}(t))) ]  \\
        &=
        \textstyle R_h(\theta; \mathcal{G}(t))
        - \mathbb{E}_{\bm{x},\mathcal{G}(t),h}[\frac{1}{K}\sum_{k=1}^K \ell_h(\Psi_{k}(\bm{x}; \theta, \mathcal{G}(t)), \Psi^\mathrm{ens}(\bm{x}; \theta, \mathcal{G}(t))) ]  \\
        &\leq R_h(\theta; \mathcal{G}(t)) \,.
    \end{align}
    where the first equals is by definition, the second is by pushing the $h$ function into $\Psi$ so that we have the raw loss function $\ell$, the third equals is by \citet[Proposition 3]{wood2023unified}, the fourth is by pulling the $h$ function back out into the loss function with a slight abuse of notation where the RHS term the $h$ function is applied to both arguments before passing to the original loss function, the fifth is by noticing that the non-ensemble risk is equal to the average risk of each aggregator-specific model, and the last inequality is by noticing that hte loss function is always non-negative.
\end{proof}

\section{Device and Communication Fault Visualization}\label{Dev_ComFault}

We show in Figure \ref{Fig: FaultVisualization} the visual representation of communication and device fault under the MVFL method. Although, we present the scenario for only one method, by extension the visualization is similar for VFL, DMVFL and the gossip variants.

\begin{figure*}[!ht]
    \centering
    \begin{subfigure}[t]{0.28\linewidth}
        \centering
        \includegraphics[clip, trim=8cm 3cm 6cm 4cm,width=\linewidth]{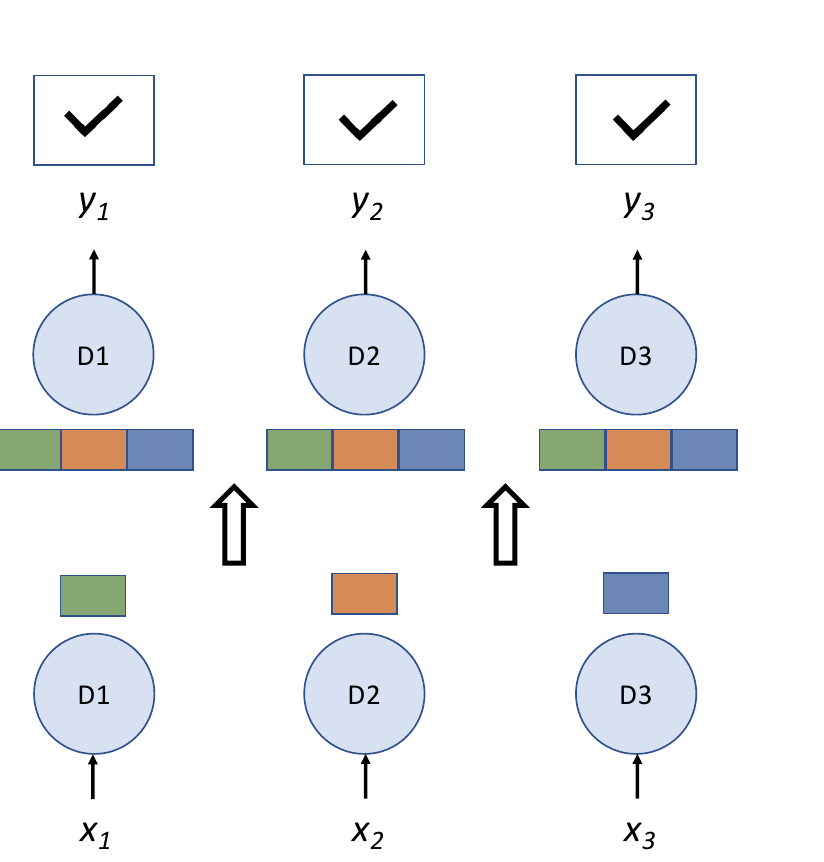}
        \caption{MVFL}
    \end{subfigure}\hspace{0.02\textwidth}
\begin{subfigure}[t]{0.28\textwidth}
        \includegraphics[clip, trim=8cm 3cm 6cm 4cm,width=\linewidth]{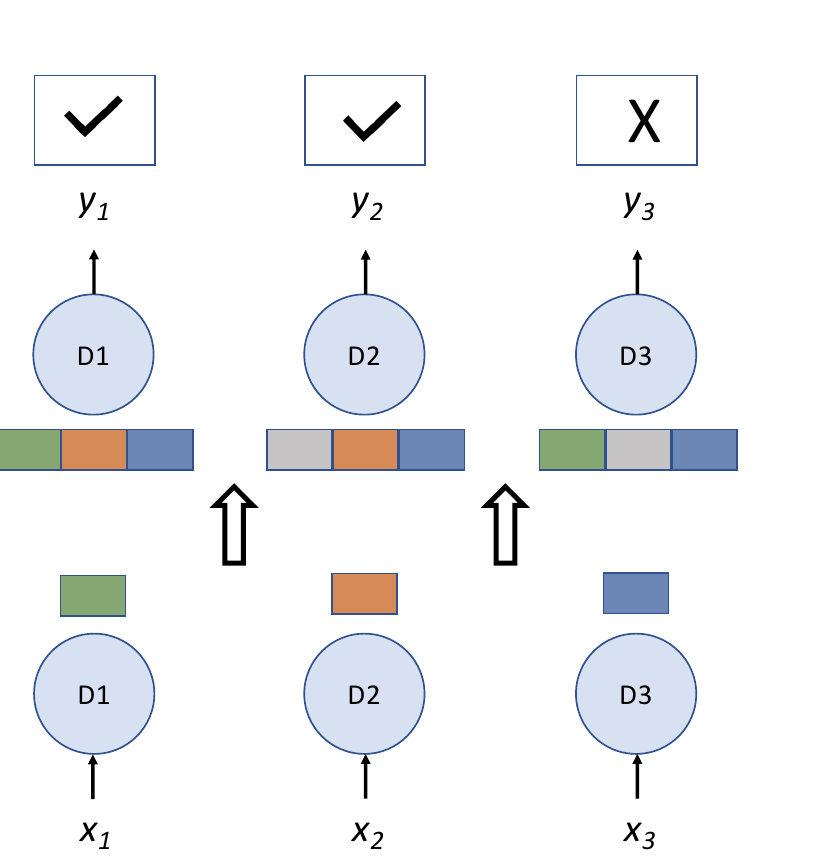}
        \caption{MVFL-Communication Fault}
    \end{subfigure}\hspace{0.02\textwidth}
\begin{subfigure}[t]{0.36\textwidth}
\includegraphics[clip, trim=8cm 3cm 2cm 4cm,width=\linewidth]{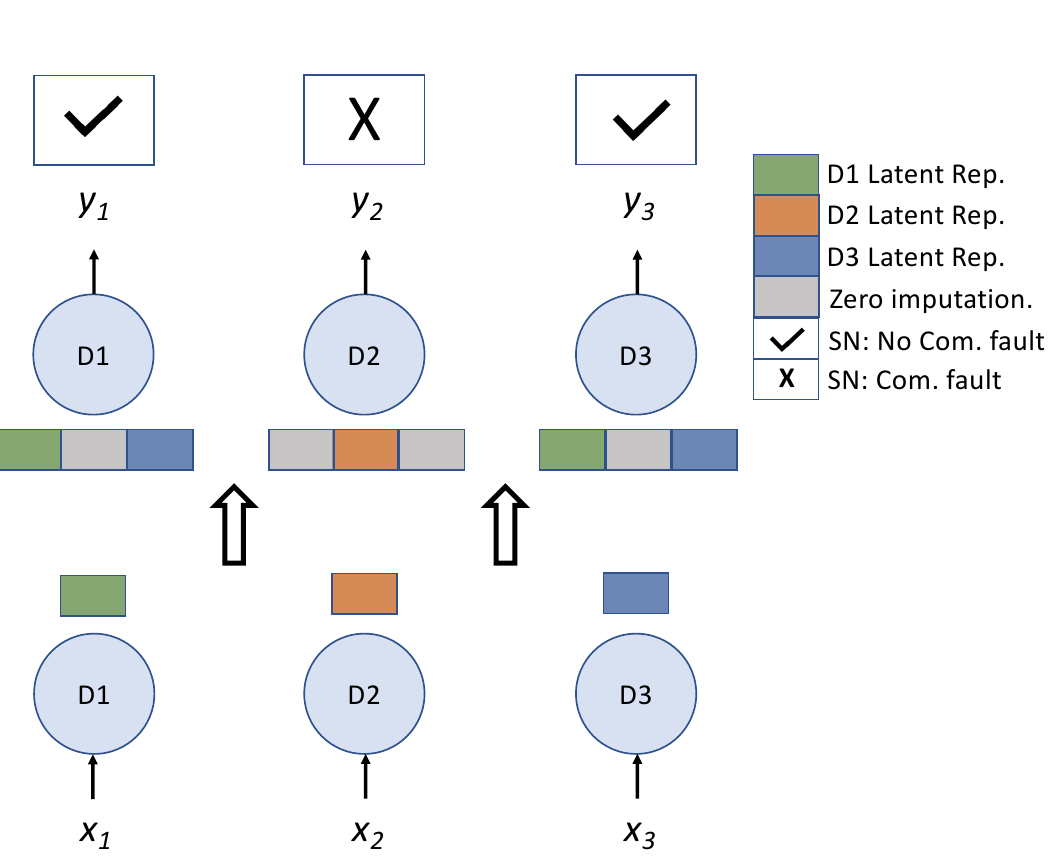}
        \caption{MVFL-Device Fault}
    \end{subfigure}
    \vspace{0em}
    \caption{
    Illustration of communication and device faults for a 3 device network for the MVFL method. (a) Fully connected MVFL setup. The check mark indicates that there is no fault in the final communication between device and special node (SN) as defined in Section \ref{sec:distributed-inference} of the main paper (b) Representation with communication faults. In this example communication from D1 to D2 and D2 to D3 is faulted. To account for the missing values, we do zero imputation. $X$ indicates that the communication between D3 and SN is faulted. Hence, the output at SN will be a class selected with uniform probability among all the classes (c) In device faults, the faulted device do not communicate with any other devices and missing values are accounted for by zero imputation. In this example, D2 is assumed to be faulted, hence the information from D2 is not passed to D1 or D3 and it does not produce an output. The output at SN for $D_2$ will be a class selected with uniform probability among all the classes }
    \label{Fig: FaultVisualization}
    \vspace{-1em}
\end{figure*}

\section{Party-wise Dropout and Communication-wise Dropout MVFL}\label{CD_PD_MVFL}
In \cref{sec:methods} of the main paper, we presented the Party-wise and communication-wise Dropout method for MVFL. 
In \cref{Fig: PD_CD_MVFL} we represent CD-MVFL and PD-MVFL for a group of 3 devices.

\begin{figure*}[!ht]
    \centering
    \begin{subfigure}[b]{0.28\linewidth}
        \centering
        \includegraphics[clip, trim=8cm 3cm 6cm 4cm,width=\linewidth]{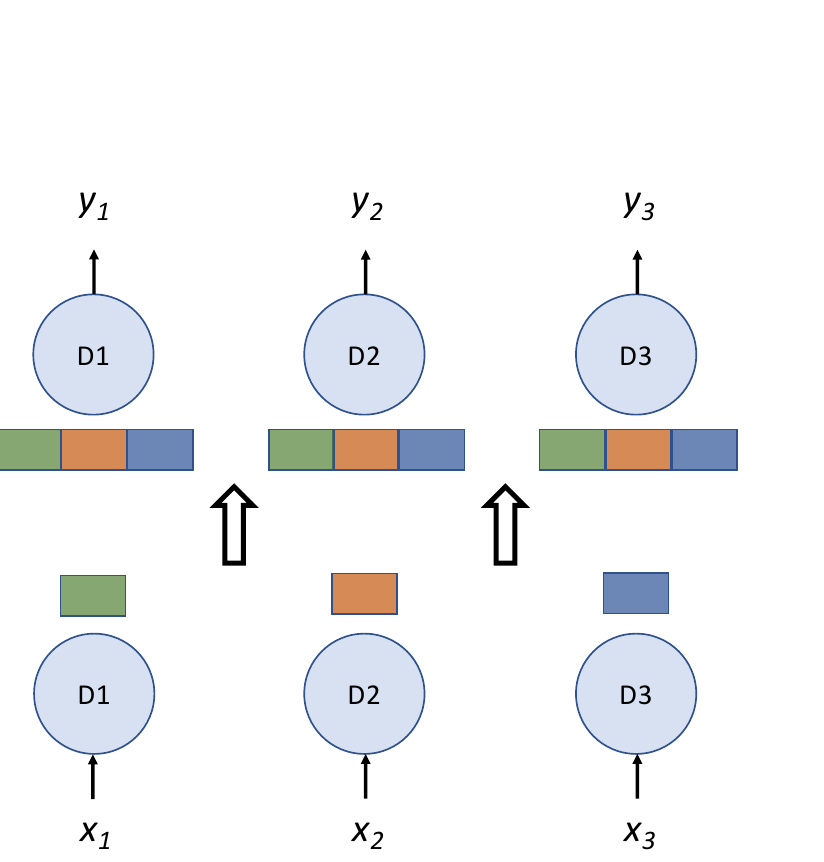}
        \caption{MVFL}
    \end{subfigure}\hspace{0.02\textwidth}
\begin{subfigure}[b]{0.28\textwidth}
        \includegraphics[clip, trim=8cm 3cm 6cm 4cm,width=\linewidth]{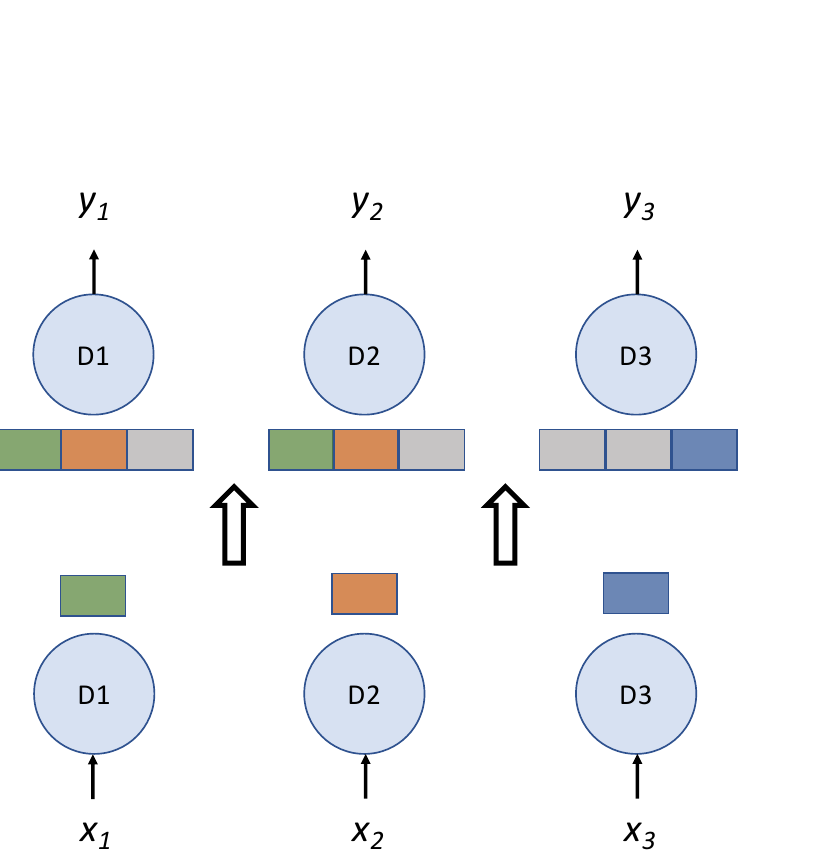}
        \caption{PD-MVFL}
    \end{subfigure}\hspace{0.02\textwidth}
\begin{subfigure}[b]{0.36\textwidth}
\includegraphics[clip, trim=8cm 3cm 2cm 4cm,width=\linewidth]{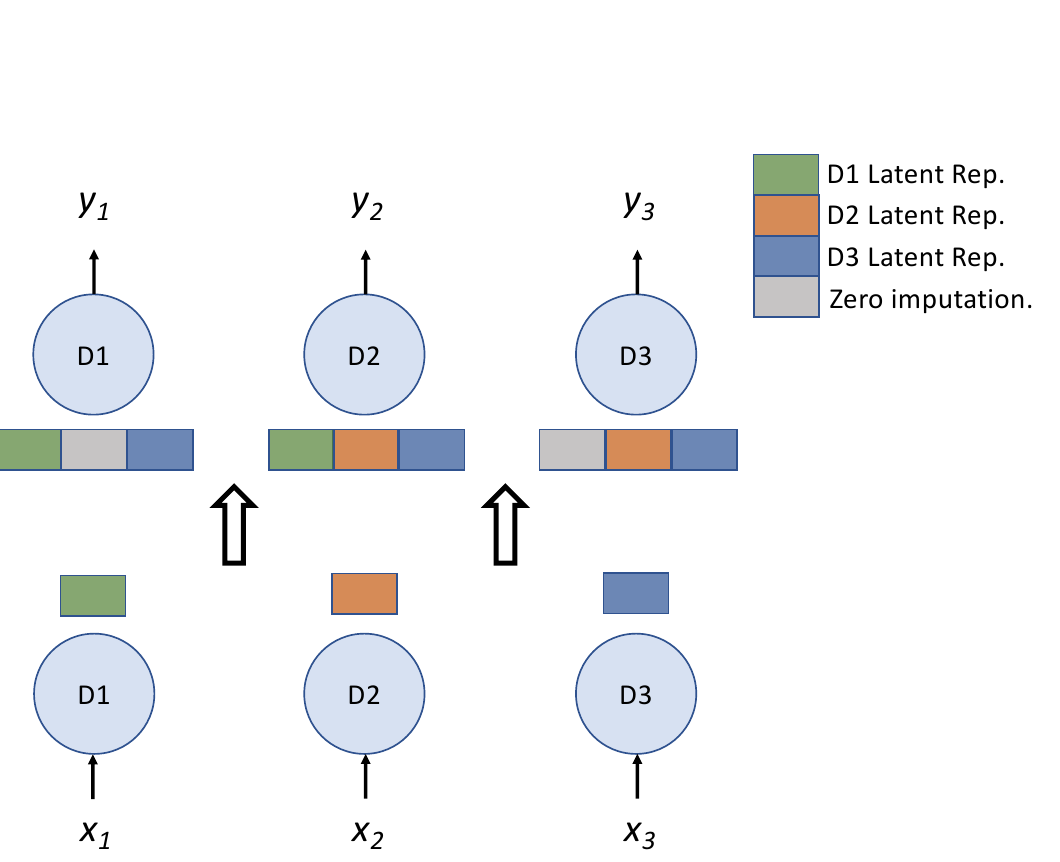}
        \caption{CD-MVFL}
    \end{subfigure}
    \vspace{0em}
    \caption{
    Illustration of Party wise and Communication wise dropout for a 3 device network for the MVFL method. (a) Fully connected MVFL setup. (b) For Party wise Dropout (PD), during training if D3 is dropped then none of the devices gets representations from D3 and the missing values are imputed by zeros. (c) In communication wise Dropout (CD) certain representations are omitted during training. In this example representations from from D2 to D1 and D1 to D3 are omitted by design during the training.}
    \label{Fig: PD_CD_MVFL}
    \vspace{-1em}
\end{figure*}

\section{Deep MVFL}\label{D_MVFL}
In \cref{sec:methods} of the main paper, we presented a few innovations that introduces redundancy in the system.
Extending the MVFL setup, we propose another variant, Deep MVFL (DMVFL), which stacks MVFL models on top of each other and necessitates multiple rounds of communication between devices for each input. We believe that multiple communication rounds and deeper processing could lead to more robustness on dynamic networks. Comparing \cref{IL_DMVFL_BaseLines} (b) and (c), the setup is same till $L_2$ but in DMVFL, there is an additional round of communication in $L_3$ following which the predictions are made. In \cref{IL_DMVFL_BaseLines} (c) we have illustrated DMVFL with just one additional round of communication over MVFL and hence the depth of DMVFL is 1. Nonetheless, the depth in DMVFL need not be restricted to 1 and is a hyperparameter. Furthermore, to guarantee a fair comparison between DMVFL and MVFL, it was ensured that the number of parameters for both these setups be the same.

In DMVFL the redundancy is over depth, However, based on our experiments we did not observe a significant performance gain and hence did not present it in the main paper.

\begin{figure*}
    \centering
    \begin{subfigure}[t]{0.22\linewidth}
        \centering
        \includegraphics[clip, trim=8cm 3cm 10cm 3cm,width=\linewidth]{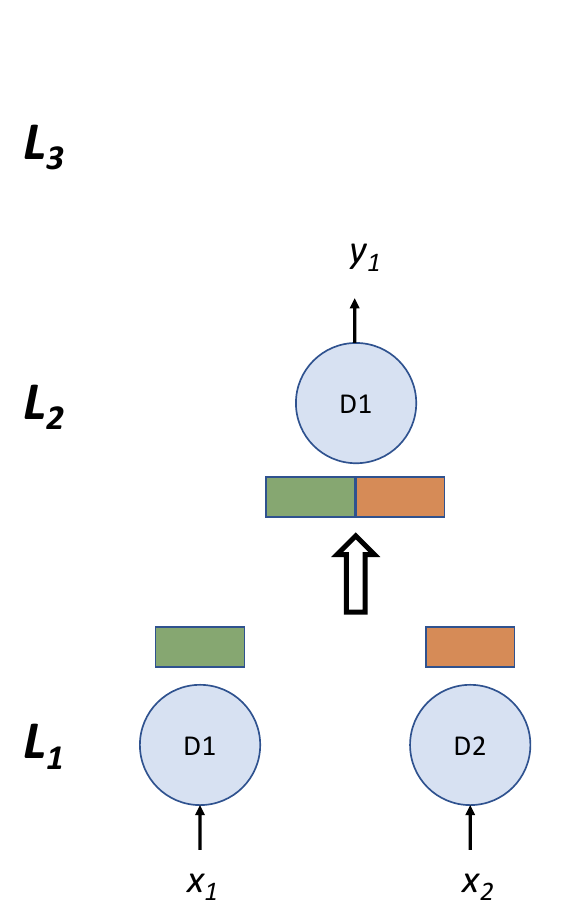}
        \caption{VFL}
    \end{subfigure}\hspace{0.02\textwidth}
\begin{subfigure}[t]{0.2\textwidth}
        \includegraphics[clip, trim=9cm 3cm 10cm 3cm,width=\linewidth]{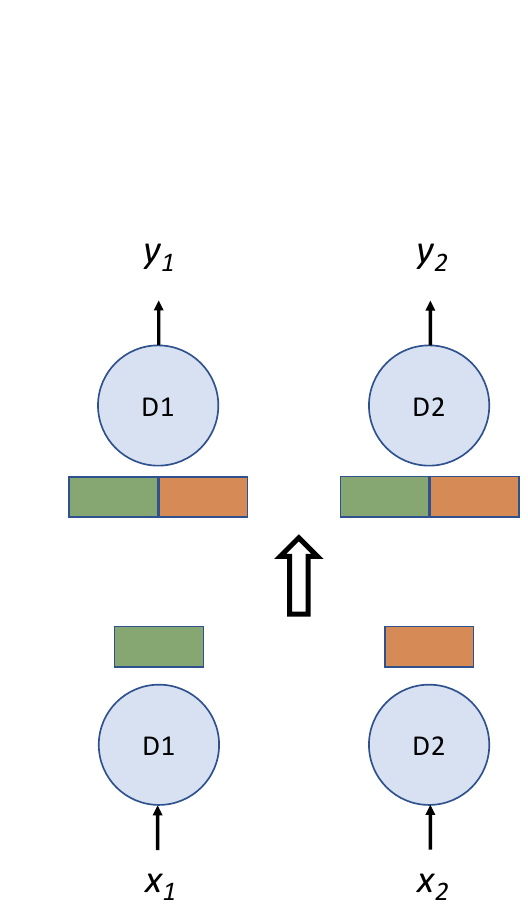}
        \caption{MVFL}
    \end{subfigure}\hspace{0.02\textwidth}
\begin{subfigure}[t]{0.2\textwidth}
        \includegraphics[clip, trim=9cm 3cm 10cm 3cm,width=\linewidth]{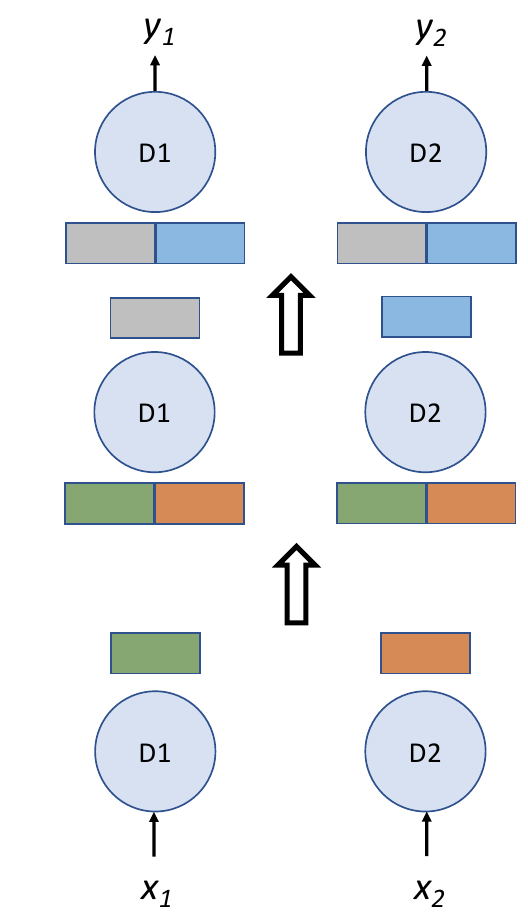}
        \caption{DMVFL}
    \end{subfigure}\hspace{0.02\textwidth}
\begin{subfigure}[t]{0.2\textwidth}
        \includegraphics[clip, trim=9cm 3cm 10cm 3cm,width=\linewidth]{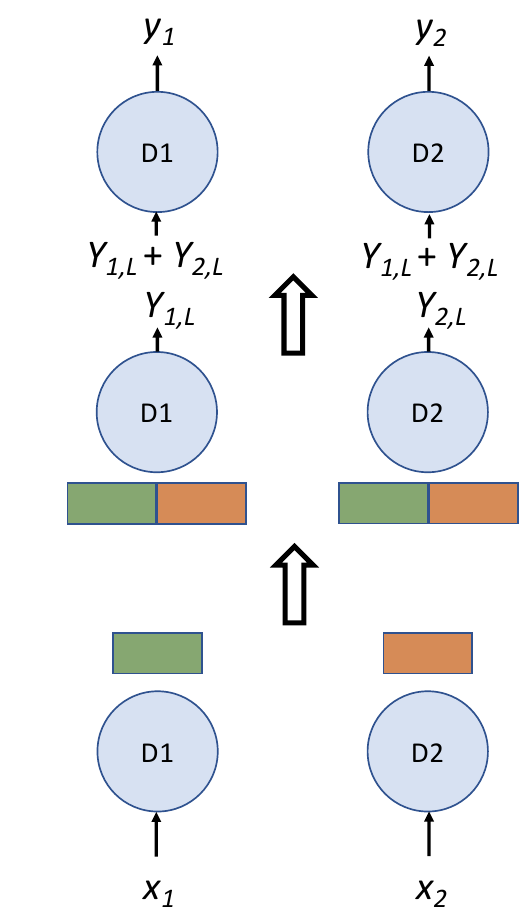}
        \caption{MVFL-G}
    \end{subfigure}\hspace{0.02\textwidth}
    \vspace{-1em}
    \caption{
    VFL as a baseline and the proposed innovations are illustrated for a network of two fully connected devices, D1 and D2. (a)VFL setup with D1 acting as a client as well as the aggregating server. The input to the devices at the first layer $L_1$ are $x_1$ and $x_2$ and output are the latent representations. The input to the server on the second layer $L_2$ is the concatenated latent representation and the output is the prediction $y_1$ (b) MVFL arrangement has both the devices acting as servers aside from being clients. (c) DMVFL has a similar arrangement as MVFL, expect that there is an additional layer of processing, $L_3$, that has the concatenated features from the previous layer as an input and the output are the predictions. (d) MVFL-G is an extension of MVFL wherein the output log probabilities ($Y_{i,L}$) from each device are averaged before being used for final prediction.}
    \label{IL_DMVFL_BaseLines}
    \vspace{-1em}
\end{figure*}

\section{Further Discussion and Limitations}\label{disc_Lim}
\paragraph{Distributed Inference Algorithm's Resemblance to GNNs:}
The form of our distributed inference algorithm in the main paper has a superficial resemblance of the computation of graph neural networks (GNN) \citep{scarselli2008graph} but with important semantic and syntactic differences.
Semantically, unlike GNN applications whose goal is to predict global, node, or edge properties based on the graph edges, our goal is to do prediction well given \emph{any arbitrary} edge structure.
Indeed, the edges in our dynamic network are assumed to be independent of the input and task---rather they are simply constraints based on the network context of the system.
Syntactically, our inference algorithm differs from mainstream convolutional GNNs because convolutional GNNs share the parameters across clients (i.e., $\theta_c^{(t)} = \theta^{(t)}$) whereas in our algorithm the parameters at each client are \emph{not shared} across clients (i.e., $\theta_c^{(t)} \neq \theta_{c'}^{(t)}$).
Additionally, most GNNs assume the aggregation function $g$ is permutation equivariant such as a sum, product or maximum function.
However, we assume $g$ could be any aggregation function.
Finally, this definition incorporates the last processing function $h$ that represents the final communication round to an external entity (Main Paper Section \ref{sec:distributed-inference}).

\paragraph{Limitations:} In recent times, a key consideration for distributed learning paradigms, such as Federated Learning, has been to ensure that clients or devices data remains private. Given the distributed nature and the ability of the devices/clients in Decentralized VFL to interact with each other, we believe that, though not the primary property, privacy and security of Decentralized VFL is an important direction for future work. Furthermore, the experiments in the main paper were simulated with no communication latency. However, it is a salient practical consideration. Therefore, in the future it will be crucial to consider this and develop methods that can accommodate for asynchronous or semi-synchronous updates.

\paragraph{Alternative Approach using Fault-Tolerant Consensus Algorithms}
Instead of direct replication via MVFL, one alternative fault-tolerant approach would be to first run a fault-tolerant consensus algorithm such as Paxos \citep{lamport2001paxos} or Raft \citep{ongaro2014search} and then run standard VFL inference with the elected leader.
This could reduce the communication load during distributed inference but would reduce the robustness or increase latency compared to \methodAbv.
For example, Paxos may fail or wait indefinitely for extreme fault rates near 50\%.

Additionally, Paxos would increase the latency as consensus would need to be arrived before continuing.
On the other hand, \methodAbv would provide an answer (perhaps degraded but that is expected) with the same latency no matter the percentage of faults even for more extreme faults beyond 50\%.
Secondly, we point out that accuracy performance with \methodAbv actually benefits between 2-3\% because of the ensembling of multiple devices' predictions. For example, on the grid graph in Figure 2c in the main paper, there is a 2-3\% gap between PD-MVFL (which would roughly correspond to PD+Paxos) compared to PD-MVFL-G4 (which has gossip and produces an ensembling effect). 

While distributed algorithms such as Paxos and Raft \citep{lamport2001paxos,ongaro2014search} are useful to generate consensus in a system that encounters fault, they do not enable representations that are robust to faults, which is essential for achieving good performance. Thus, they cannot be naively applied for addressing \contextAbv and more carefully constructed method like \methodAbv is required.

\section{Experiment Details}\label{ExpDet}
\subsection{Datasets}\label{DataSets}
For the experiments presented in this paper, following are the datasets that were used:

\textbf{StarCraftMNIST(SCMNIST):} Contains a total of 70,000 28x28 grayscale images in 10 classes. The data set has 60,000 training and 10,000 testing images. For experiments, all the testing images were used, 48,000 training images were used for training and 12,000 training images were used for validation study.

\textbf{MNIST:} Contains a total of 70,000 28x28 grayscale images in 10 classes. The data set has 60,000 training and 10,000 testing images. For experiments, all the testing images were used, 48,000 training images were used for training and 12,000 training images were used for validation study.

\textbf{CIFAR-10:} Contains a total of 60,000 32x32 color images in 10 classes, with each class having 6000 images. The data set has 50,000 training and 10,000 testing images. For experiments, all the testing images were used, 40,000 training images were used for training and 10,000 training images were used for validation study. 

\subsection{Graph construction}\label{Graph_det}
In the main paper as well as in the Appendix the terms client and devices are used interchangeably. In Section \ref{sec:experiment-setup} of the main paper, four different graphs were introduced: \emph{Complete},\emph{Ring}, \emph{Random Geometric} and \emph{Grid}. To elaborate how these graphs are constructed for a set of 16 clients, we take an example of an image from each of the three datasets and split it up into 16 sections, as illustrated in Figure \ref{Fig: DeviceAllocation}. 

For a \emph{Complete} graph, all the devices are connected to the server. For instance, if $D_1$ is selected as the server, then all the other devices $D_i$ for $i$ = $2$, $3$, \dots, $16$ are connected to $D_1$. To construct \emph{Grid} graph, we use compute a \emph{Distance} parameter. For \emph{Grid} graph, \emph{distance} returns true if a selected device lies horizontally or vertically adjacent to a server and only under this circumstance it is connected to the server otherwise it is not. For example, in Figure \ref{Fig: DeviceAllocation}, if $D_3$ is selected as the server, then $D_2$, $D_4$ and $D_7$ are the only devices connected to $D_3$. Another example will be, if $D_{13}$ is selected as the server, then $D_9$ and $D_{14}$ are the only ones connected to the server. 

The \emph{Ring} graph is connected by joining all the devices in a sequential order of increasing indices with the last device connected to the first. In our example it will be constructed by joining, $D_1$ with $D_2$, then $D_2$ with $D_3$ and so on and so forth with $D_{16}$ connected back to $D_1$.
Finally, \emph{Random Geometric Graph} is constructed by connecting a device to all other devices that fall within a certain radius (r) parameter. Hence a lower value of r denotes a device is connected to less devices compared to a larger value of r.

Irrespective of the base graph, \emph{Grid} or \emph{Complete}, when training or testing faults are applied to the selected base graph, during implementation it is assumed that the graph with incorporated faults stays constant for one entire batch and then the graph is reevaluated for the next batch. In our experiments, the batch-size is taken to be 64.

Furthermore, in Figure \ref{Fig: Easy_Hard} we highlight a few examples, to illustrate with MNIST images, why in some cases it is easy to distinguish between images based on partial information and while in other situations, it is not. Thus, device connectivity plays a crucial role in enabling classification tasks.

\begin{figure*}[!ht]
    \centering
    \begin{subfigure}[b]{0.3\linewidth}
        \centering
        \includegraphics[clip, trim=7cm 4cm 8cm 4cm,width=\linewidth]{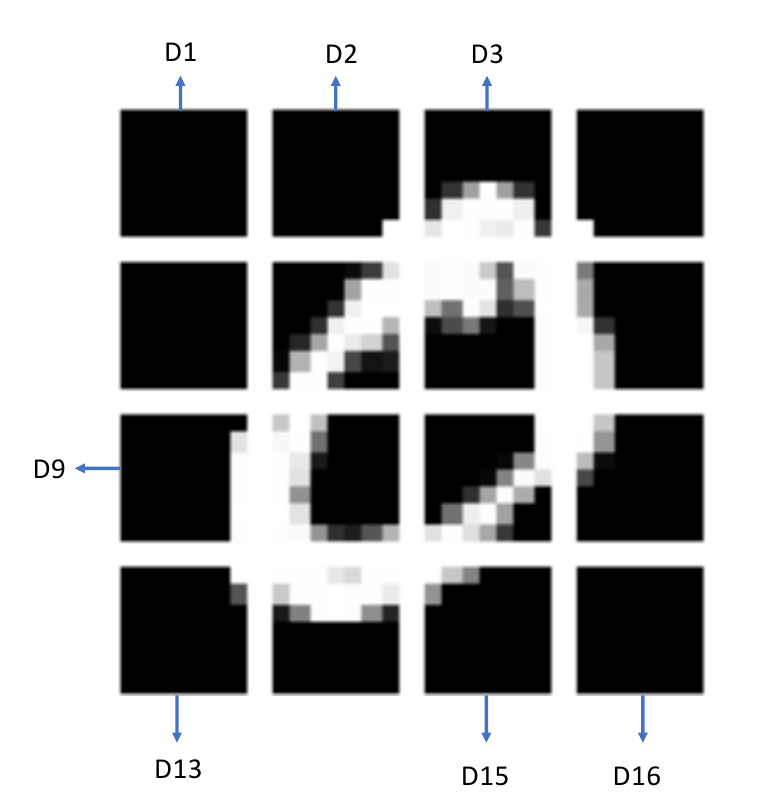}
        \caption{MNIST}
    \end{subfigure}\hspace{0.02\textwidth}
\begin{subfigure}[b]{0.3\textwidth}
        \includegraphics[clip, trim=7cm 4cm 8cm 4cm,width=\linewidth]{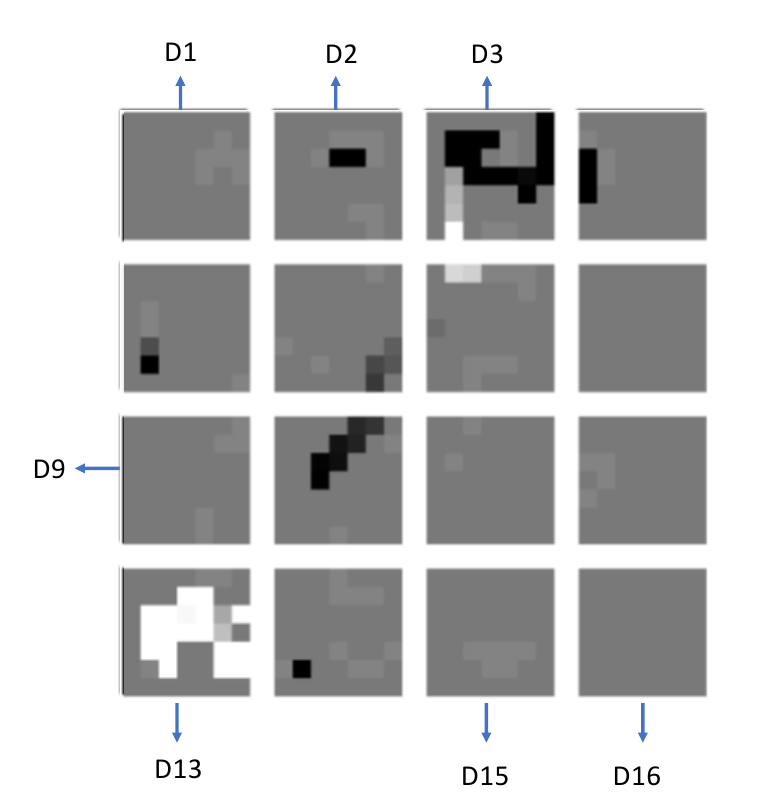}
        \caption{StarCraftMNIST(SCMNIST)}
    \end{subfigure}\hspace{0.02\textwidth}
\begin{subfigure}[b]{0.3\textwidth}
\includegraphics[clip, trim=7cm 4cm 8cm 4cm,width=\linewidth]{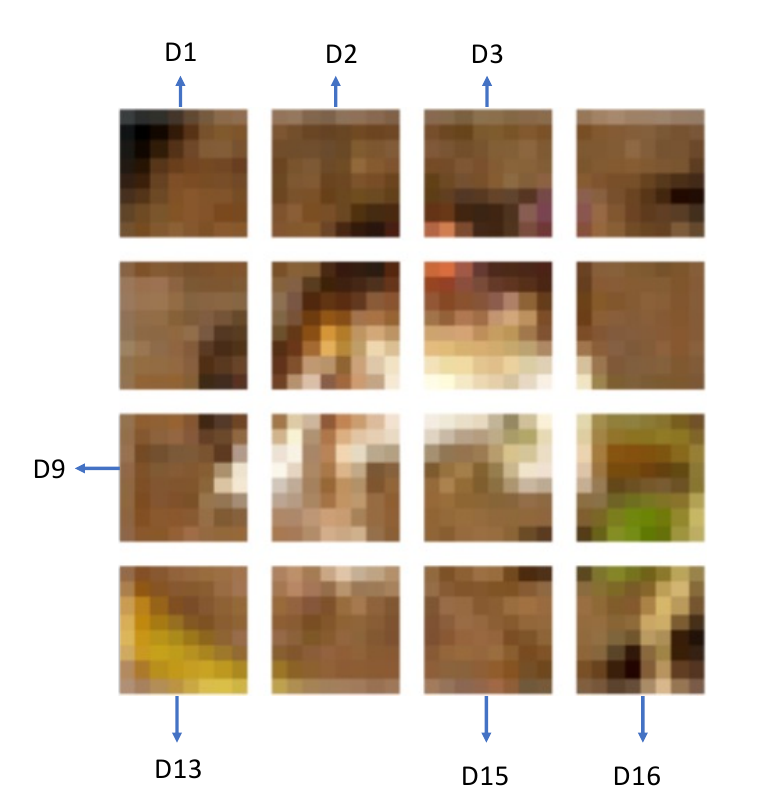}
        \caption{CIFAR-10}
    \end{subfigure}
    \vspace{0em}
    \caption{(a)MNIST, (b)SCMNIST, (c)CIFAR-10  Image split into 16 sections. Each section is assigned to a device/client.$D_i$ denotes a device/client}
    \label{Fig: DeviceAllocation}
\end{figure*}

\begin{figure*}[!ht]
    \centering
    \begin{subfigure}[b]{0.23\linewidth}
        \centering
        \includegraphics[clip, trim=7cm 4cm 8cm 4cm,width=\linewidth]{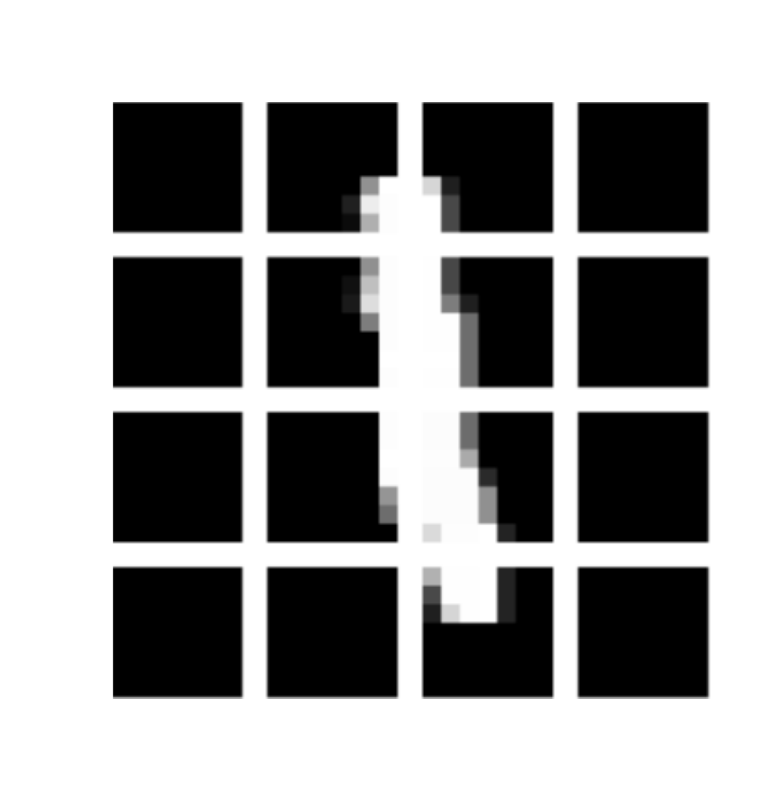}
        \caption{1}
    \end{subfigure}\hspace{0.02\textwidth}
\begin{subfigure}[b]{0.23\textwidth}
        \includegraphics[clip, trim=7cm 4cm 8cm 4cm,width=\linewidth]{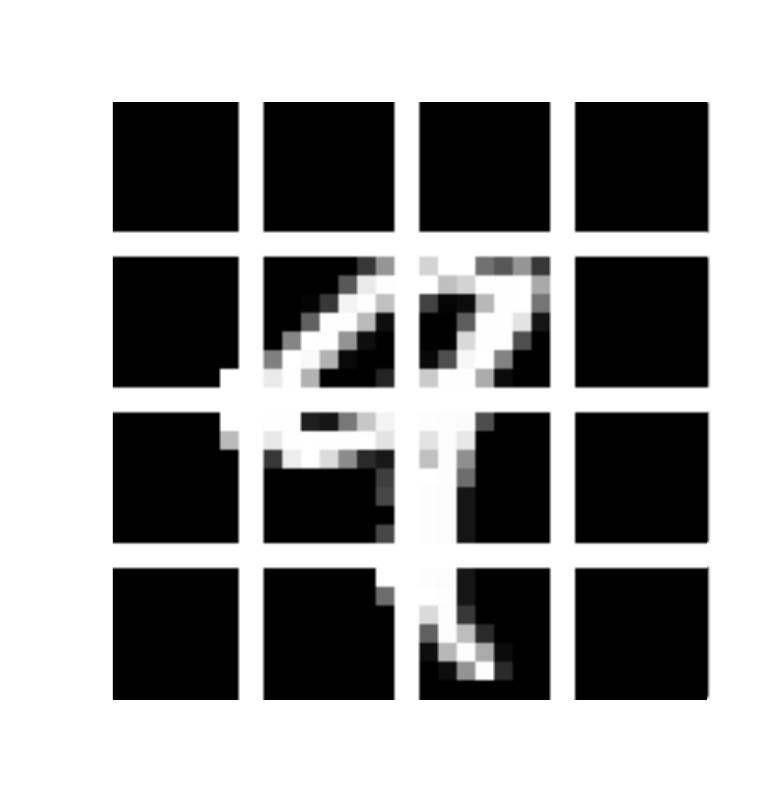}
        \caption{9}
    \end{subfigure}\hspace{0.02\textwidth}
\begin{subfigure}[b]{0.23\textwidth}
\includegraphics[clip, trim=7cm 4cm 8cm 4cm,width=\linewidth]{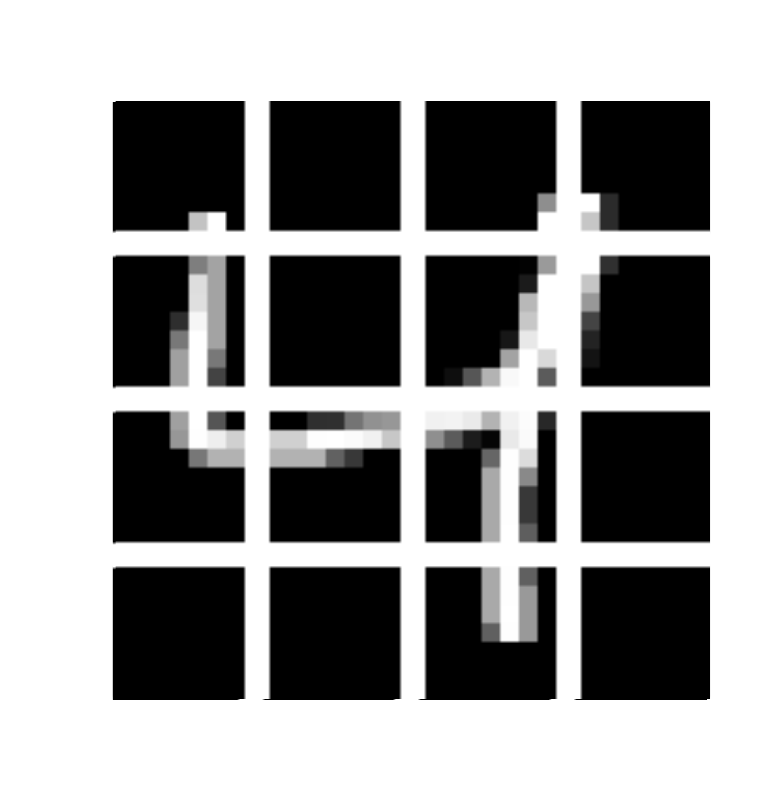}
        \caption{4}
    \end{subfigure}
    \begin{subfigure}[b]{0.23\textwidth}
\includegraphics[clip, trim=7cm 4cm 8cm 4cm,width=\linewidth]{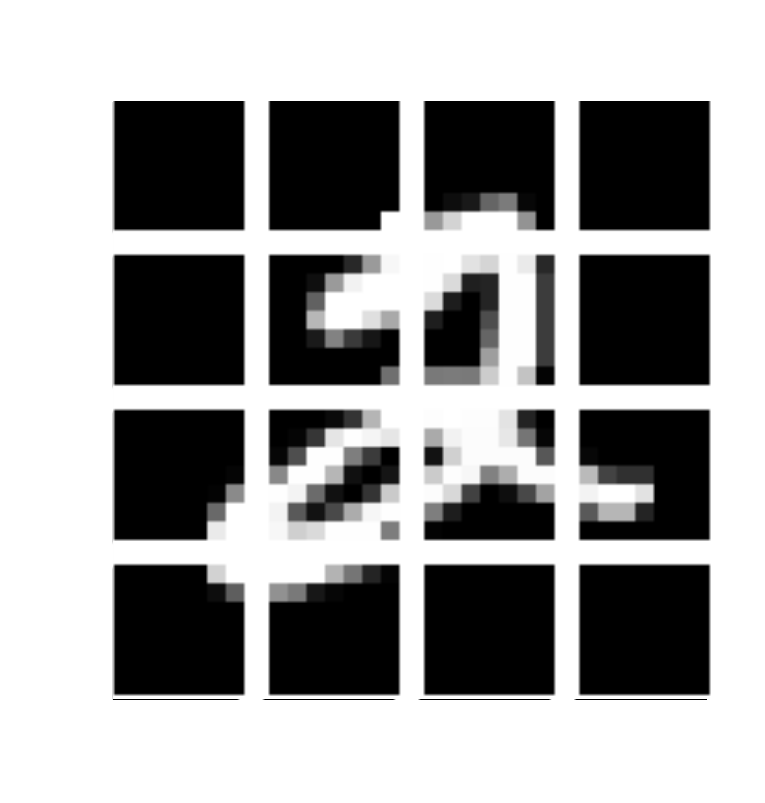}
        \caption{2}
    \end{subfigure}
    \vspace{0em}
    \caption{Based on limited information, some images are easy to distinguish from one another, others are not. For instance, based on the information from just bottom half of the devices, it is hard to distinguish between (a) and (b) while differentiating between images (c) and (d) is achievable just based in the bottom half of devices.}
    \label{Fig: Easy_Hard}
\end{figure*}

\subsection{Training}\label{Training}
For all of our experiments, we train the model for 100 epochs and we always report the result using the model checkpoint with lowest validation loss.
We use a batch size of 64 and Adam optimizer with learning rate 0.001 and $(\beta_1,\beta_2)=(0.9, 0.999)$.
All experiments are repeated using seed 1,2,\dots,16.

For all experiments, we use concatenation as the message passing algorithm where missing values were imputed using zeros (equivalent to dropout).

For 16 devices with all datasets, each device in VFL and MVFL has a model with the following structure: \texttt{Linear(49,16),ReLU,Linear(16,4),ReLU,MP,Linear(64,64),ReLU,Linear(64,10)} where $\texttt{MP}$ means message passing.
For 4 devices with all datasets, each device in VFL and MVFL has a model with the following structure: \texttt{Linear(196,64),ReLU,Linear(64,16),ReLU,MP,Linear(64,64),ReLU,Linear(64,10)}.
For 49 devices with all datasets, each device in VFL and MVFL has a model with the following structure: \texttt{Linear(16,4),ReLU,Linear(4,2),ReLU,MP,Linear(98,98),ReLU,Linear(98,10)}.

For 16 devices with all datasets, each device in DMVFL has a model with the following structure: \texttt{Linear(49,16),ReLU,Linear(16,4),ReLU,MP,DeepLayer,Linear(64,10)}.
For 4 devices with all datasets, each device in DMVFL has a model with the following structure: \texttt{Linear(196,64),ReLU,Linear(16,4),ReLU,MP,DeepLayer,Linear(64,10)}.
For 49 devices with all datasets, each device in DMVFL has a model with the following structure: \texttt{Linear(16,4),ReLU,Linear(16,4),ReLU,MP,DeepLayer,Linear(98,10)}.
$\texttt{DeepLayer}$ are composed of a sequence of \texttt{MultiLinear(64,16)} based on depth and \texttt{MultiLinear(x)}=\texttt{Mean(ReLU(Linear(64,16)(x)),\dots,ReLU(Linear(64,16)(x)))}.
Here we use multiple perceptrons at each layer to make sure the number of parameters between MVFL and DMVFL.
For example, for 16 devices and a depth of 2, we use $16/2=8$ perceptrons at each layer.

All experiments are performed on a NVIDIA RTX A5000 GPU.

\subsection{Handling Faults at Test Time}
During inference, if a communication or device faults, we impute the missing values with zeros for all methods.
Future work could look into other missing value imputation methods that are more effective for the given context.

\section{Additional Experiments}\label{app-sec:experiment-result}

In this section we present some more results from different experiments that we conducted. Like the results in the main paper, we present the Rand test metric over the active set. As we have multiple replicates of an experiment, we take an expectation over the collected Rand metric and this has an averaging effect. Thus, the y axis in the plots of the Appendix are labelled as Test Avg, which is equivalent to Test Active Rand y axis label used in the main paper and these two ways to refer to the metric are used interchangeably. 

Furthermore, in our experiments we were initially using gossip both during inference and training. 
However, we realised that gossip during inference alone is a better approach.
Thus, the results in the main paper are presented using gossip only during inference.
On the other hand, the experiments presented in the Appendix use gossip both during inference and training.

\begin{figure*}[!ht]
    \centering
    \vspace{-1em}
    \begin{subfigure}[t]{0.2\linewidth}
        \centering
        \includegraphics[clip, trim=8cm 3cm 10cm 3cm,width=\linewidth]{Graphics/VFL.pdf}
        \caption{VFL}
    \end{subfigure}\hspace{-0.02\textwidth}
\begin{subfigure}[t]{0.18\textwidth}
        \includegraphics[clip, trim=9cm 3cm 10cm 3cm,width=\linewidth]{Graphics/MVFL.pdf}
        \caption{MVFL}
    \end{subfigure}\hspace{0.01\textwidth}
\begin{subfigure}[t]{0.18\textwidth}
        \includegraphics[clip, trim=9cm 3cm 10cm 3cm,width=\linewidth]{Graphics/DMVFL_G.pdf}
        \caption{MVFL-G}
    \end{subfigure}\hspace{0.01\textwidth}
\begin{subfigure}[t]{0.29\textwidth}
        \includegraphics[clip, trim=9cm 3cm 4.5cm 3cm,width=\linewidth]{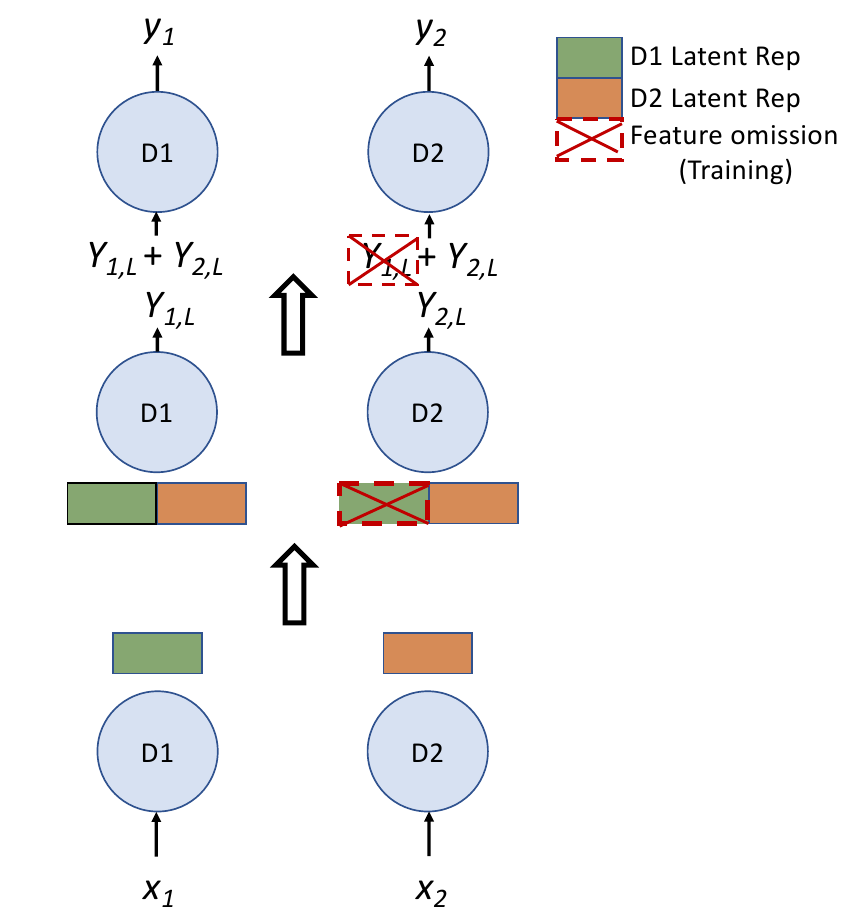}
        \caption{CD-MVFL-G}
    \end{subfigure}\hspace{0.01\textwidth}
    \vspace{-0.5em}
    \caption{
    VFL and the proposed enhancements are illustrated for a network of two fully connected devices, D1 and D2. (a)VFL setup with D1 acting as a client as well as an aggregator. The input to the devices at the first layer $L_1$ are $x_1$ and $x_2$ and output are the latent representations. The input to the server on the second layer $L_2$ is the concatenated latent representation and the output is the prediction $y_1$ (b) MVFL arrangement has both the devices acting as data aggregators aside from being clients. (c) MVFL-G is an extension of MVFL wherein the output log probabilities ($Y_{i,L}$) from each device are averaged before being used for final prediction. (d) CD-MVFL-G is a variant of MVFL-G where during the training phase, representation from D2 to D1 is not communicated by design. CD-MVFL-G is the method that we propose for \contextAbv}
    \label{IL BaseLines}
    \vspace{-1em}
\end{figure*}

\begin{figure*}[!ht]
    \centering
    \begin{subfigure}[h]{\linewidth}
        \includegraphics[width=\linewidth]{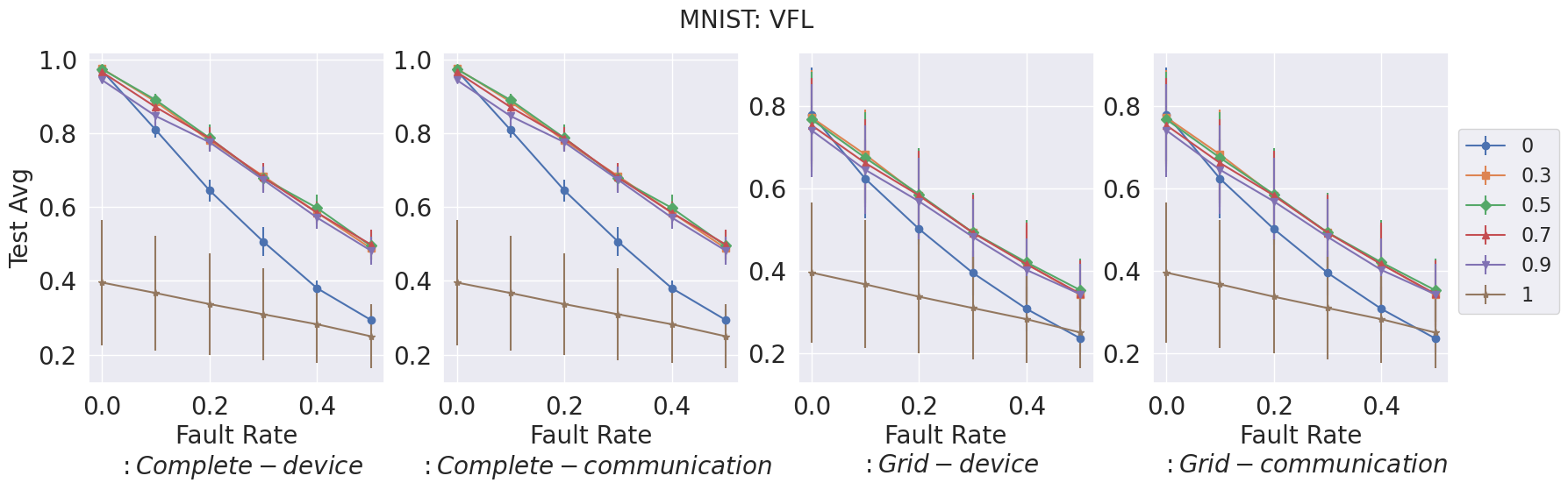}
        \caption{VFL}
    \end{subfigure}
    \begin{subfigure}[h]{\linewidth}
        \includegraphics[width=\textwidth]{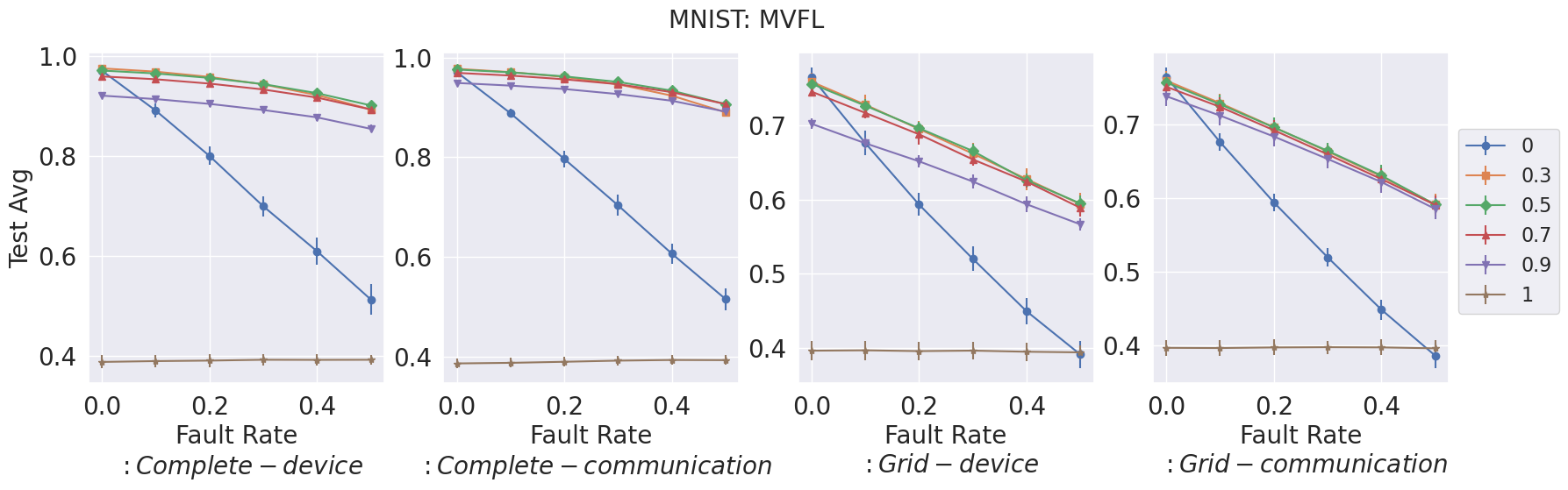}
        \caption{MVFL}
    \end{subfigure}
    \begin{subfigure}[h]{\linewidth}
        \includegraphics[width=\linewidth]{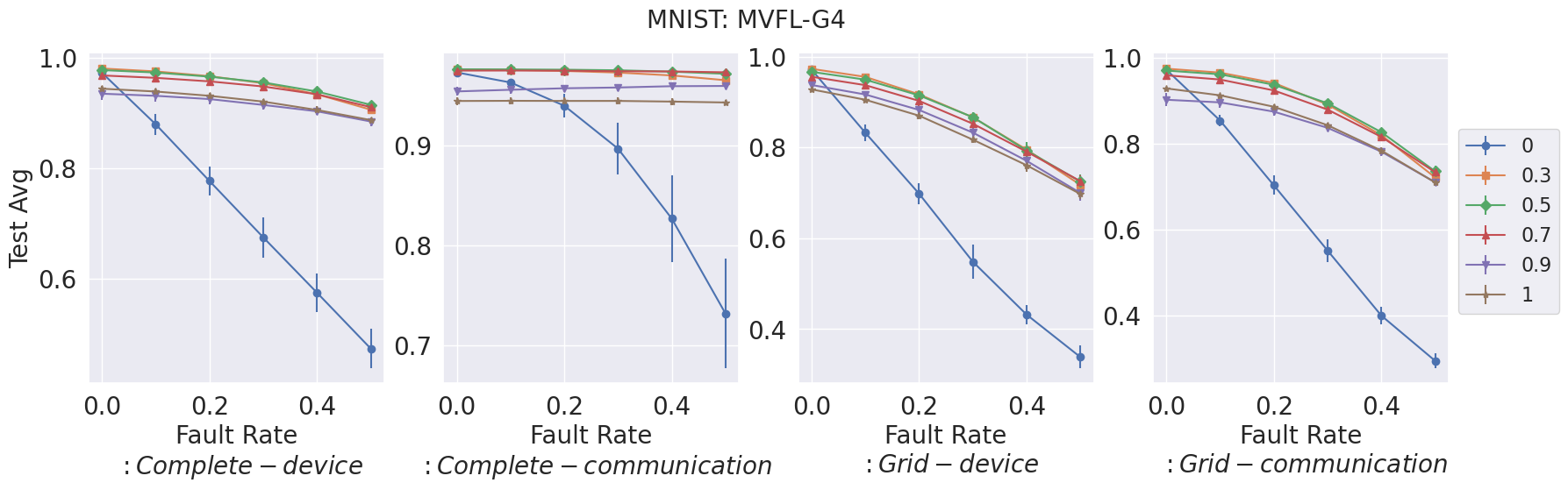}
        \caption{MVFL-G4}
    \end{subfigure}
    \caption{Test average accuracy with different dropout rates for MNIST with 16 devices.  
     Across different configurations,training with dropout makes the model robust against test time faults}
    \label{fig-app:train_fault_MNIST}
\end{figure*}

\begin{figure*}[!ht]
    \centering
    \begin{subfigure}[h]{\linewidth}
        \includegraphics[width=\linewidth]{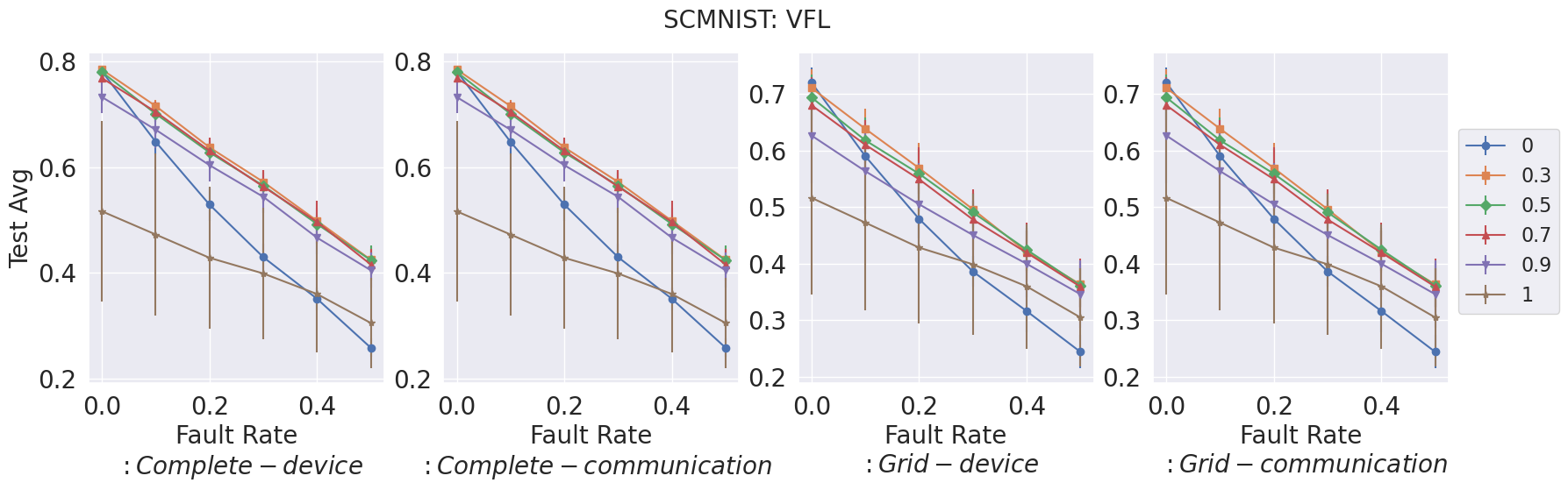}
        \caption{VFL}
    \end{subfigure}
    \begin{subfigure}[h]{\linewidth}
        \includegraphics[width=\textwidth]{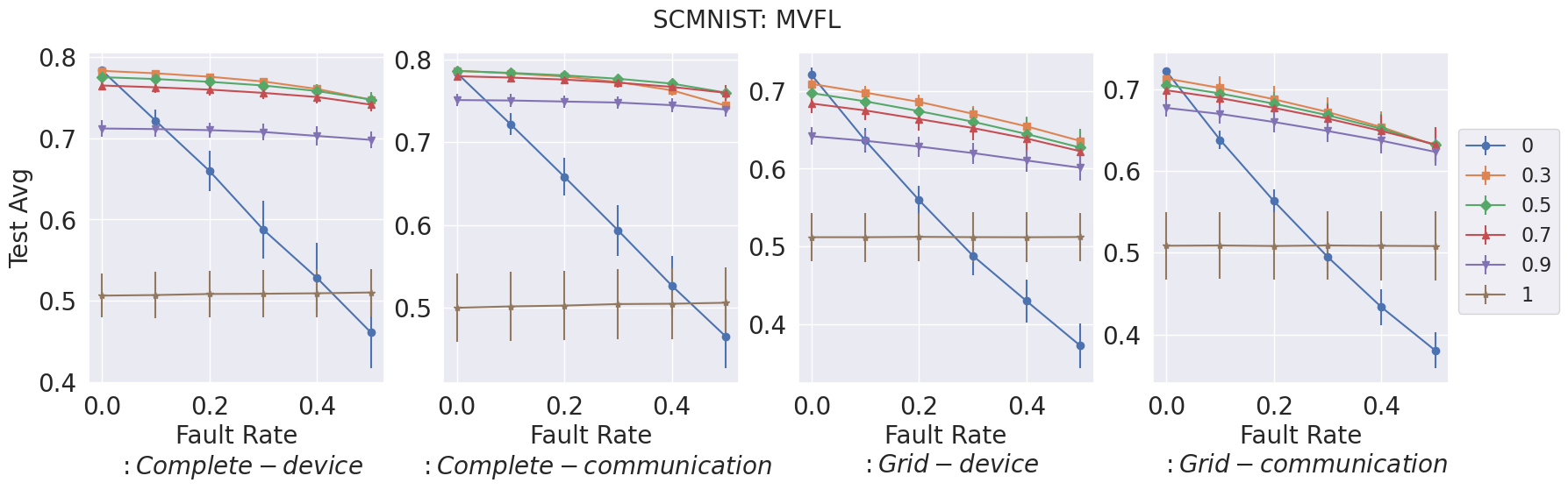}
        \caption{MVFL}
    \end{subfigure}
    \begin{subfigure}[h]{\linewidth}
        \includegraphics[width=\linewidth]{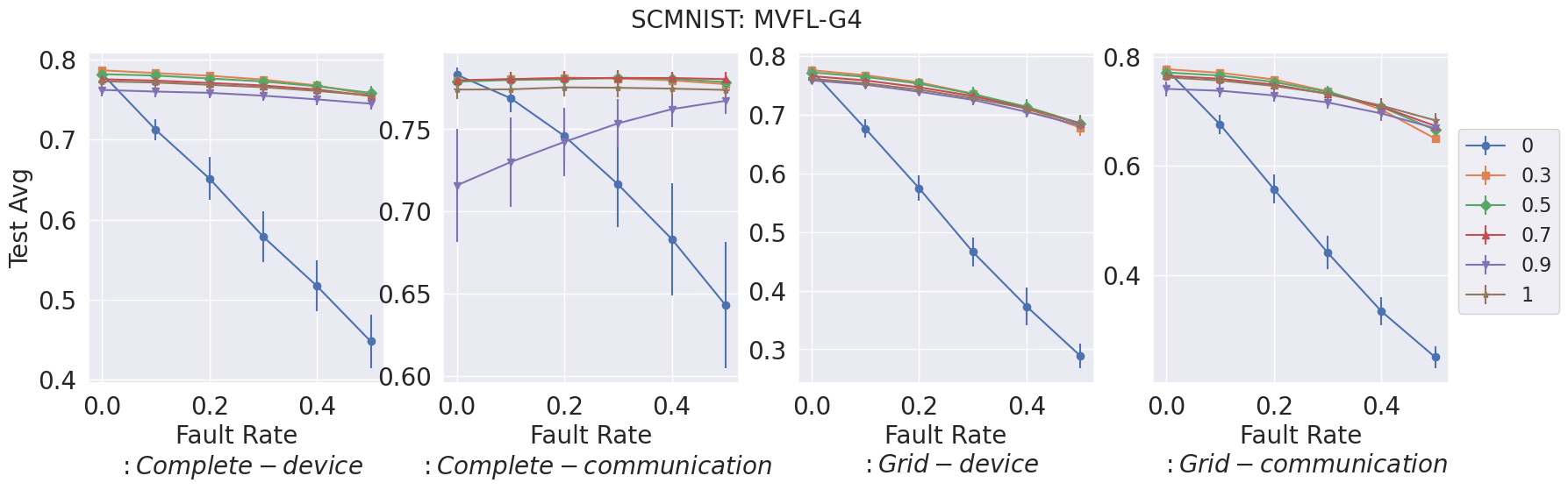}
        \caption{MVFL-G4}
    \end{subfigure}
    \caption{Test average accuracy with different dropout rates for SCMNIST with 16 devices.  
     Across different configurations,training with dropout makes the model robust against test time faults}
    \label{fig-app:train_fault_SCMNIST}
\end{figure*}

\begin{figure*}[!ht]
    \centering
    \begin{subfigure}[h]{\linewidth}
        \includegraphics[width=\linewidth]{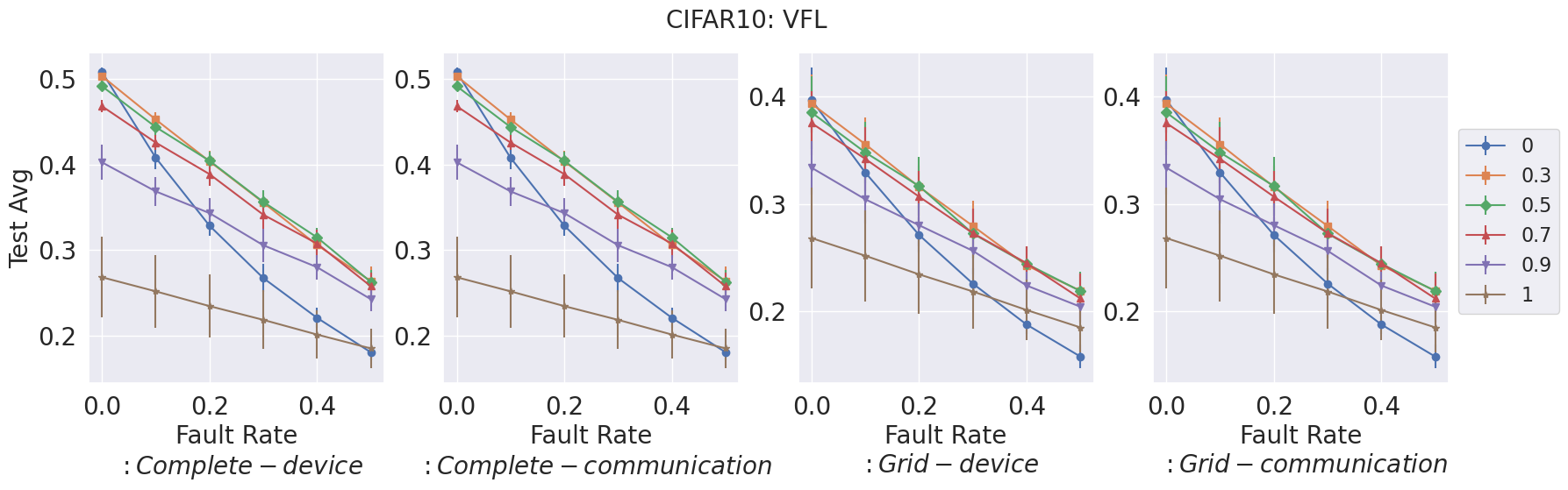}
        \caption{VFL}
    \end{subfigure}
    \begin{subfigure}[h]{\linewidth}
        \includegraphics[width=\textwidth]{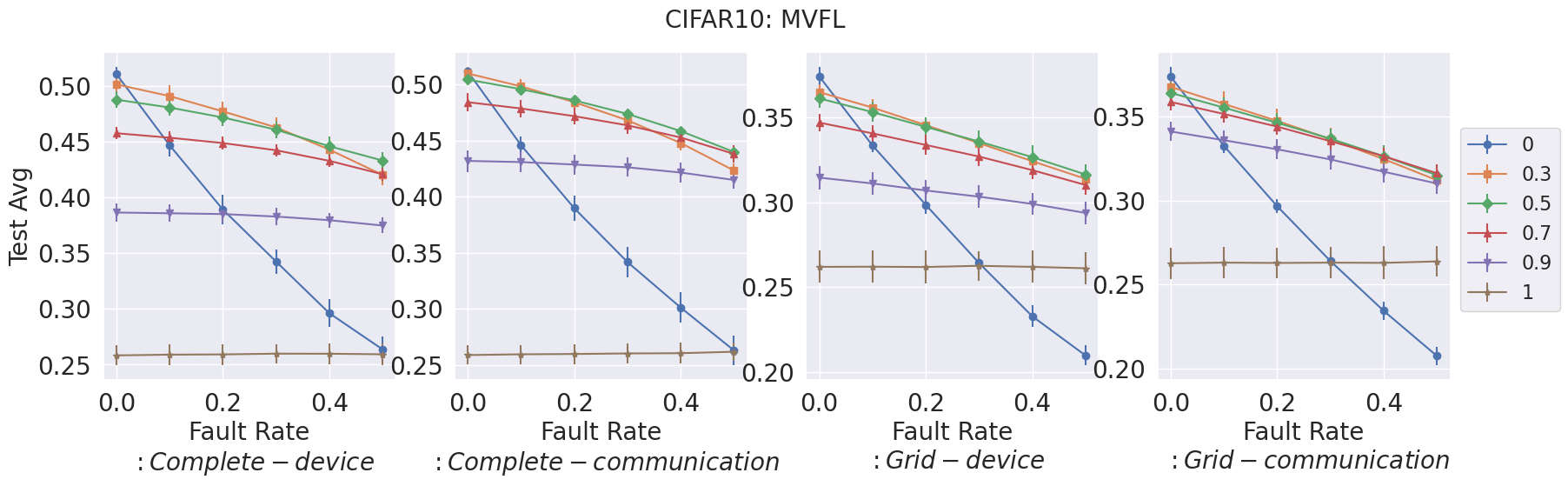}
        \caption{MVFL}
    \end{subfigure}
    \begin{subfigure}[h]{\linewidth}
        \includegraphics[width=\linewidth]{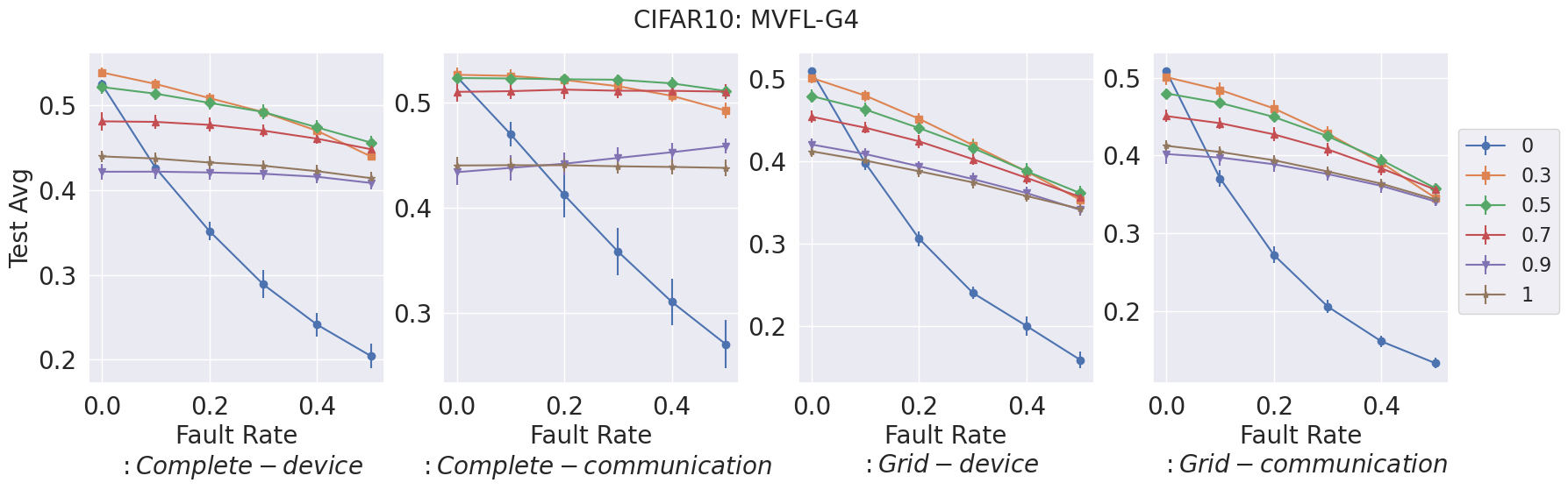}
        \caption{MVFL-G4}
    \end{subfigure}
    \caption{Test average accuracy with different dropout rates for CIFAR10 with 16 devices.  
     Across different configurations,training with dropout makes the model robust against test time faults}
    \label{fig-app:train_fault_CIFAR10}
\end{figure*}

\begin{figure*}[!ht]
    \centering
    \begin{subfigure}[h]{\linewidth}
        \includegraphics[width=\linewidth]{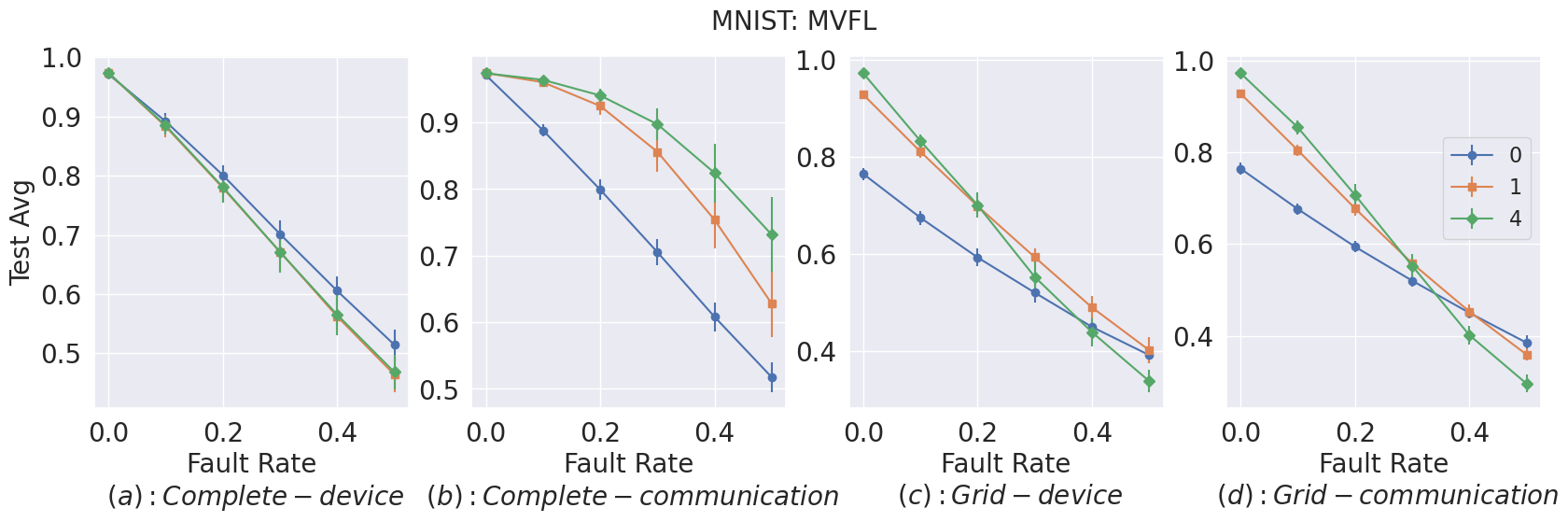}
        \caption{MNIST}
    \end{subfigure}
    \begin{subfigure}[h]{\linewidth}
        \includegraphics[width=\textwidth]{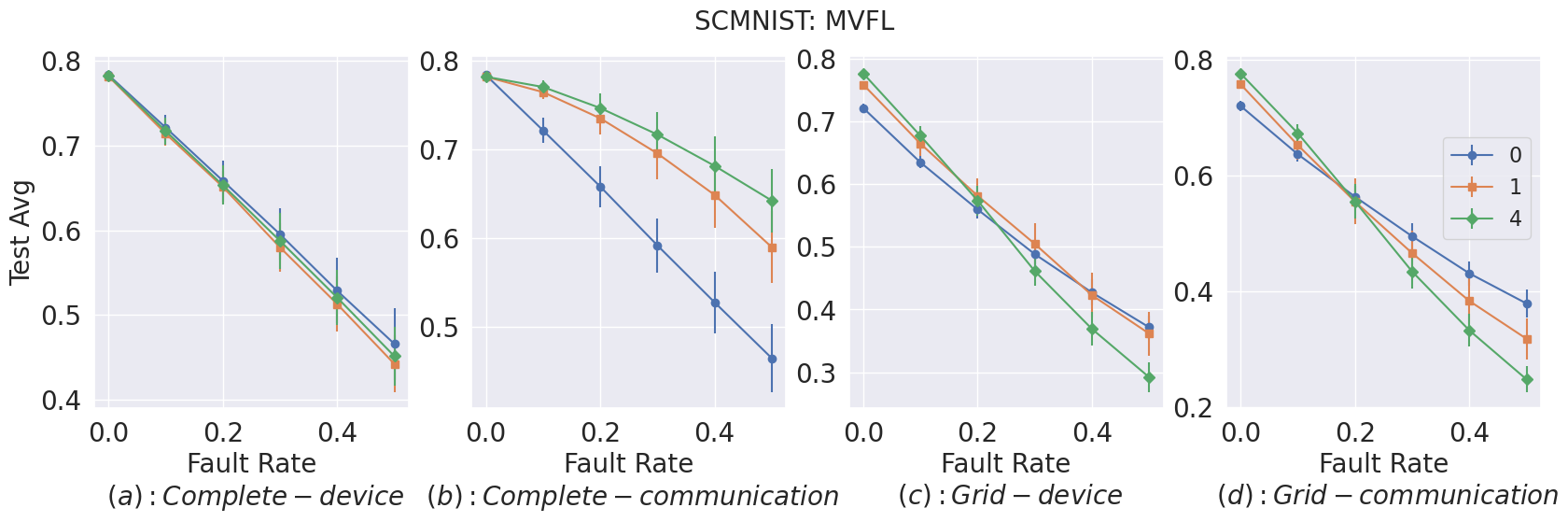}
        \caption{SCMNIST}
    \end{subfigure}
    \begin{subfigure}[h]{\linewidth}
        \includegraphics[width=\linewidth]{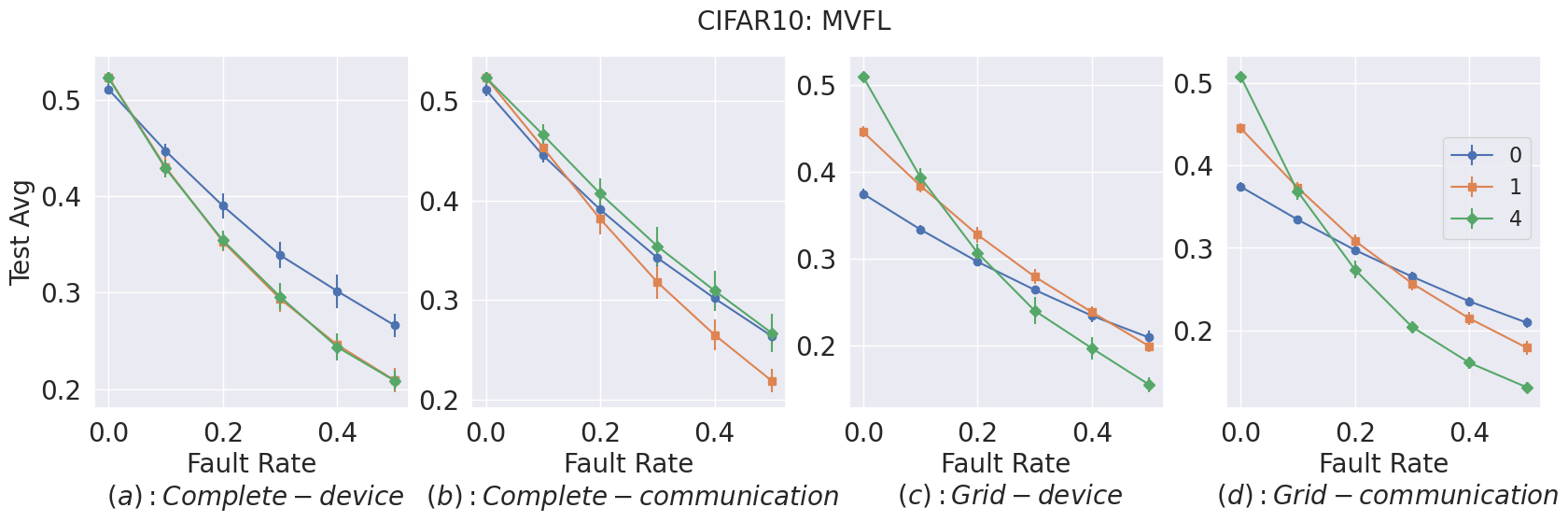}
        \caption{CIFAR-10}
    \end{subfigure}
    \caption{MVFL: Effect of different rounds of Gossip on average performance when evaluated with test time faults for 16 devices}
    \label{fig-app:MVFL_Gossip}
\end{figure*}

\begin{figure*}[!ht]
    \centering
    \begin{subfigure}[h]{\linewidth}
        \includegraphics[width=\linewidth]{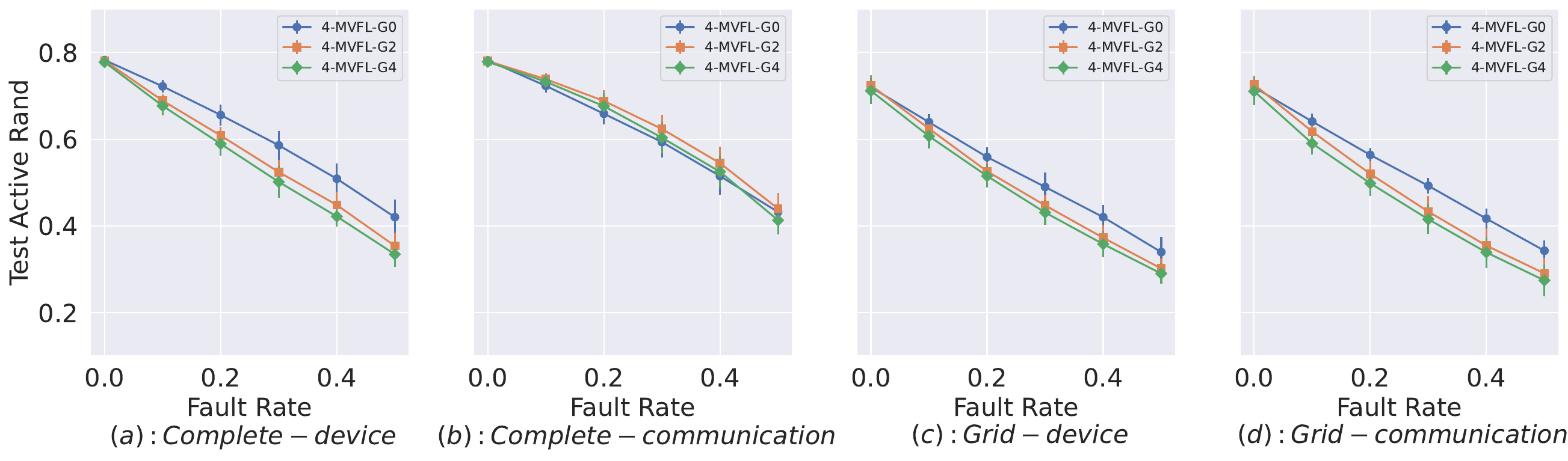}
        \caption{Without Dropout in Training}
    \end{subfigure}
    \begin{subfigure}[h]{\linewidth}
        \includegraphics[width=\textwidth]{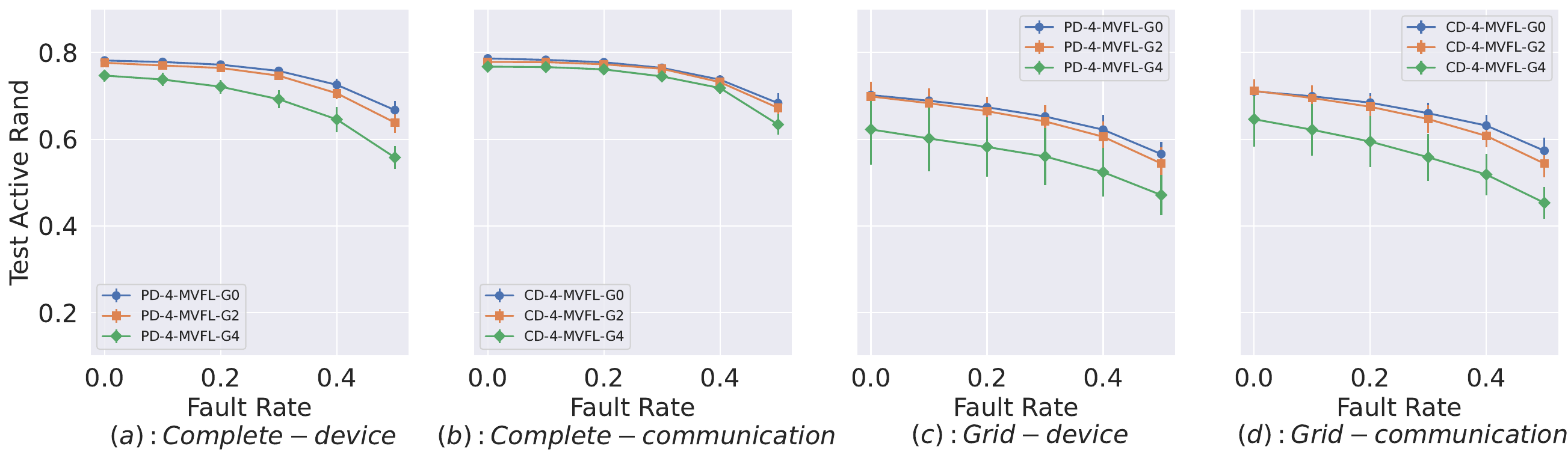}
        \caption{With Dropout in Training}
    \end{subfigure}
    \caption{4-MVFL: Effect of different rounds of Gossip on average performance when evaluated with test time faults for 16 devices for SCMNIST}
    \label{fig-app:KMVFL_Gossip}
\end{figure*}

\begin{figure*}[!h]
    \centering
    \begin{subfigure}[h]{0.23\linewidth}
        \includegraphics[width=\linewidth]{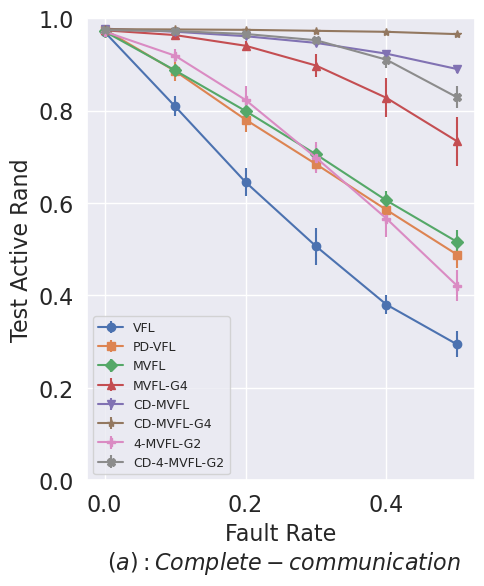}
        \caption{Active Rand}
    \end{subfigure}
    \begin{subfigure}[h]{0.23\linewidth}
        \includegraphics[width=\textwidth]{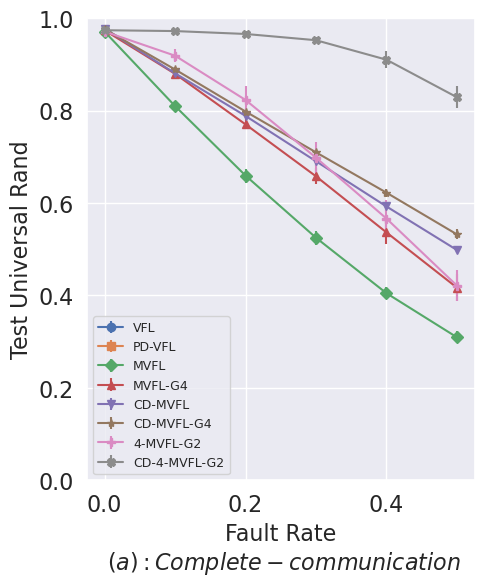}
        \caption{Any Rand}
    \end{subfigure}
    \begin{subfigure}[h]{0.23\linewidth}
        \includegraphics[width=\linewidth]{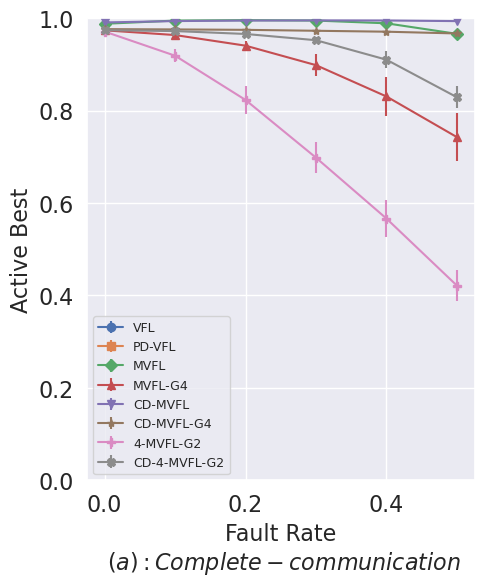}
        \caption{Active Best}
    \end{subfigure}
    \begin{subfigure}[h]{0.23\linewidth}
        \includegraphics[width=\textwidth]{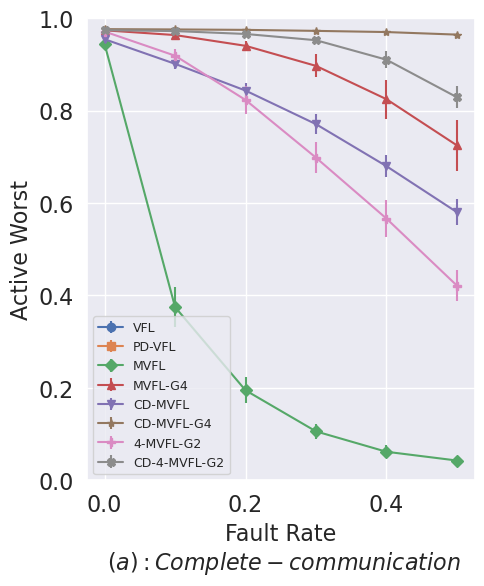}
        \caption{Active Worst}
        \end{subfigure}
    \caption{All metrics reported for MNIST with 16 devices and only test time faults}
    \label{fig-app:16D_MNIST_OtherMetric_KMVFL}
\end{figure*}

\begin{figure*}[!h]
    \centering
    \begin{subfigure}[h]{0.23\linewidth}
        \includegraphics[width=\linewidth]{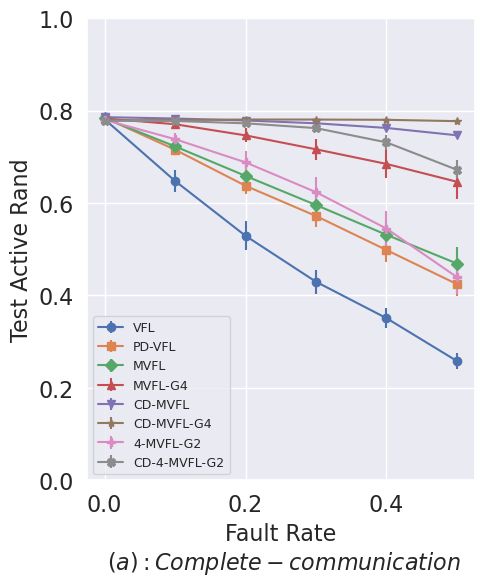}
        \caption{Active Rand}
    \end{subfigure}
    \begin{subfigure}[h]{0.23\linewidth}
        \includegraphics[width=\textwidth]{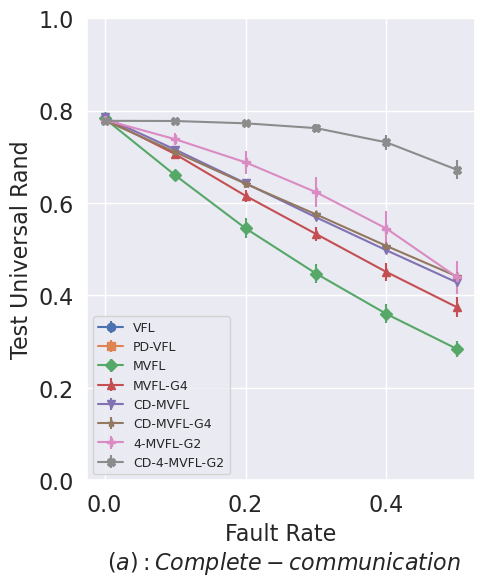}
        \caption{Any Rand}
    \end{subfigure}
    \begin{subfigure}[h]{0.23\linewidth}
        \includegraphics[width=\linewidth]{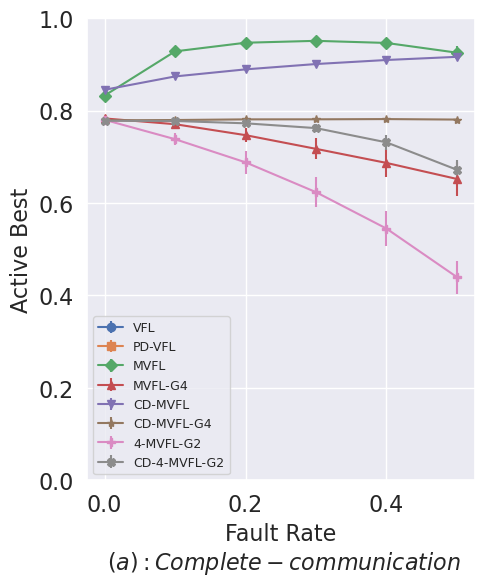}
        \caption{Active Best}
    \end{subfigure}
    \begin{subfigure}[h]{0.23\linewidth}
        \includegraphics[width=\textwidth]{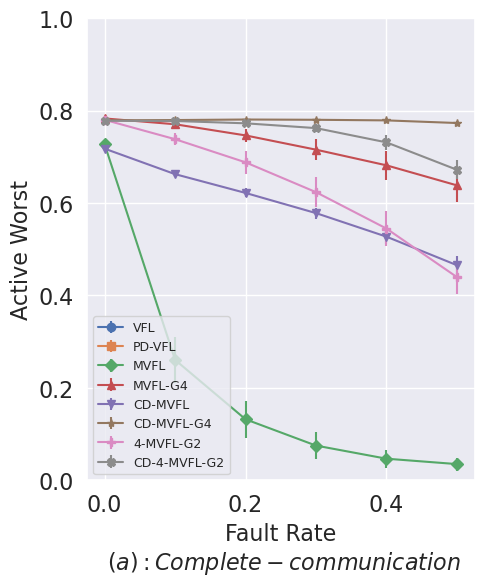}
        \caption{Active Worst}
        \end{subfigure}
    \caption{All metrics reported for SCMNIST with 16 devices and only test time faults}
    \label{fig-app:16D_SCMNIST_OtherMetric_KMVFL}
\end{figure*}

\begin{figure*}[!h]
    \centering
    \begin{subfigure}[h]{0.23\linewidth}
        \includegraphics[width=\linewidth]{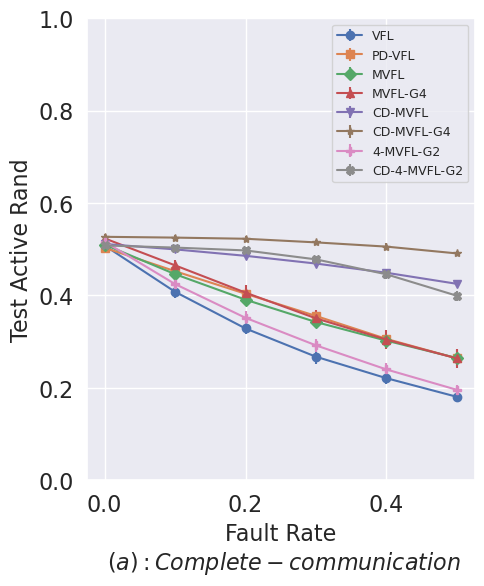}
        \caption{Active Rand}
    \end{subfigure}
    \begin{subfigure}[h]{0.23\linewidth}
        \includegraphics[width=\textwidth]{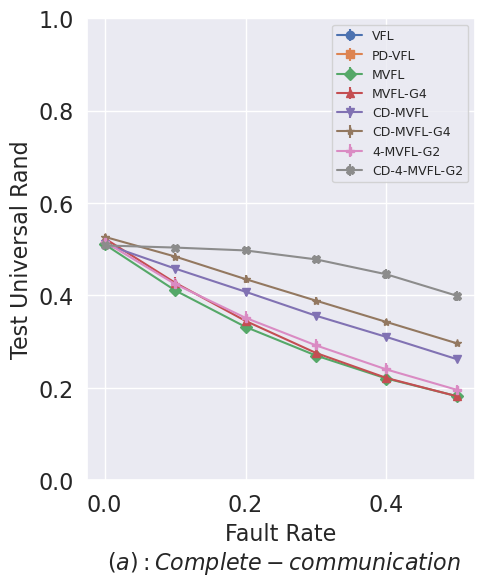}
        \caption{Any Rand}
    \end{subfigure}
    \begin{subfigure}[h]{0.23\linewidth}
        \includegraphics[width=\linewidth]{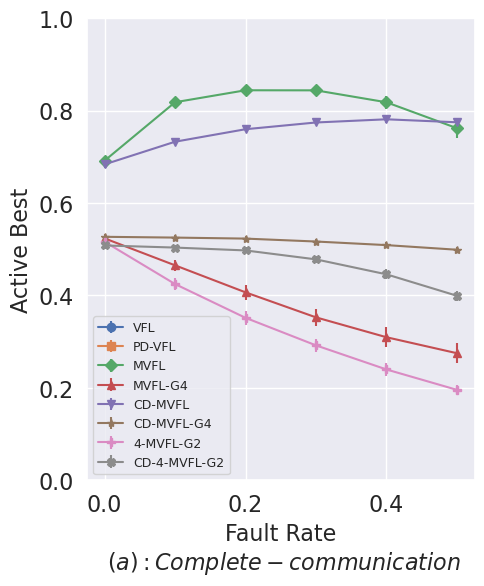}
        \caption{Active Best}
    \end{subfigure}
    \begin{subfigure}[h]{0.23\linewidth}
        \includegraphics[width=\textwidth]{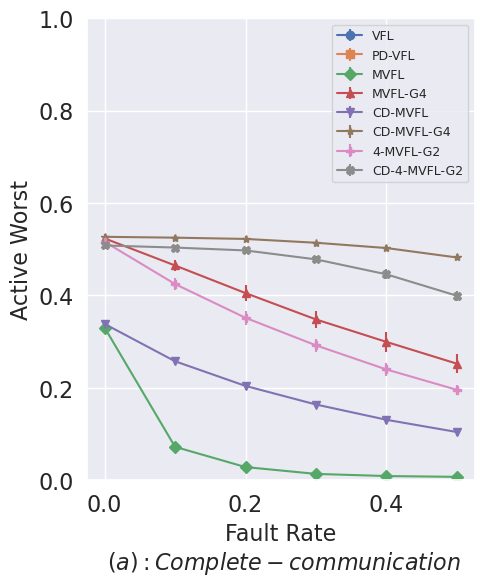}
        \caption{Active Worst}
        \end{subfigure}
    \caption{All metrics reported for CIFAR-10 with 16 devices and only test time faults}
    \label{fig-app:16D_CIFAR10_OtherMetric_KMVFL}
\end{figure*}

\begin{figure*}[!h]
    \centering
    \begin{subfigure}[h]{0.8\linewidth}
        \includegraphics[width=\linewidth]{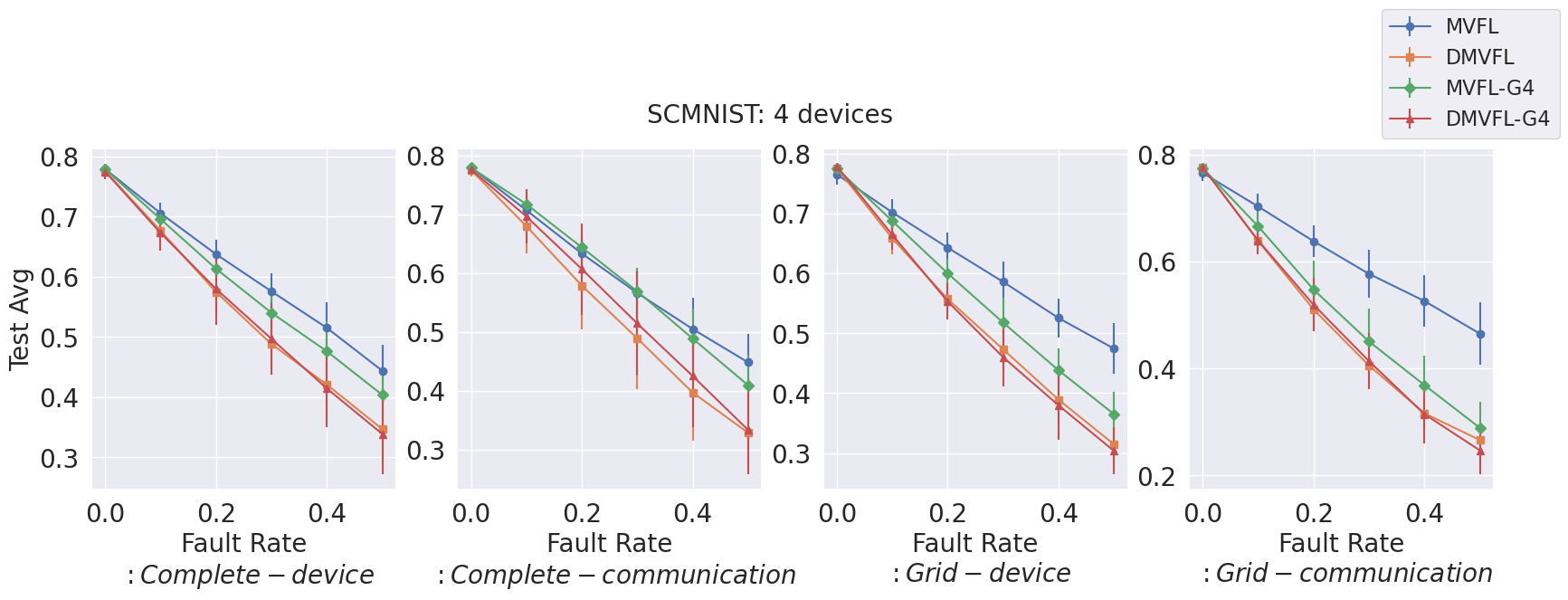}
        \caption{4 devices.}
    \end{subfigure}
    \begin{subfigure}[h]{0.8\linewidth}
        \includegraphics[width=\textwidth]{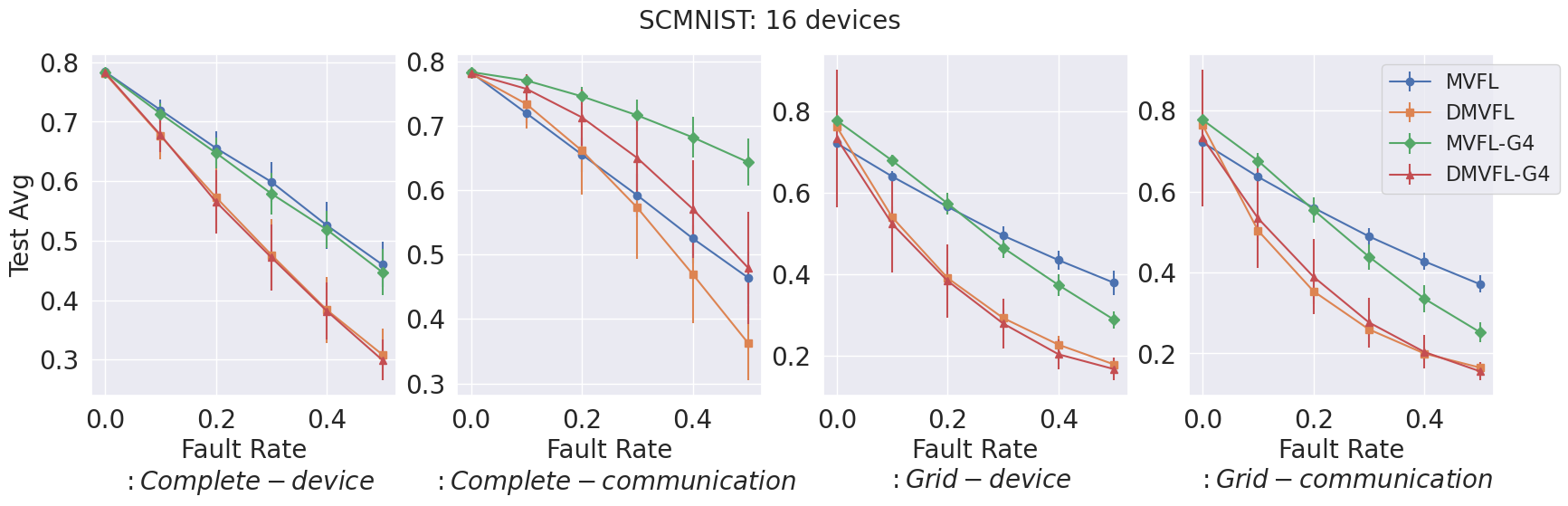}
        \caption{16 devices.}
    \end{subfigure}
    \begin{subfigure}[h]{0.8\linewidth}
        \includegraphics[width=\linewidth]{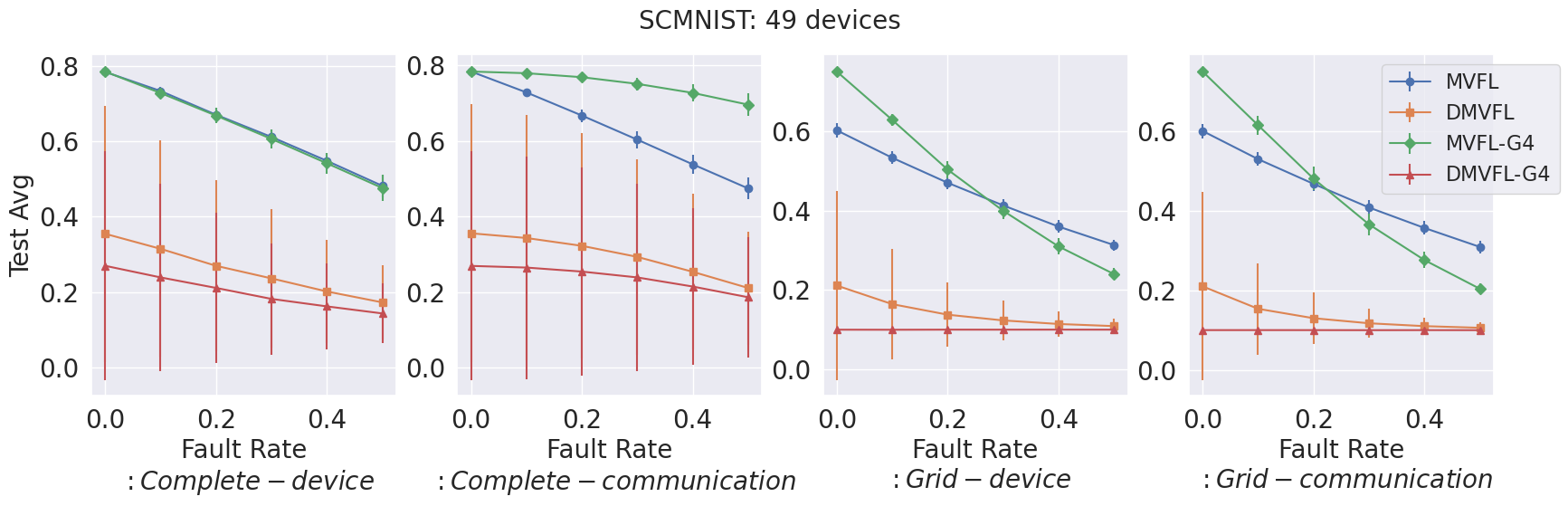}
        \caption{49 devices.}
    \end{subfigure}
    \caption{
    Test average accuracy for different test time fault rates for StarCraftMNIST with 4,16 and 49 devices. Observing the plots it can be concluded that VFL does not do well under different faulting conditions and MVFL or its gossip variant has the best performance.
    }
    \label{fig-app:ND_starmnist}
\end{figure*}

\begin{figure*}[!ht]
    \centering
        \includegraphics[width=0.99\textwidth]{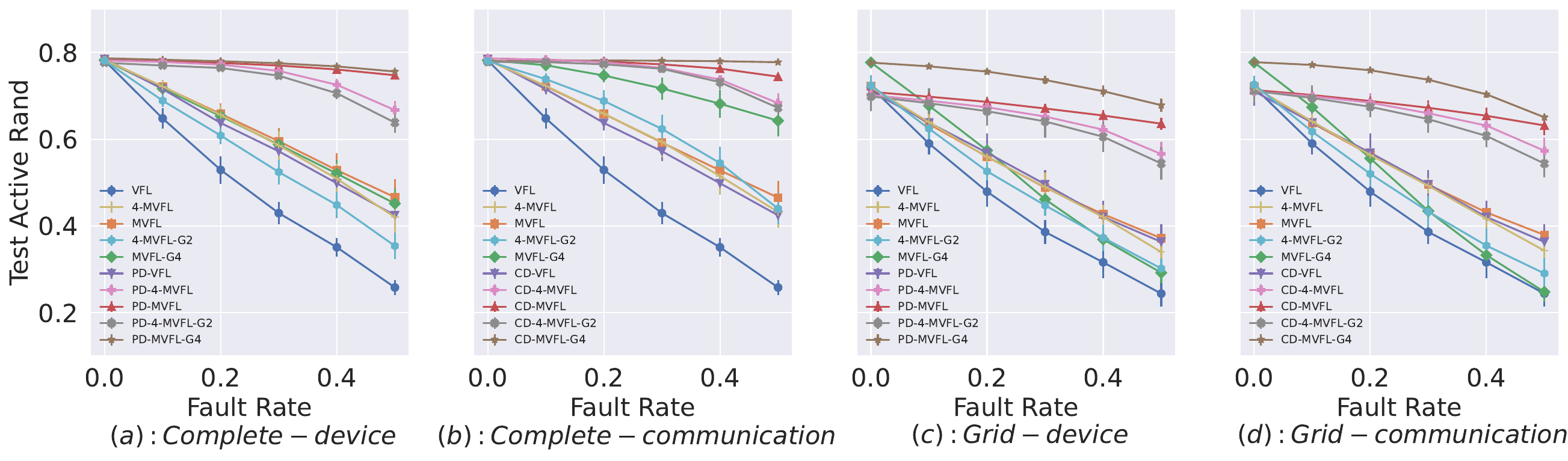}
        \vspace{-1em}
    \caption{Test accuracy with and without communication (CD-) and party-wise (PD-) Drop out method  for StarCraftMNIST with 16 devices. 
    Here we include models trained under an dropout rate of  30\% (marked by 'PD-' or 'CD-'). 
    All results are averaged over 16 runs and error bar represents standard deviation. Across different configurations, MVFL-G4 trained with feature omissions has the highest average performance, while vanilla VFL performance is not robust as fault rate increases.
    As our experiments are repeated multiple times, what we report is the expectation (Avg) over the random active client selection.
    }
    \label{fig:testavg_trainfault_faultrate_starmnist_16_Extended}
    \vspace{-0.5em}
\end{figure*}

\subsection{Partywise Dropout (PD) and Communication wise Dropout (CD) rates}\label{subsec:TrainingF}
In the main paper we presented results with assuming a Dropout rate of $30\%$ for the PD and CD variants. Here we present  the effect of different dropout rates on the test time performance under communication and device faulting regime. Figures \ref{fig-app:train_fault_MNIST}, \ref{fig-app:train_fault_SCMNIST} and \ref{fig-app:train_fault_CIFAR10} show the results for three datasets, MNIST, SCMNIST and CIFAR-10, respectively. Irrespective of communication or device fault scenario, training VFL with an omission rate results in Party wise Dropout. Whereas, for MVFL when studying communication fault having an omission rate results in CD-MVFL model while studying device fault results in PD-MVFL model.  

Across the different sets and models it is observed that using CD and PD variants results in improving the performance during test time faults. Furthermore, on observing Figures \ref{fig-app:train_fault_MNIST}, \ref{fig-app:train_fault_SCMNIST} and \ref{fig-app:train_fault_CIFAR10} (c), it seems that gossiping has a profound impact on the model performance even if the omission rates are 100\% , which means that each device is training it's own local model independently. However, by doing gossip during the testing time, devices are able to reach a consensus that gives the model a performance boost, even during high Dropout rates.

\subsection{Evaluation for a temporal fault model}\label{subsec:Temporal_Graph}

In \cref{fig:temporalGraph} we show results with a temporal communication fault model on a complete connected graph. CD- models are trained under a dropout rate of 30\%.
The temporal fault model uses Markov process to simulate the transition of links or edges in the network between connected and faulty states based on probabilistic rules defined by a transition matrix. 
The transition matrix is such that when fault rate = 0, the probability of staying in non-faulted state is 1. 
However for other fault rates, the transition matrix is: [[p, 1-p],[q , 1-q]], where r is the fault rate, q = (1-p)(1-r)/r, and p is fixed at 0.9. 
p denotes the probability of staying in non-faulted state and q depicts the probability of going from faulted to non-faulted state.
Even on a temporally varying graph, \methodAbv outperforms other baselines.

\begin{figure*}[!ht]
    \centering
        \includegraphics[clip, trim=0cm 0cm 23cm 0cm,width=0.6\linewidth]{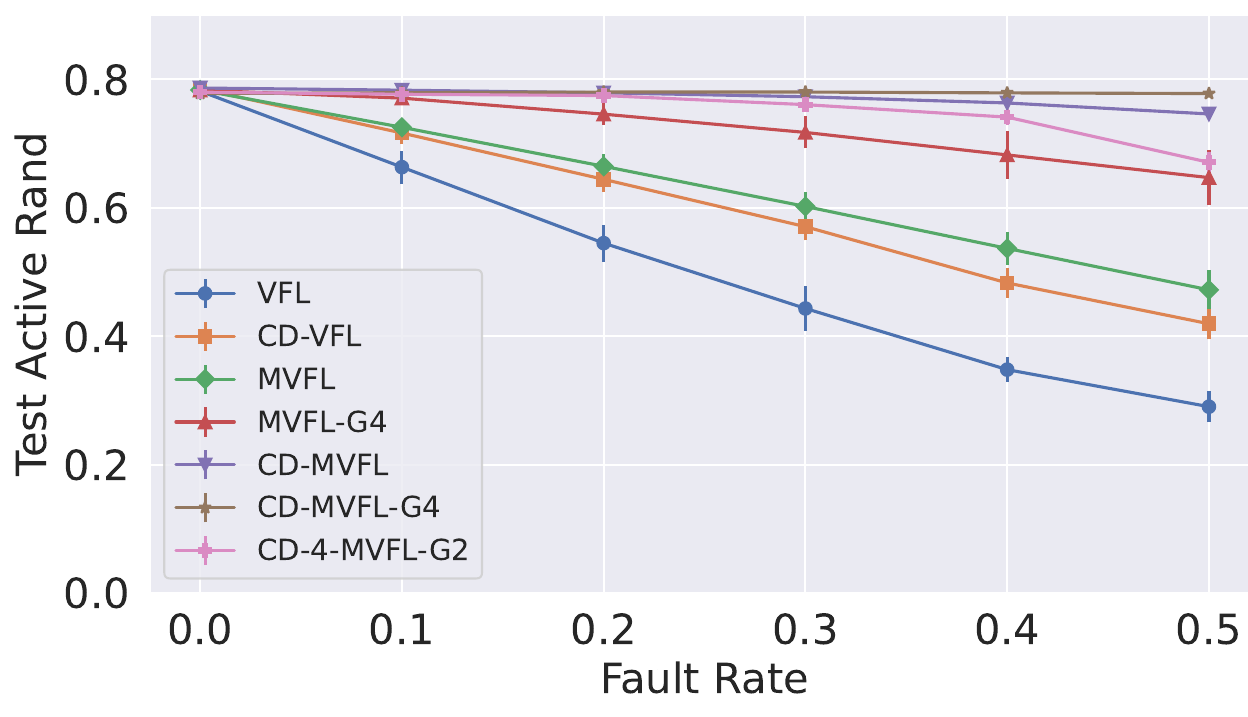}
        \vspace{-1em}
    \caption{Test accuracy with and without communication (CD-) dropout method  for StarCraftMNIST with 16 devices. 
    Here we include models trained under an dropout rate of  30\% (marked by 'CD-'). 
    All results are averaged over 16 runs, and the error bar represents standard deviation. Across different configurations, MVFL-G4 trained with feature omissions has the highest average performance, while vanilla VFL performance is not robust as fault rate increases.
    As our experiments are repeated multiple times, what we report is the expectation (Avg) over the random active client selection.
    }
    \label{fig:temporalGraph}
    \vspace{-0.5em}
\end{figure*}

\subsection{Ablation Studies}\label{subsec:AblationStudies}

\paragraph{Choice of number of Aggregators:} As an initial investigation, we studied the performance of MVFL with different numbers of aggregators for the 16 device grid, complete communication and random geometric graph with a 2.5 radius(r) setting with a 30\% communication fault rate during inference. 
The number of aggregators could be 1 (Standard VFL), 2, 4, 16 (Original MVFL). 
If the number of aggregators is less than 16, then they were chosen at random with uniform probability. 
From the \cref{tab:KMVFL_GG,tab:KMVFL_CompleteCommunication,tab:KMVFL_RGG_2_5} presented below, we see that the major improvement over VFL is achieved by just having 4 devices acting as aggregators. 
Thus, a major performance boost over VFL can be achieved at a minimal increase in communication cost, and shows that it is not necessary to have all the devices act as aggregators and incur large communication overhead. Following this result, one can infer that gossip variants of setup with few number of data aggregators than MVFL will have a significant less communication overhead than gossip variant of MVFL.

\begin{table}[]
\centering
\caption{Complete Communication Graph with 30\% communication fault rate}
\label{tab:KMVFL_CompleteCommunication}
\begin{tabular}{c|c|c|c|c}
Number of aggregators & 1(VFL) & 2     & 4    & 16    \\ \hline
Active Rand (Avg)     & 0.449  & 0.547 & 0.59 & 0.6   \\ 
\# Comm.              & 10.6   & 21    & 42   & 168.5
\end{tabular}
\end{table}

\begin{table}[]
\centering
\caption{Random Geometric Graph\@r=2.5 with 30\% communication fault rate}
\label{tab:KMVFL_RGG_2_5}
\begin{tabular}{c|c|c|c|c}
Number of aggregators & 1(VFL) & 2     & 4    & 16    \\ \hline
Active Rand (Avg)     & 0.42  & 0.52 & 0.58 & 0.59   \\ 
\# Comm.              & 7.4   & 13.9    & 29   & 114.9
\end{tabular}
\end{table}

\begin{table}[]
\centering
\caption{Grid Graph with 30\% communication fault rate}
\label{tab:KMVFL_GG}
\begin{tabular}{c|c|c|c|c}
Number of aggregators & 1(VFL) & 2     & 4    & 16    \\ \hline
Active Rand (Avg)     & 0.386  & 0.449 & 0.492 & 0.491   \\ 
\# Comm.              & 2   & 3.99    & 7.98   & 33.5
\end{tabular}
\end{table}

\paragraph{Effect of number of gossip rounds:} For the gossip (G) variants of MVFL, we are interested in studying the effect the number of gossip rounds has on average performance. In Figure \ref{fig-app:MVFL_Gossip} the effect of three different gossip rounds on the average performance for three different datasets is presented. From the plots we observe that irrespective of the method, \emph{Complete-communication} train fault benefits the most with incorporating gossip rounds. Despite \emph{Grid-communication} being a communication type of fault, gossiping does not improve the average performance. We believe this happens as a grid graph is quite sparse and training faults makes it more sparse. As a result, increasing gossip rounds does not lead to efficient passing of feature information from one client to another due to the sparseness, this is not the case in a complete graph. Furthermore, for reasons mentioned in Section \ref{sec:experiment} of the main paper and \cref{Subsec:Testing_Fault_Extension} of the Appendix, from Figure \ref{fig-app:MVFL_Gossip} we observe that adding any gossip rounds with device faults does not help in improving the performance. However, given the benefit 4 rounds of Gossip provides for Complete-Communication graph, we decided to use 4 Gossip rounds with MVFL.

We also investigated the effect of different number of Gossip rounds when using K-MVFL, in particular when the value of K is 4. \cref{fig-app:KMVFL_Gossip} (a) shows the the performance for different scenarios where dropout is not used during training and \cref{fig-app:KMVFL_Gossip} (b) shows for the condition such that dropout rate of 0.3$\%$ is used during training. It is observed that high number of Gossip rounds 4 is not having any significant benefit to performance and Gossip rounds of 0 and 2 are comparable in performance. Thus, including Gossip when the K=4 is not as beneficial as it was observed for the MVFL case. We conjecture that this likely happening because we are using gossip during training as well as during inference. We believe that using gossip only during inference and not during training will help improve the performance with more gossip rounds. This, for this paper, when using 4-MVFL, we use 2 rounds of Gossip.

\subsection{Exploration of Test Fault Rates and Patterns}\label{Subsec:Testing_Fault_Extension}

In \cref{fig:testavg_trainfault_faultrate_starmnist_16_Extended} we present a more comprehensive representation of performance of different settings of \methodSf. 
\cref{fig:testavg_trainfault_faultrate_starmnist_16} is a subset of \cref{fig:testavg_trainfault_faultrate_starmnist_16_Extended}.

A note regarding gossiping is that mainly helps MVFL in the case of communication fault.
We believe this is because in the device fault case, irrespective of number of gossip rounds, the representations from faulted device cannot be obtained. 
On the other hand, multiple gossip rounds in communication fault scenario has the effect of balancing out the lost representation at a client via neighboring connections.
Switching to the grid baseline network, a major observation here is the degradation in the performance of both MVFL and MVFL-G4.
We conjecture that in this case, clients can only directly communicate with neighboring clients, thus it's harder to get information from clients far away and extra communication leads to less benefit while the smaller network size and receiving more faulted representation become a bottleneck.
Similarly, we notice that MVFL outperforms MVFL-G4 when fault rate is very high, as there is a much higher chance that the network is disconnected in comparison to complete baseline network.
In short, we conclude that when trained with no faults, MVFL is overall the best model while gossiping helps except with high fault rates under the grid baseline network.

\begin{figure*}[!ht]
    \centering
        \includegraphics[width=0.99\textwidth]{figures/main_wBaseLine_testavg_trainfault0.3_faultrate_starmnist_16_v4.pdf}
        \vspace{-1em}
    \caption{Test accuracy with and without communication (CD-) and party-wise (PD-) Dropout method  for StarCraftMNIST with 16 devices. 
    Here we include models trained under an dropout rate of  30\% (marked by 'PD-' or 'CD-'). 
    All results are averaged over 16 runs and error bar represents standard deviation. Across different configurations, MVFL-G4 trained with feature omissions has the highest average performance, while vanilla VFL performance is not robust as fault rate increases.
    As our experiments are repeated multiple times, what we report is the expectation (Avg) over the random active client selection.
    }
    \label{fig:testavg_trainfault_faultrate_starmnist_16_Extended}
    \vspace{0em}
\end{figure*}

\subsection{Extension of Communication and Performance Analysis}\label{subsec:PerformanceCommTradeOff}
In \cref{tab:table_network_withStdev} we present the extension (performance metric is presented with Standard Deviation information) of \cref{tab:table_network}, which is shown in the main paper.

\begin{table*}[t]
\vspace{0cm}
    \centering
        \caption{
        Active Rand (Avg) performance +/- 1 Std Dev at test time with 30 $\%$ communication fault rate.
Compared to VFL, MVFL performs better but it comes at higher communication cost. Thus we propose 4-MVFL as a low communication cost alternative to MVFL. We want to highlight that 4-MVFL with poorly connected graph is still better than VFL with well connected graph, such as 4-MVFL with \textit{RGG} (r=1) versus VFL with \textit{Complete}.}
    \label{tab:table_network_withStdev}
\resizebox{1\columnwidth}{!}{
    \begin{tabular}{l|lr|lr|lr|lr|lr|lr}
\hline
        & \multicolumn{2}{c|}{Complete} & \multicolumn{2}{c|}{\begin{tabular}[c]{@{}c@{}}RGG\\ r=2.5\end{tabular}} & \multicolumn{2}{c|}{\begin{tabular}[c]{@{}c@{}}RGG\\ r=2\end{tabular}} & \multicolumn{2}{c|}{\begin{tabular}[c]{@{}c@{}}RGG\\ r=1.5\end{tabular}} & \multicolumn{2}{c|}{\begin{tabular}[c]{@{}c@{}}RGG\\ r=1\end{tabular}} & \multicolumn{2}{c}{Ring} \\ \hline
        & Avg          & \# Comm.           & Avg                                & \# Comm.                                & Avg                               & \# Comm.                               & Avg                                & \# Comm.                                & Avg                               & \# Comm.                               & Avg         &\#  Comm.        \\ \hline
VFL     &              0.430$\pm$ 0.021& 10.6           &                                    0.406$\pm$ 0.032& 7.4&                                   0.407$\pm$ 0.048& 5.2&                                    0.375$\pm$ 0.043& 3.5&                                   0.386$\pm$ 0.038& 2.0                                &             0.385$\pm$ 0.036&             1.4\\ \cdashline{1-13}
MVFL    &              0.597$\pm$0.033	& 168.5&                                    0.598$\pm$0.018& 114.9&                                   0.545$\pm$0.030& 80.8&                                    0.529$\pm$0.024& 58.7                                &                                   0.491$\pm$0.025& 33.5&             0.507$\pm$0.013&             22.7\\
4-MVFL    &              0.59$\pm$0.033	& 42&                                    0.575$\pm$0.024& 29&                                   0.551$\pm$0.037& 20.4&                                    0.514$\pm$0.027& 14.8                                &                                   0.489$\pm$0.026& 7.98&             0.485$\pm$0.026&             5.6\\
MVFL-G4    &              0.717$\pm$0.017	& 836.2&                                    0.687$\pm$0.032& 572.1&                                   0.653$\pm$0.026& 407.2&                                    0.578$\pm$0.038& 293.9                                &                                   0.441$\pm$0.029& 168.2&             0.338$\pm$0.023&             113.4\\
4-MVFL-G2  &              0.62$\pm$0.027& 126&                                    	0.509$\pm$0.044& 87&                                   0.485$\pm$0.052& 61.2&                                    0.426$\pm$0.041	& 44.8&                                   0.428$\pm$0.034& 23.94&             0.446$\pm$0.046&            
16.8\end{tabular}}
    \vspace{-0cm}
\end{table*}

\subsection{Best, Worst and Select Any Metrics}\label{Subsec:OtherMetric}

In \cref{fig-app:16D_MNIST_OtherMetric_KMVFL,fig-app:16D_SCMNIST_OtherMetric_KMVFL,fig-app:16D_CIFAR10_OtherMetric_KMVFL} we present not only the Rand Active but also Rand Universal, Active Best and Active Worst metrics when evaluation are carried out for 16 Devices/Clients under only test faults. In the main paper, Table \ref{tab:best-models-complete-comm} is a subset of the comprehensive data 
presented here.

In addition, we also share \cref{tab:best-models-complete-comm-Appendix} here, which aggregates information for two fault rates. While the table in the main paper shows data for only 1 fault rate.

\begin{table*}[t]
\vspace{0em}
    \centering
    \caption{Best models for 2 \textit{complete-communication} test fault rates within 1 standard deviation are bolded. More detailed results with standard deviation are shown in the Appendix.}
\vspace{0em}
    \label{tab:best-models-complete-comm-Appendix}
    \resizebox{1\columnwidth}{!}{\begin{tabular}{llllll|llll|lllll}
\toprule
                                  &          & \multicolumn{4}{c}{MNIST}    & \multicolumn{4}{c}{SCMNIST}  & \multicolumn{4}{c}{CIFAR10}   &  \\ 
                                  \cmidrule(lr){3-6}
                                  \cmidrule(lr){7-10}
                                  \cmidrule(lr){11-14}
                                 &          & 
                                 \multicolumn{3}{c}{Active}    & Any &
                                 \multicolumn{3}{c}{Active}    & Any &
                                 \multicolumn{3}{c}{Active}    & Any 
                                 \\
                                 \cmidrule(lr){3-5}
                                 \cmidrule(lr){6-6}
                                 \cmidrule(lr){7-9}
                                 \cmidrule(lr){10-10}
                                 \cmidrule(lr){11-13}
                                 \cmidrule(lr){14-14}
                                 &          & 
                                 Worst   & Rand  & Best  & Rand &
                                 Worst   & Rand  & Best  & Rand &
                                 Worst   & Rand  & Best  & Rand &
                                 \\
\midrule
\multirow{5}{*}{Fault Rate = 0.3} & VFL      & nan & 0.507   & nan   & nan   & nan & 0.430   & nan   & nan   & nan & 0.267   & nan   & nan   &  \\ 
                                  & PD-VFL     & nan & 0.684 & nan & nan & nan & 0.572 & nan & nan & nan & 0.355 & nan & nan &  \\
                                  & 4-MVFL-G2     & 0.632 & 0.693 & 0.751 & 0.526 & 0.572 & 0.624 & 0.675 & 0.482 & 0.238 & 0.293& 0.356& 0.232 &  \\
                                  & MVFL     & 0.106 & 0.705 & \textbf{0.995} & 0.524 & 0.075 & 0.592 & \textbf{0.951} & 0.444 & 0.013 & 0.342& \textbf{0.843}& 0.269 &  \\
                                  & MVFL-G4  & 0.897 & 0.897 & 0.899 & 0.657 & 0.715 & 0.716 & 0.717 & 0.533 & 0.348 & 0.350& 0.352 & 0.275 &  \\
                                  & CD-4-MVFL-G2  & 0.944 & 0.951 & 0.969 & \textbf{0.714} & 0.745 & 0.765 & 0.784 & \textbf{0.589} & 0.438 & 0.472& 0.528 & 0.372 &  \\
                                  & CD-MVFL-G4 & \textbf{0.972} & \textbf{0.973} & 0.972 & 0.709 & \textbf{0.780} & \textbf{0.780} & 0.781 & 0.575 & \textbf{0.514} & \textbf{0.515} & 0.516 & \textbf{0.389} &  \\ \cline{1-14}
\multirow{5}{*}{Fault Rate = 0.5} & VFL      & nan & 0.294   & nan   & nan   & nan & 0.258   & nan   & nan   & nan & 0.181   & nan   & nan   &  \\  
                                  & PD-VFL      & nan & 0.488   & nan   & nan   & nan & 0.424   & nan   & nan   & nan & 0.263   & nan   & nan   &  \\
                                  & 4-MVFL-G2      & 0.345 & 0.421   & 0.532   & 0.296   & 0.372 & 0.447   & 0.558   & 0.321   & 0.136 & 0.192   & 0.276   & 0.164   &  \\
                                  & MVFL     & 0.042 & 0.518 & 0.966 & 0.313 & 0.035 & 0.465 & \textbf{0.925}& 0.280 & 0.007 & 0.264 & \textbf{0.762}& 0.182 &  \\
                                  & MVFL-G4  & 0.724 & 0.732 & 0.742 & 0.415 & 0.643 & 0.643 & 0.846 & 0.372 & 0.252 & 0.267 & 0.275 & 0.182 &  \\
                                  & CD-4-MVFL-G2  & 0.812 & 0.822 & 0.919 & \textbf{0.552} & 0.643 & 0.678 & 0.751 & \textbf{0.464} & 0.341 & 0.392& 0.484 & 0.287 &  \\
                                  & CD-MVFL-G4 & \textbf{0.964} & \textbf{0.965} & \textbf{0.967} & 0.536 & \textbf{0.773} & \textbf{0.777} & 0.781 & 0.437 & \textbf{0.482} & \textbf{0.492} & 0.498 & \textbf{0.295} & \\
                                  \bottomrule
\end{tabular}
}
\vspace{0em}
\end{table*}

\subsection{Evaluation for different number of devices/clients}\label{subsec:DifDevices}

In Figure \ref{fig-app:ND_starmnist} we present average performance as a function of test time faults for three different number of devices. For all the different cases, it is observed that MVFL or its gossip variant performs the best. On observing the \emph{Complete-Communication} plots for Figure \ref{fig-app:ND_starmnist}, it can be seen that MVFL with gossiping has a more significant impact when the number of devices are 49 or 16 compared to when the number of devices are 4. 

\vfill

\end{document}